\documentclass{article}

\usepackage{microtype}
\usepackage{graphicx}
\usepackage{subcaption}
\usepackage{booktabs} % for professional tables
\usepackage{makecell}
\usepackage{hyperref}

\usepackage[accepted]{icml2026}

\usepackage{amsmath, amsfonts, amssymb, amsthm, mathtools}
\usepackage{times}
\usepackage{latexsym}

\usepackage[table,xcdraw,dvipsnames]{xcolor} 
\usepackage[most]{tcolorbox}
\usepackage{pifont}
\usepackage{booktabs}
\usepackage{multirow}
\usepackage{tabularx}
\usepackage{colortbl} 

\usepackage[normalem]{ulem}
\usepackage{algorithm}
\usepackage{algpseudocode}
\usepackage{enumitem}

\usepackage{wrapfig}
\usepackage{subcaption}
\usepackage{graphicx}

\usepackage{hyperref}

\usepackage[capitalize,noabbrev]{cleveref}

\theoremstyle{plain}
\newtheorem{theorem}{Theorem}[section]
\newtheorem{proposition}[theorem]{Proposition}
\newtheorem{lemma}[theorem]{Lemma}

\theoremstyle{definition}
\newtheorem{definition}[theorem]{Definition}
\newtheorem{assumption}[theorem]{Assumption}
\theoremstyle{remark}

\usepackage[textsize=tiny]{todonotes}

\icmltitlerunning{LoopRPT: Reinforcement Pre-Training for Looped Language Models}

\begin{document}

\twocolumn[
  \icmltitle{LoopRPT: Reinforcement Pre-Training for Looped Language Models}

  % It is OKAY to include author information, even for blind submissions: the
  % style file will automatically remove it for you unless you've provided
  % the [accepted] option to the icml2026 package.

  % List of affiliations: The first argument should be a (short) identifier you
  % will use later to specify author affiliations Academic affiliations
  % should list Department, University, City, Region, Country Industry
  % affiliations should list Company, City, Region, Country

  % You can specify symbols, otherwise they are numbered in order. Ideally, you
  % should not use this facility. Affiliations will be numbered in order of
  % appearance and this is the preferred way.
  \icmlsetsymbol{equal}{*}
  \icmlsetsymbol{lead}{‡}

  \begin{icmlauthorlist}
    \icmlauthor{Guo Tang}{equal,sch1}
    \icmlauthor{Shixin Jiang}{equal,sch1}
    \icmlauthor{Heng Chang}{equal,lead,sch2}
    \icmlauthor{Nuo Chen}{sch3}
    \icmlauthor{Yuhan Li}{sch3}
    \icmlauthor{Huiming Fan}{sch1}
    \icmlauthor{Jia Li}{sch3}
    \icmlauthor{Ming Liu}{sch1}
    \icmlauthor{Bing Qin}{sch1}
    
  \end{icmlauthorlist}

  %\icmlaffiliation{yyy}{Department of XXX, University of YYY, Location, Country}
  %\icmlaffiliation{comp}{HY Team, Tencent, Beijing, China}
  \icmlaffiliation{sch1}{Harbin Institute of Technology, Harbin, China}
  \icmlaffiliation{sch2}{Tsinghua University, Beijing, China}
  \icmlaffiliation{sch3}{The Hong Kong University of Science and Technology, Guangzhou, China}

  \icmlcorrespondingauthor{Ming Liu}{mliu@ir.hit.edu.cn}
  %\icmlcorrespondingauthor{Firstname2 Lastname2}{first2.last2@www.uk}

  % You may provide any keywords that you find helpful for describing your
  % paper; these are used to populate the "keywords" metadata in the PDF but
  % will not be shown in the document
  \icmlkeywords{Reinforcement Learning, ICML}

  \vskip 0.3in
]

% this must go after the closing bracket ] following \twocolumn[ ...

% This command actually creates the footnote in the first column listing the
% affiliations and the copyright notice. The command takes one argument, which
% is text to display at the start of the footnote. The \icmlEqualContribution
% command is standard text for equal contribution. Remove it (just {}) if you
% do not need this facility.

% Use ONE of the following lines. DO NOT remove the command.
% If you have no special notice, KEEP empty braces:
\printAffiliationsAndNotice{\icmlEqualContribution\icmlProjectLead}

% no special notice (required even if empty)
% Or, if applicable, use the standard equal contribution text:
% \printAffiliationsAndNotice{\icmlEqualContribution}

\begin{abstract}
    Looped language models (LoopLMs) perform iterative latent computation to refine internal representations, offering a promising alternative to explicit chain-of-thought (CoT) reasoning. However, existing reinforcement learning (RL) paradigms primarily target output tokens, creating a structural mismatch with looped architectures whose reasoning unfolds implicitly. In this work, we propose \textbf{LoopRPT}, a reinforcement pre-training framework tailored for LoopLMs. By reframing next-token prediction as a next-token reasoning task, LoopRPT assigns reinforcement signals directly to latent steps using an EMA teacher reference and noisy latent rollouts. This formulation enables RL to directly shape intermediate representations, compressing effective reasoning into fewer iterations. We instantiate LoopRPT on the Ouro architecture across multiple model scales. Results demonstrate that LoopRPT consistently improves per-step representation quality, achieving Pareto dominance in accuracy–computation trade-offs. Notably, significant gains on hard tokens indicate that LoopRPT enhances early-stage reasoning rather than merely encouraging premature exits. Our findings highlight reinforcement pre-training as a principled paradigm for learning efficient latent reasoning in LoopLMs. 
\end{abstract}

\section{Introduction}
\label{sec:introduction}

\begin{figure*}[t]
    \centering
    \begin{subfigure}[b]{0.65\textwidth}  % b图占48%宽度，确保两个图总宽度小于1
        \centering
        \includegraphics[width=\textwidth]{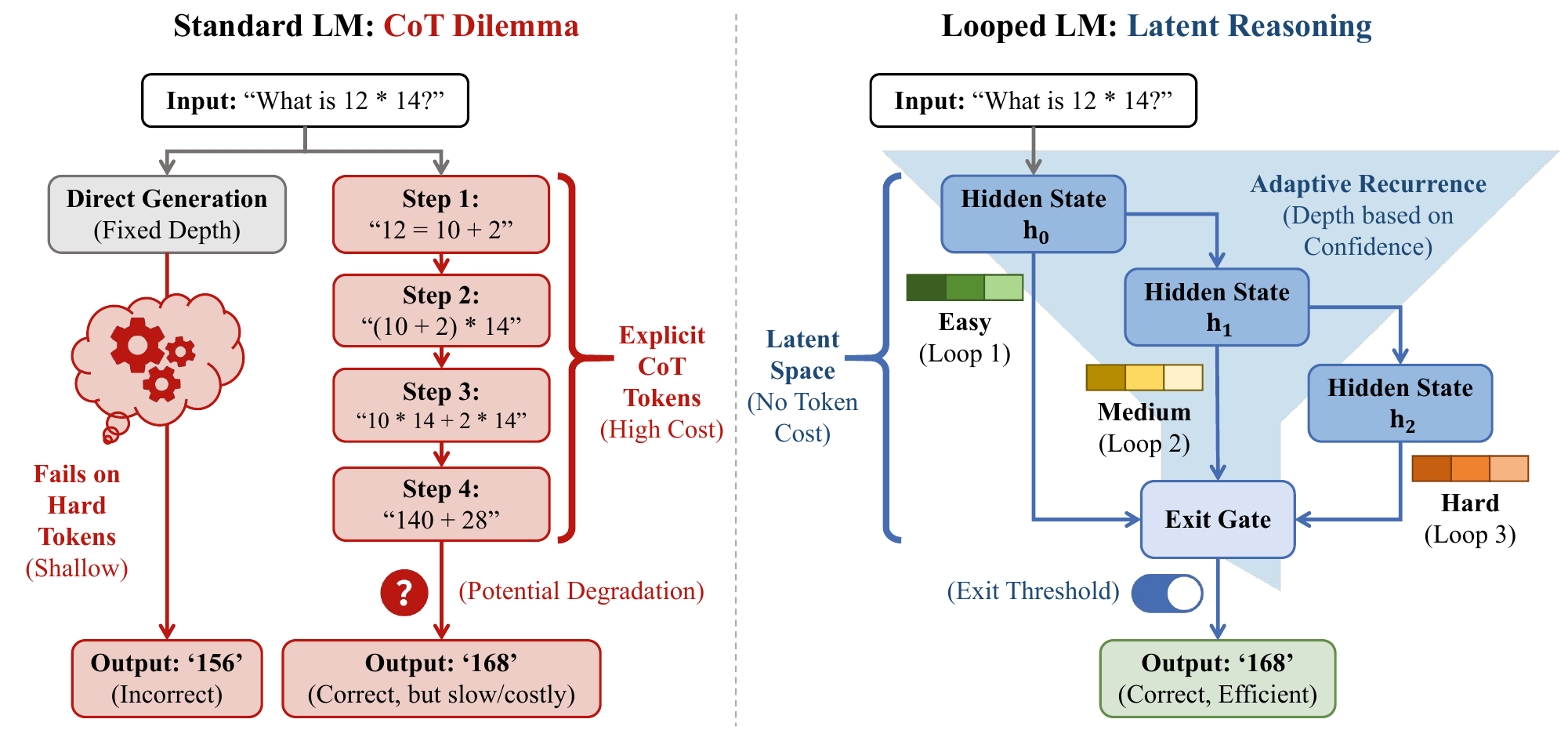}
        \caption{Comparison of explicit CoT reasoning and latent space reasoning.}
        \label{fig:compare_cot_latent}
    \end{subfigure}
    \hfill
    \begin{subfigure}[b]{0.33\textwidth}  % a图占48%宽度
        \centering
        \includegraphics[width=\textwidth]{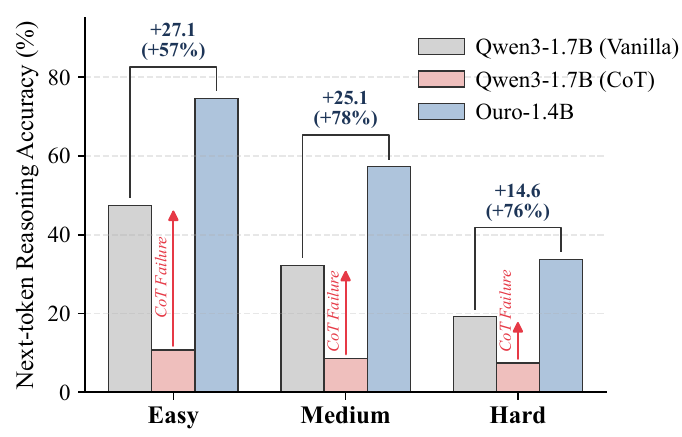}
        \caption{Performance comparison 
 between Ouro-1.4B and Qwen3-1.7B on next-token reasoning tasks across difficulty.}
        \label{fig:compare_perform_ntr}
    \end{subfigure}

    %\vspace{3pt}  % 添加一些垂直间距
    
    \caption{
        \textbf{Motivation for LoopRPT.}
        \textbf{(a)} Standard LLMs rely on explicit chain-of-thought tokens for reasoning, leading to increased token usage, whereas looped language models compress multi-step reasoning into latent space. 
        \textbf{(b)} Looped language models achieve higher accuracy than standard LLMs on next-token reasoning tasks across difficulty levels.
    }
    \label{fig:ab_distribution}
\end{figure*}

Modern Large Language Models (LLMs) are trained to "think" primarily through explicit text generation, such as chain-of-thought (CoT) prompting~\citep{wei2022chain}. While effective, this paradigm defers reasoning to post-training and often under-leverages the rich structural information available in pre-training data~\citep{NIPS2017_d5e2c0ad,wen2025reinforcementlearningverifiablerewards,Guo_2025}. In contrast, Looped Language Models (LoopLMs)—as exemplified by architectures like Ouro—utilize a parameter-shared looped backbone where internal representations are refined recurrently in latent space~\citep{zhu2025scaling}. This allows the model to perform iterative computation, enabling deeper reasoning capabilities without a proportional increase in parameter count. As illustrated in Figure~\ref{fig:compare_cot_latent}, LoopLMs offer a promising alternative by compressing multi-step reasoning into latent transitions rather than consuming explicit tokens.

However, effectively incentivizing these looped structures using reinforcement learning (RL) remains a significant challenge. Traditional Reinforcement Learning with Verifiable Rewards (RLVR) paradigms primarily operate on output tokens, creating a structural mismatch with looped architectures whose reasoning unfolds implicitly~\citep{shao2024deepseekmathpushinglimitsmathematical,wen2025reinforcementlearningverifiablerewards}. As explicitly identified in recent studies on the Ouro architecture~\citep{zhu2025scaling}, the dynamic early-exit mechanism—which adaptively determines computation depth—introduces instability and credit assignment issues for standard RLVR alignment. This leads to a fundamental research question:

\begin{center}
    \textbf{\textit{How to incentivize LoopLMs using RL with performance gains compared to non-recursive transformer models?}}
\end{center}

Current RLVR tasks rely on sparse rewards derived from final output tokens, which are insufficient for the dense, multi-step latent reasoning required by LoopLMs. Recently, \textbf{Reinforcement Pre-Training (RPT)} has emerged as a new scaling paradigm, reframing next-token prediction as a reasoning task~\citep{hatamizadeh2025rlpreinforcementpretrainingobjective,method:rpt}. By treating internal computation as an exploratory action and assigning rewards based on the predictive gain for future tokens, RPT enables the leveraging of vast amounts of text data for general-purpose RL. This transition from sparse output feedback to dense, self-supervised next-token reasoning signals provides a clear path toward effectively training looped architectures.

In this work, we make the first attempt to unify looped architectures with the RPT paradigm. Our initial analysis, shown in Figure~\ref{fig:compare_perform_ntr}, reveals that LoopLMs are naturally "reasoning-friendly" for next-token tasks. Vanilla LLMs, lacking specialized training for CoT-based next-token reasoning tasks, exhibit significant performance degradation. We observe that looped models consistently achieve higher accuracy than standard LLMs across varying difficulty levels. This suggests that the superior knowledge manipulation capabilities of LoopLMs can be systematically unlocked by an RL algorithm designed specifically for latent recurrence.

To fill this gap, we propose \textbf{LoopRPT}, a \textbf{R}einforcement \textbf{P}re-\textbf{T}raining Framework tailored for \textbf{Loop}ed LMs. Built upon the next-token reasoning task, LoopRPT assigns reinforcement signals directly to latent reasoning steps. The framework incorporates three key innovations: (i) an \textbf{entropy-based selector} that identifies "hard tokens" where reasoning is most beneficial~\citep{wang20258020rulehighentropyminority} , (ii) \textbf{step-wise rewards} computed against a dynamic EMA teacher to shape intermediate representations~\citep{tarvainen2017mean,hatamizadeh2025rlpreinforcementpretrainingobjective} , and (iii) \textbf{noisy latent rollouts} to jointly optimize the exit policy and backbone representations. By combining accuracy gains with a difficulty-aware time penalty, LoopRPT encourages early effective exits without sacrificing correctness on challenging tokens.

We conduct extensive experiments on Ouro-1.4B and Ouro-2.6B using the \textsc{Omni-Math}~\citep{gao2024omnimathuniversalolympiadlevel} and general reasoning datasets. Our results demonstrate that LoopRPT consistently improves per-step representation quality and achieves a superior accuracy-computation trade-off. Specifically, LoopRPT significantly reduces average computation steps while maintaining or improving performance on hard tokens. These gains transfer to downstream benchmarks, with notable improvements in GSM8K~\citep{bench:gsm8k}, MBPP~\citep{bench:mbpp}, and HumanEval~\citep{bench:humaneval}.

Our primary contributions are summarized as follows: 
\begin{itemize}
[itemsep=2pt,topsep=2pt,parsep=0pt,leftmargin=*]
    \item We introduce \textbf{LoopRPT}, the first framework to apply the Reinforcement Pre-Training paradigm to Looped LMs, addressing the mismatch between latent reasoning and sparse token-level rewards. 
    \item We design a novel \textbf{step-wise reinforcement objective} utilizing an EMA teacher and noisy latent rollouts to provide dense supervision over internal recurrent transitions. 
    \item Empirical results show that LoopRPT achieves Pareto dominance in accuracy-computation trade-offs across multiple scales, outperforming both vanilla looped models and standard LLMs with explicit CoT. 
\end{itemize}

\section{Preliminaries}
\label{sec:preliminary}
\paragraph{Looped Language Models (LoopLMs)}
Unlike standard transformers, LoopLMs refine internal representations through iterative latent computation. Given context $x_{<t}$, the model updates a latent state $\mathbf{h}^{(k)}$ via a parameter-shared backbone $f_\theta$: \begin{equation} \mathbf{h}^{(k+1)} = f_\theta(\mathbf{h}^{(k)}, x_{<t}), \quad k = 0, \dots, K-1 , \end{equation} The next-token distribution is then derived from the final state: $p_\theta(x_t \mid x_{<t}) = g_\theta(\mathbf{h}^{(K)})$, where $g_\theta$ represents an LM head. This architecture enables multi-step reasoning in latent space without the cost of emitting intermediate tokens.

\paragraph{Standard Next-Token Prediction (NTP)}
remains the dominant objective, maximizing the log-likelihood: $\mathcal{L}_{\mathrm{NTP}}(\theta) = - \sum \log p_\theta(x_t \mid x_{<t})$. However, NTP treats predictions independently and fails to explicitly model the latent reasoning processes required for difficult tokens.

\paragraph{Reinforcement Pre-Training (RPT)}
 reframes prediction as a reasoning task. The model samples an output $o_t \sim \pi_\theta$ and receives a verifiable reward $r(o_t, x_t)$ based on the ground-truth continuation. The objective maximizes the expected reward over the corpus: $\mathcal{J}_{\mathrm{RPT}}(\theta)
= \mathbb{E}_{x_{<t}} \, \mathbb{E}_{o_t \sim \pi_\theta(\cdot \mid x_{<t})}
\big[ r(o_t, x_t) \big] .$ 
By leveraging intrinsic corpus rewards, RPT scales reinforcement learning to large datasets, incentivizing next-token reasoning over simple likelihood maximization.

\section{Method}
\label{sec:method}
\begin{figure*}[ht]  
    \centering
    \includegraphics[width=1\textwidth]{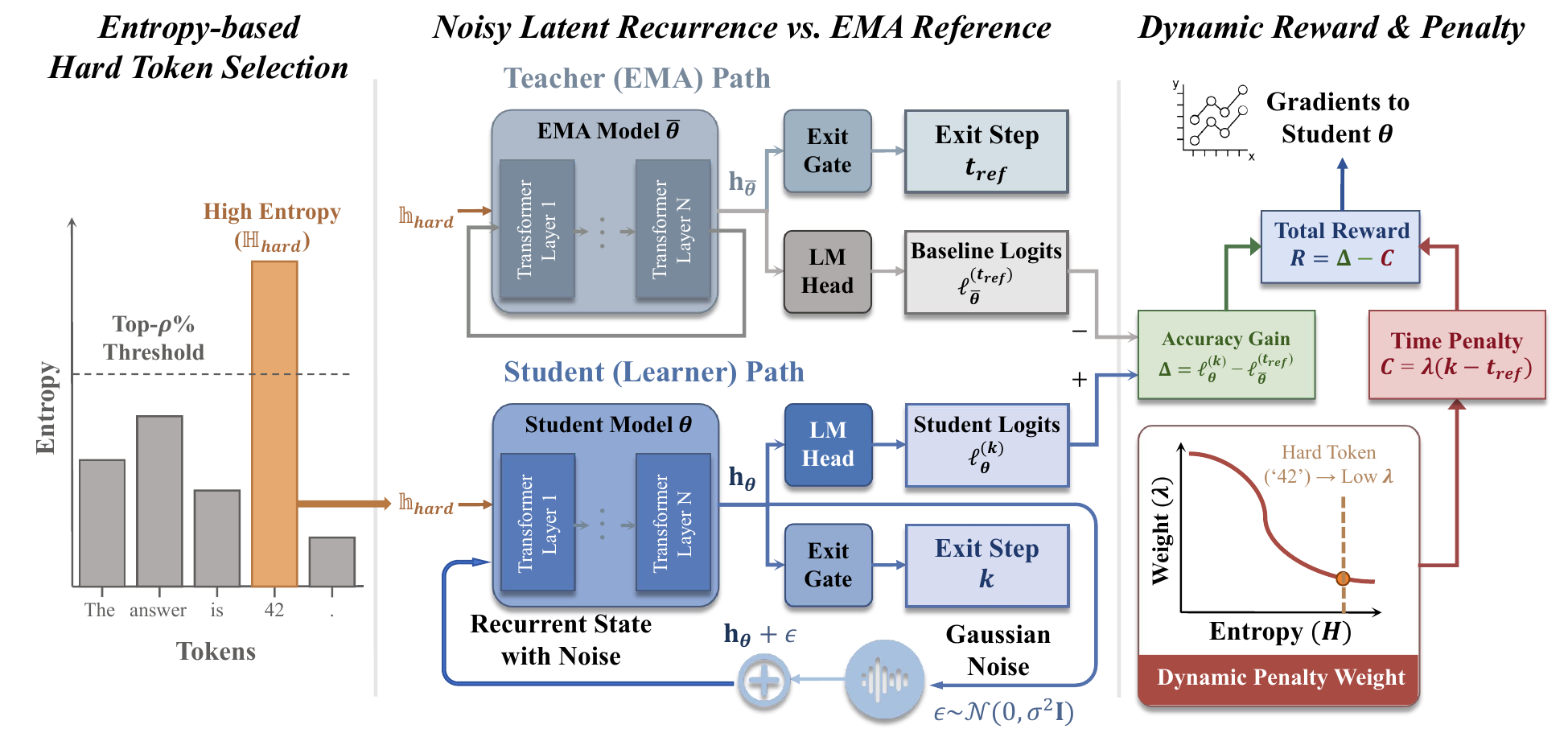}
    \caption{Overview of \textbf{LoopRPT}.
Given a looped architecture with latent recurrence, LoopRPT assigns reinforcement signals directly to latent reasoning steps.
An entropy-based selector identifies hard tokens, for which a student model is trained against an EMA teacher via step-wise rewards.
The total reward combines an accuracy gain relative to the teacher baseline and a dynamic time penalty, enabling reinforcement learning to shape intermediate representations and encourage earlier effective exits without premature termination.}
    \label{fig:method}
\end{figure*}

In the RLVR paradigm, rewards are typically sparse as they are derived solely from feedback on final answer tokens. For looped LMs that perform multi-step reasoning in latent space, this sparsity exacerbates the credit assignment challenge across intermediate iterations. To address this, we introduce dense step-wise reward signals, focusing them on challenging next-token predictions, which enables looped LMs to improve intermediate reasoning capabilities and learn an adaptive exit policy.

Specifically, LoopRPT is a reinforcement pre-training framework tailored to looped LMs, where multi-step computation unfolds in latent space and an exit mechanism determines how many recurrent iterations are used for each token. As illustrated in Fig.~\ref{fig:method}, LoopRPT (i) selects \emph{hard} next-token prediction instances via entropy, (ii) defines a \emph{step-wise} reward on latent iterations relative to an EMA teacher, and (iii) jointly optimizes the exit policy and intermediate representations using noisy latent rollouts and step-weighted next-token learning.~\footnote{Full training algorithms are provided in App.~\ref{app:algorithm}.}

\subsection{Looped LM with an exit-step distribution}
\label{sec:method:looplm}

According to Sec.~\ref{sec:preliminary}, each latent step produces next-token logits and a scalar exit-gate logit $a^{(k)}$, converted to an exit probability $\lambda^{(k)} = \sigma(a^{(k)})$.
Following Ouro's early-exit mechanism~\cite{zhu2025scaling}, 
we define a survival probability $s^{(1)}=1$ and update
\begin{align}
\pi_{\theta}(k) &=
\begin{cases}
\lambda^{(k)} s^{(k)}, & k<K,\\
s^{(K)}, & k=K,
\end{cases}
\label{eq:exit_prob} \\
s^{(k+1)} &= s^{(k)}\bigl(1-\lambda^{(k)}\bigr)\ \ (k<K),
\label{eq:s_recursion}
\end{align}

where the last step absorbs the remaining probability mass, ensuring $\sum_{k=1}^K \pi_{\theta}(k)=1$ exactly.
We compute the CDF $\Pi(k)=\sum_{j\le k}\pi_{\theta}(j)$ and determine the threshold exit step
$t(\tau) = \min\{k:\Pi(k)\ge \tau\}$.
For numerical robustness, if no step satisfies $\Pi(k)\ge\tau$ (e.g., due to finite precision), we set $t(\tau)=K$.

\subsection{Entropy-based hard-token selection}
\label{sec:method:selection}

Applying reinforcement learning uniformly to all tokens is inefficient because most tokens are easy and contribute weak learning signals. LoopRPT therefore trains primarily on \emph{hard} next-token instances identified by teacher uncertainty.\footnote{We provide theoretical analysis in App.~\ref{app:theory_entropy_select}.} Concretely, for each position $t$ in a sequence we compute the teacher's next-token distribution and entropy
\begin{equation}
H_t = -\sum_{v} p_{\bar\theta}(v\mid x_{<t}) \log p_{\bar\theta}(v\mid x_{<t}),
\label{eq:ref_entropy}
\end{equation}
and select a top fraction of positions (top-$\rho\%$) within each example as hard tokens.\footnote{Details are deferred to App.~\ref{app:hardtokens}.}
All losses in Sec.~\ref{sec:method:reward}--\ref{sec:method:opt} are applied only on the selected positions.

\subsection{Step-wise rewards with an EMA teacher}
\label{sec:method:reward}

In order to provide a dense credit assignment per-step anchored by a stable moving reference~\cite{hatamizadeh2025rlpreinforcementpretrainingobjective}, LoopRPT assigns reinforcement signals \emph{directly to latent reasoning steps} and maintains an EMA teacher $\bar\theta$. $\bar\theta$ tracks student parameters $\theta$ and serves as a dynamic reference for both exiting and reward baselines.

\paragraph{Teacher reference step.}
For each token, we compute the teacher's exit distribution $\pi_{\bar\theta}$ and define the teacher reference step
$t_{\text{ref}} = \min\Bigl\{k:\sum_{j\le k}\pi_{\bar\theta}(j)\ge \tau\Bigr\}.$
Intuitively, $t_{\text{ref}}$ represents the teacher's notion of sufficient computation under the current exit threshold.

\paragraph{Accuracy gain.}
Let $\ell_\theta^{(k)} = \log p_\theta(x_t \mid x_{<t}; \mathbf{h}^{(k)})\in\mathbb{R}^{B\times(S-1)}$ denote the student log-probability of the ground-truth next token when read out from step $k$. We take the teacher log-probability at the reference step as a per-token baseline,
\begin{equation}
\label{eq:ref_logprob}
b_{\text{ref}} = \ell_{\bar\theta}^{(t_{\text{ref}})}\in\mathbb{R}^{B\times(S-1)}.
\end{equation}
The step-wise accuracy gain is
\begin{equation}
\label{eq:acc_delta}
\Delta_{\text{acc}}(k) = \ell_\theta^{(k)} - b_{\text{ref}}.
\end{equation}

\paragraph{Difficulty-aware time penalty.}
To encourage earlier \emph{effective} exits without forcing premature termination on difficult tokens, we penalize excessive computation relative to the teacher reference step:
\begin{equation}
\label{eq:time_penalty}
C(k) = \lambda_t\,(k - t_{\text{ref}}).
\end{equation}
Here $\lambda_t$ is a token-dependent penalty weight derived from teacher uncertainty at the reference step. 
Specifically, we compute the entropy $H_t$ at the teacher's reference step distribution,
 normalize $H_t$ by $\log|\mathcal{V}|$ and clamp it to obtain a difficulty score
$d_t = \mathrm{Clamp}\bigl(H_{t}/\log|\mathcal{V}|,\,0,\,1\bigr).$
We introduce the base coefficient $\lambda_{\text{base}}$ and the scaling coefficient $\lambda_{\text{scale}}$, 
thus $\lambda_t$ is defined as
\begin{equation}
\label{eq:lambda_t}
\lambda_t = \lambda_{\text{base}}\Bigl(1 + \lambda_{\text{scale}}(1-d_t)\Bigr),
\end{equation}
so that easier tokens (lower entropy) incur larger time penalties, while harder tokens are penalized less.

\paragraph{Total step-wise reward.}
Combining the above, LoopRPT defines a reward for each latent step:
\begin{equation}
R(k) = \Delta_{\text{acc}}(k) - C(k).
\label{eq:step_reward}
\end{equation}
This reward provides dense supervision over latent iterations, shaping intermediate representations and aligning them with early exiting (Fig.~\ref{fig:method}).

Finally, we compute a per-token \emph{step advantage} by normalizing $R(k)$ across steps $k=1,\dots,K$:
\begin{equation}
\label{eq:advantage_nonoise}
\widehat{A}(k) = \frac{R(k)-\mu_R}{\sigma_R+\epsilon},
\quad
\mu_R=\frac{1}{K}\sum_{j=1}^K R(j),
\end{equation}
which will be used to emphasize beneficial latent steps in the representation learning objective.

\subsection{Joint optimization via noisy latent rollouts}
\label{sec:method:opt}

LoopRPT jointly trains (i) the \emph{exit policy} induced by the gate distribution $\pi_\theta$ and (ii) the \emph{backbone} representations $\{\mathbf{h}^{(k)}\}$ that support accurate prediction at early steps.

\paragraph{Noisy latent rollouts for the exit policy.}
As LoopLM reasoning is implicit in latent space, we obtain on-policy variability by injecting Gaussian noise into the recurrent hidden states.\footnote{We provide theoretical analysis in App.~\ref{app:theory_noise}.} For each selected token, we draw $G$ rollouts and perturb the latent states during recurrence:
\begin{equation}
\label{eq:Gaussian_noise}
\mathbf{h}^{(k)} \leftarrow \mathbf{h}^{(k)} + \boldsymbol{\epsilon}^{(k)}, \quad
\boldsymbol{\epsilon}^{(k)} \sim \mathcal{N}(0,\sigma^2 \mathbf{I}),
\end{equation}
yielding a rollout-specific exit distribution $\pi_\theta^{(g)}(\cdot)$.
We then sample an exit step $t^{(g)} \sim \pi_\theta^{(g)}$ and define the rollout reward by indexing the step-wise reward table in Eq.~\eqref{eq:step_reward},
$r^{(g)} = R\bigl(t^{(g)}\bigr)$.

To stabilize policy learning, we use group-wise normalization across the $G$ rollouts for the same input token:
\begin{equation}
\label{eq:group_advantage}
A^{(g)} = \frac{r^{(g)} - \mathrm{mean}_g[r^{(g)}]}{\mathrm{std}_g[r^{(g)}]+\epsilon}.
\end{equation}
The policy-gradient loss is
\begin{equation}
\label{eq:pg_loss}
\mathcal{L}_{\text{PG}}
=
-\mathbb{E}_{g}\bigl[A^{(g)} \log \pi_\theta^{(g)}(t^{(g)})\bigr],
\end{equation}
applied only on the selected hard-token positions.

\paragraph{Step-weighted representation learning.}
In addition to improving the exit policy, LoopRPT explicitly strengthens intermediate representations so that early latent steps can predict the correct token.
Using a deterministic (noise-free) forward pass, we compute per-step log-probabilities $\ell_\theta^{(k)}$ and the (noise-free) exit distribution $\pi_\theta(k)$. We then optimize a step-weighted next-token objective:
\begin{align}
\mathcal{L}_{\text{rep}}
&=
-\sum_{k=1}^K w_k \, \ell_\theta^{(k)},
\label{eq:backbone_loss}\\
w_k &= \pi_\theta(k)\Bigl(1 + \mathrm{ReLU}(\widehat{A}(k))\Bigr).
\label{eq:bb_loss_weight}
\end{align}
The $\pi_\theta(k)$ term emphasizes steps the model is likely to exit from, while the advantage shaping term focuses learning on steps that yield higher rewards under Eq.~\eqref{eq:step_reward}.

\paragraph{Regularization and total objective.}
We add an entropy bonus on the exit distribution to prevent early collapse,
$\mathcal{L}_{\text{ent}} = -\mathbb{E}\bigl[\sum_{k}\pi_\theta(k)\log(\pi_\theta(k))\bigr]$,
and a KL-style trust-region penalty to the EMA teacher based on a token-level surrogate computed from teacher and student step-wise log-probabilities.\footnote{Details are provided in App.~\ref{app:KL}.}
The final training loss is
\begin{equation}
\label{eq:total_loss}
\mathcal{L}
=
\alpha\,\mathcal{L}_{\text{PG}}
+
\beta\,\mathcal{L}_{\text{rep}}
+
\gamma\,\mathcal{L}_{\text{ent}}
+
\delta\,\mathcal{L}_{\text{KL}}.
\end{equation}
After each update to $\theta$, we update the EMA teacher parameters $\bar\theta$.~\footnote{We provide theoretical analysis in App.~\ref{app:theory_ema}.}

\section{Experimental Setup}
\label{sec:setup}

\subsection{Training Configuration}
%\paragraph{Training setup.}
We use \textsc{Omni-Math}~\citep{gao2024omnimathuniversalolympiadlevel} as the training data. \textsc{Omni-Math} contains 4{,}428 competition-level mathematical problems paired with solutions, among which 200 examples are held out as the validation set. We first conduct experiments on Ouro-1.4B and then scale up to Ouro-2.6B. Both the teacher model and the student model are initialized from the same base model. The teacher parameters are maintained as an exponential moving average of the current student parameters (momentum $0.995$), providing a stable reference for generating outputs, while the student model is updated via backpropagation. For token-level updates, we treat the teacher model as the entropy proxy and compute token-level entropy based on its output distribution. We then construct an entropy mask to suppress gradient updates on low-entropy tokens~\citep{wang20258020rulehighentropyminority}. \footnote{Full hyperparameters and training details are deferred to Appendix~\ref{app:setup_details} (Table~\ref{tab:hyperparameters}).}

\subsection{Evaluation Setting}
\paragraph{Language modeling.}
Following RPT~\citep{method:rpt}, we evaluate the performance on the held-out validation set of 200 examples from \textsc{Omni-Math}. We bucket tokens by their entropy and report performance on easy/medium/hard splits, defined by thresholds 0.5, 1.0 and 1.5, respectively. For the Ouro models, we adopt the corresponding base model as an entropy proxy and evaluate performance under two inference settings: maximum-loop execution and adaptive early exit. We also evaluate the non-loop Qwen3-1.7B under two settings: the vanilla mode that performs NTP directly, and the CoT mode that first generates a chain-of-thought and then performs NTP.

\paragraph{End tasks.}
We evaluate zero-shot performance on MMLU~\citep{bench:mmlu}, MMLU-Pro~\citep{bench:mmlu_pro}, BBH~\citep{bench:bbh}, ARC-C~\citep{bench:arc_challenge}, HellaSwag~\citep{bench:hellaswag}, Winogrande~\citep{bench:winogrande}, GSM8K~\citep{bench:gsm8k}, MBPP~\citep{bench:mbpp}, MBPP+~\citep{bench:evalplus}, HumanEval~\cite{bench:humaneval}, and HumanEval+~\citep{bench:evalplus}. Following Ouro~\citep{zhu2025scaling}, all evaluations were conducted using lm-eval-harness~\citep{bench:eval-harness} and evalplus~\citep{bench:evalplus}. \footnote{Detailed evaluation settings and metrics are provided in Appendix~\ref{app:setup_details} (Table~\ref{tab:eval_setting}).}

\begin{table*}[ht]
\centering
\caption{Performance comparison on next-token reasoning task across three difficulty levels. \textit{Peak} indicates reasoning up to the maximum latent loops ($K=4$), while \textit{Adap.} indicates adaptive early exiting. Subscripts indicate improvement (${\color{OliveGreen}+}$) or degradation (${\color{Maroon}-}$) compared to baseline.}
\label{tab:ntr}
\footnotesize % 保持字体一致性，不使用 resizebox
\setlength{\tabcolsep}{2.8pt} % 优化列间距
\begin{tabular}{@{}ll ll ll ll@{}}
\toprule
\multirow{2}{*}{\textbf{Model}} & \multirow{2}{*}{\textbf{Method}} & \multicolumn{2}{c}{\textbf{Easy}} & \multicolumn{2}{c}{\textbf{Medium}} & \multicolumn{2}{c}{\textbf{Hard}} \\ 
\cmidrule(lr){3-4} \cmidrule(lr){5-6} \cmidrule(lr){7-8} 
 & & Acc ($\uparrow$) & Avg step ($\downarrow$) & Acc ($\uparrow$) & Avg step ($\downarrow$) & Acc ($\uparrow$) & Avg step ($\downarrow$) \\ \midrule

% --- Qwen Group ---
\multirow{2}{*}{Qwen3-1.7B} & Vanilla & 47.49 & -- & 32.18 & -- & 19.19 & -- \\
 & +CoT & 10.70{\color{Maroon!80!black}$_{\scriptscriptstyle-36.79}$} & -- & 8.60{\color{Maroon!80!black}$_{\scriptscriptstyle-23.58}$} & -- & 7.44{\color{Maroon!80!black}$_{\scriptscriptstyle-11.75}$} & -- \\ 
\addlinespace[0.2em] \midrule

% --- Ouro 1.4B Group ---
\multirow{4}{*}{Ouro-1.4B} & Peak & 74.62 & 4.00 & 57.28 & 4.00 & 33.79 & 4.00 \\
 & \cellcolor[HTML]{F2F7FF}\textbf{+LoopRPT} & \cellcolor[HTML]{F2F7FF}\textbf{75.38}{\color{OliveGreen!80!black}$_{\scriptscriptstyle+0.76}$} & \cellcolor[HTML]{F2F7FF}4.00 & \cellcolor[HTML]{F2F7FF}\textbf{58.29}{\color{OliveGreen!80!black}$_{\scriptscriptstyle+1.01}$} & \cellcolor[HTML]{F2F7FF}4.00 & \cellcolor[HTML]{F2F7FF}\textbf{34.74}{\color{OliveGreen!80!black}$_{\scriptscriptstyle+0.95}$} & \cellcolor[HTML]{F2F7FF}4.00 \\ 
%\cmidrule(lr){2-8}
 & Adap. & 74.40 & 3.34 & 57.20 & 3.53 & 33.91 & 3.75 \\
 & \cellcolor[HTML]{F2F7FF}\textbf{+LoopRPT} & \cellcolor[HTML]{F2F7FF}\textbf{75.00}{\color{OliveGreen!80!black}$_{\scriptscriptstyle+0.60}$} & \cellcolor[HTML]{F2F7FF}\textbf{2.50}{\color{NavyBlue!80!black}$_{\scriptscriptstyle-0.84}$} & \cellcolor[HTML]{F2F7FF}\textbf{57.72}{\color{OliveGreen!80!black}$_{\scriptscriptstyle+0.52}$} & \cellcolor[HTML]{F2F7FF}\textbf{2.81}{\color{NavyBlue!80!black}$_{\scriptscriptstyle-0.72}$} & \cellcolor[HTML]{F2F7FF}\textbf{34.82}{\color{OliveGreen!80!black}$_{\scriptscriptstyle+0.91}$} & \cellcolor[HTML]{F2F7FF}\textbf{3.07}{\color{NavyBlue!80!black}$_{\scriptscriptstyle-0.68}$} \\ 
\addlinespace[0.2em] \midrule

% --- Ouro 2.6B Group ---
\multirow{4}{*}{Ouro-2.6B} & Peak & 74.33 & 4.00 & 57.19 & 4.00 & 34.52 & 4.00 \\
 & \cellcolor[HTML]{F2F7FF}\textbf{+LoopRPT} & \cellcolor[HTML]{F2F7FF}\textbf{76.89}{\color{OliveGreen!80!black}$_{\scriptscriptstyle+2.56}$} & \cellcolor[HTML]{F2F7FF}4.00 & \cellcolor[HTML]{F2F7FF}\textbf{61.15}{\color{OliveGreen!80!black}$_{\scriptscriptstyle+3.96}$} & \cellcolor[HTML]{F2F7FF}4.00 & \cellcolor[HTML]{F2F7FF}\textbf{38.10}{\color{OliveGreen!80!black}$_{\scriptscriptstyle+3.58}$} & \cellcolor[HTML]{F2F7FF}4.00 \\ 
%\cmidrule(lr){2-8}
 & Adap. & 74.51 & 3.24 & 57.35 & 3.35 & 34.35 & 3.51 \\
 & \cellcolor[HTML]{F2F7FF}\textbf{+LoopRPT} & \cellcolor[HTML]{F2F7FF}\textbf{76.07}{\color{OliveGreen!80!black}$_{\scriptscriptstyle+1.56}$} & \cellcolor[HTML]{F2F7FF}\textbf{2.05}{\color{NavyBlue!80!black}$_{\scriptscriptstyle-1.19}$} & \cellcolor[HTML]{F2F7FF}\textbf{60.21}{\color{OliveGreen!80!black}$_{\scriptscriptstyle+2.86}$} & \cellcolor[HTML]{F2F7FF}\textbf{2.18}{\color{NavyBlue!80!black}$_{\scriptscriptstyle-1.17}$} & \cellcolor[HTML]{F2F7FF}\textbf{37.24}{\color{OliveGreen!80!black}$_{\scriptscriptstyle+2.89}$} & \cellcolor[HTML]{F2F7FF}\textbf{2.28}{\color{NavyBlue!80!black}$_{\scriptscriptstyle-1.23}$} \\ \bottomrule
\end{tabular}
\end{table*}
\begin{table*}[t]
\centering
\caption{Comparison of 1.4B LoopLM model with 1-4B parameter baselines. Results for vanilla LLMs are taken from \cite{zhu2025scaling}, with Ouro results obtained via our reimplementation. The best score is \textbf{bolded}, and the second-best is \underline{underlined}. Subscripts indicate LoopRPT's improvement over Ouro baseline.}
\label{tab:main_14b}
\footnotesize % 使用标准小号字体，确保版面紧凑且不失真
\setlength{\tabcolsep}{2.5pt} % 针对多列数据进行的极限间距优化
\begin{tabular}{l ccccccccc >{\columncolor[HTML]{ECF4FF}}c}
\toprule
\textbf{Benchmark} & 
\begin{tabular}[c]{@{}c@{}}Gemma3\\1B\end{tabular} & 
\begin{tabular}[c]{@{}c@{}}Llama3.2\\1.2B\end{tabular} & 
\begin{tabular}[c]{@{}c@{}}Qwen2.5\\1.5B\end{tabular} & 
\begin{tabular}[c]{@{}c@{}}Qwen3\\1.7B\end{tabular} & 
\begin{tabular}[c]{@{}c@{}}Qwen2.5\\3B\end{tabular} & 
\begin{tabular}[c]{@{}c@{}}Llama3.2\\3B\end{tabular} & 
\begin{tabular}[c]{@{}c@{}}Qwen3\\4B\end{tabular} & 
\begin{tabular}[c]{@{}c@{}}Gemma3\\4B\end{tabular} & 
\begin{tabular}[c]{@{}c@{}}Ouro\\1.4B\end{tabular} & 
\begin{tabular}[c]{@{}c@{}}\textbf{LoopRPT}\\\textbf{1.4B}\end{tabular} \\ \midrule
\rowcolor[HTML]{F5F5F5} \multicolumn{11}{c}{\textit{\textbf{General Tasks}}} \\
MMLU       & 39.85 & 45.46 & 60.99 & 62.46 & 65.62 & 59.69 & \textbf{73.19} & 58.37 & 67.35 & \underline{67.62}{\color{OliveGreen!90!black}$_{\scriptscriptstyle+0.27}$} \\
MMLU-Pro   & 11.31 & 11.80 & 29.11 & 37.27 & 37.87 & 33.34 & \textbf{51.40} & 34.61 & 48.64 & \underline{49.21}{\color{OliveGreen!90!black}$_{\scriptscriptstyle+0.57}$} \\
BBH        & 30.26 & 30.72 & 43.66 & 53.51 & 55.37 & 39.45 & \underline{70.95} & 66.32 & 70.97 & \textbf{71.37}{\color{OliveGreen!90!black}$_{\scriptscriptstyle+0.40}$} \\
ARC-C      & 39.25 & 41.98 & 54.44 & 55.72 & 55.46 & 52.47 & \textbf{63.65} & 60.92 & 60.95 & \underline{61.10}{\color{OliveGreen!90!black}$_{\scriptscriptstyle+0.15}$} \\
HellaSwag  & 56.12 & 59.35 & 67.73 & 67.09 & 74.54 & 73.09 & \textbf{75.66} & \underline{75.58} & 74.30 & 74.93{\color{OliveGreen!90!black}$_{\scriptscriptstyle+0.63}$} \\
Winogrande & 58.72 & 62.75 & 66.77 & 66.30 & 70.17 & 69.14 & 71.19 & 71.07 & \underline{71.67} & \textbf{71.98}{\color{OliveGreen!90!black}$_{\scriptscriptstyle+0.31}$} \\ \midrule
\rowcolor[HTML]{F5F5F5} \multicolumn{11}{c}{\textit{\textbf{Math \& Coding Tasks}}} \\
GSM8K      & 2.05  & 7.05  & 60.73 & 70.28 & 74.60 & 67.20 & 72.86 & 68.69 & \underline{78.17} & \textbf{78.92}{\color{OliveGreen!90!black}$_{\scriptscriptstyle+0.75}$} \\
HumanEval  & 6.70  & 19.50 & 52.40 & 66.50 & 68.90 & 29.90 & \textbf{77.40} & 34.80 & 69.47 & \underline{69.51}{\color{OliveGreen!90!black}$_{\scriptscriptstyle+0.04}$} \\
HumanEval+ & 5.50  & 17.40 & 46.30 & 59.80 & 62.20 & 26.20 & \textbf{70.70} & 29.30 & 67.01 & \underline{67.07}{\color{OliveGreen!90!black}$_{\scriptscriptstyle+0.06}$} \\
MBPP       & 12.40 & 35.70 & 60.30 & 68.00 & 63.00 & 50.30 & \textbf{78.80} & 60.60 & 73.02 & \underline{75.40}{\color{OliveGreen!90!black}$_{\scriptscriptstyle+2.38}$} \\
MBPP+      & 10.10 & 29.10 & 50.00 & 58.50 & 54.20 & 39.70 & \textbf{65.90} & 51.10 & 60.85 & \underline{63.76}{\color{OliveGreen!90!black}$_{\scriptscriptstyle+2.91}$} \\ \bottomrule
\end{tabular}
\end{table*}
\begin{table*}[h]
\centering
\caption{Comparison of 2.6B LoopLM with 3-12B parameter baselines. Results for vanilla LLMs are taken from \cite{zhu2025scaling}, with Ouro results obtained via our reimplementation. Best in \textbf{bold}, second best \underline{underlined}. Subscripts indicate LoopRPT's improvement over Ouro baseline.}
\label{tab:main_26b}
\footnotesize % 保持字体比例协调，避免使用 resizebox
\setlength{\tabcolsep}{2.5pt} % 针对11列数据进行极限间距压缩
\begin{tabular}{l ccccccccc >{\columncolor[HTML]{ECF4FF}}c}
\toprule
\textbf{Benchmark} & 
\begin{tabular}[c]{@{}c@{}}Gemma3\\3B\end{tabular} & 
\begin{tabular}[c]{@{}c@{}}Llama3.2\\3B\end{tabular} & 
\begin{tabular}[c]{@{}c@{}}Qwen2.5\\4B\end{tabular} & 
\begin{tabular}[c]{@{}c@{}}Qwen3\\4B\end{tabular} & 
\begin{tabular}[c]{@{}c@{}}Qwen2.5\\7B\end{tabular} & 
\begin{tabular}[c]{@{}c@{}}Llama3.1\\8B\end{tabular} & 
\begin{tabular}[c]{@{}c@{}}Qwen3\\8B\end{tabular} & 
\begin{tabular}[c]{@{}c@{}}Gemma3\\12B\end{tabular} & 
\begin{tabular}[c]{@{}c@{}}Ouro\\2.6B\end{tabular} & 
\begin{tabular}[c]{@{}c@{}}\textbf{LoopRPT}\\\textbf{2.6B}\end{tabular} \\ \midrule
\rowcolor[HTML]{F5F5F5} \multicolumn{11}{c}{\textit{\textbf{General Tasks}}} \\
MMLU       & 65.62 & 59.69 & 73.19 & 58.37 & \underline{74.20} & 73.02 & \textbf{76.63} & 72.14 & 73.85 & 73.91{\color{OliveGreen!90!black}$_{\scriptscriptstyle+0.06}$} \\
MMLU-Pro   & 37.87 & 33.34 & 51.40 & 34.61 & 43.55 & 43.24 & \underline{53.72} & 49.21 & 54.07 & \textbf{54.19}{\color{OliveGreen!90!black}$_{\scriptscriptstyle+0.12}$} \\
BBH        & 55.37 & 39.45 & 71.14 & 66.32 & 53.72 & 71.56 & 77.65 & \textbf{78.41} & 77.98 & \underline{78.24}{\color{OliveGreen!90!black}$_{\scriptscriptstyle+0.26}$} \\
ARC-C      & 55.46 & 52.47 & 63.65 & 60.75 & 63.65 & 60.75 & 66.10 & \textbf{72.44} & 66.12 & \underline{66.89}{\color{OliveGreen!90!black}$_{\scriptscriptstyle+0.77}$} \\
HellaSwag  & 74.54 & 73.09 & 75.66 & 75.58 & 79.98 & \underline{81.97} & 79.60 & \textbf{83.68} & 79.28 & 80.03{\color{OliveGreen!90!black}$_{\scriptscriptstyle+0.75}$} \\
Winogrande & 70.17 & 69.14 & 71.19 & 71.27 & 76.48 & \underline{77.11} & 76.80 & \textbf{77.74} & 75.85 & 76.47{\color{OliveGreen!90!black}$_{\scriptscriptstyle+0.62}$} \\ \midrule
\rowcolor[HTML]{F5F5F5} \multicolumn{11}{c}{\textit{\textbf{Math \& Coding Tasks}}} \\
GSM8K      & 74.60 & 67.20 & 72.86 & 68.69 & 81.50 & 78.17 & \underline{83.09} & 77.18 & 81.76 & \textbf{85.36}{\color{OliveGreen!90!black}$_{\scriptscriptstyle+3.60}$} \\
HumanEval  & 68.90 & 29.90 & 77.70 & 34.80 & \underline{79.30} & 38.40 & \textbf{84.80} & 46.30 & 75.02 & 76.83{\color{OliveGreen!90!black}$_{\scriptscriptstyle+1.81}$} \\
HumanEval+ & 62.20 & 26.20 & 70.70 & 29.30 & 70.60 & 31.10 & \textbf{75.30} & 37.20 & 70.13 & \underline{71.95}{\color{OliveGreen!90!black}$_{\scriptscriptstyle+1.82}$} \\
MBPP       & 63.00 & 50.30 & \underline{78.80} & 60.60 & 73.80 & 62.40 & \textbf{79.00} & 73.50 & 77.10 & 77.24{\color{OliveGreen!90!black}$_{\scriptscriptstyle+0.14}$} \\
MBPP+      & 54.20 & 39.70 & 65.90 & 51.10 & 63.50 & 51.60 & \textbf{67.90} & \underline{66.10} & 64.81 & 65.08{\color{OliveGreen!90!black}$_{\scriptscriptstyle+0.27}$} \\ \bottomrule
\end{tabular}
\end{table*}

\section{Experimental Results}
\label{sec:result}
\begin{figure}[t]  
    \centering
    \includegraphics[width=\columnwidth]{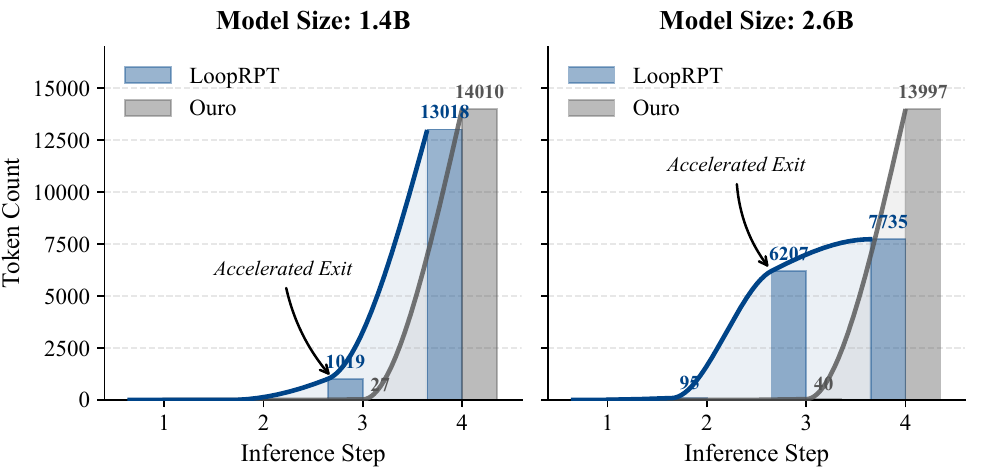}
    \caption{Distribution of exit steps on MMLU benchmark.
LoopRPT increases the proportion of tokens exiting at earlier iterations while maintaining the dominant final-step exits.}
    \label{fig:step_distribution}
\end{figure}
\subsection{Main Results}

\textbf{Next-token Reasoning (NTR).} Table~\ref{tab:ntr} shows that LoopRPT consistently improves accuracy and efficiency across all scales. On Ouro-2.6B, LoopRPT boosts Hard Peak accuracy by +3.58 while reducing average steps from 3.51 to 2.28. While explicit CoT prompting on Qwen3-1.7B severely degrades performance, LoopRPT successfully compresses effective computation into fewer latent iterations without sacrificing correctness.

\textbf{Downstream Benchmarks.} As summarized in Tables~\ref{tab:main_14b} and \ref{tab:main_26b}, LoopRPT delivers consistent gains over Ouro, particularly in coding (MBPP+ +2.91) and mathematical reasoning. At the 2.6B scale, GSM8K accuracy increases from 81.76 to 85.36, alongside improvements in HumanEval+, indicating that reinforcement pre-training on hard instances effectively transfers to complex program synthesis.

\textbf{Exit Dynamics.} LoopRPT substantially strengthens adaptive exiting by increasing early-step exits while maintaining final-step dominance (Fig.~\ref{fig:step_distribution}). At larger scales, the mechanism becomes more aggressive, with a notable shift towards step-3 exits. This reflects improved confidence calibration and earlier latent convergence, aligning with the accuracy--compute Pareto dominance observed in NTR.

% Please add the following required packages to your document preamble:
% \usepackage{booktabs}
% \usepackage{multirow}
% \usepackage{graphicx}
% \usepackage{xcolor}
\begin{table}[ht]
\caption{Ablation study of LoopRPT-1.4B. \textit{Overall} represents the average of three difficulty levels under next-token reasoning task.}
\label{tab:ablation}
\resizebox{\columnwidth}{!}{%
\begin{tabular}{@{}lcccccc@{}}
\toprule
\multirow{2}{*}{\textbf{Varies}} & \multicolumn{3}{c}{\textbf{Hard}} & \multicolumn{3}{c}{\textbf{Overall}} \\ \cmidrule(lr){2-4}  \cmidrule(lr){5-7}
 & Peak & Adap. & Avg Step & Peak & Adap. & Avg Step \\ \midrule
\textbf{LoopRPT} & \textbf{34.74} & \textbf{34.82} & \textbf{3.07} & \textbf{56.14} & \textbf{55.85} & \textbf{2.79} \\ \midrule
w/o Gaussian Noise & 34.69 & 34.76 & 3.29{\color{red!70!black}$_{\scriptstyle(\uparrow)}$} & 55.86 & 55.57 & 3.17{\color{red!70!black}$_{\scriptstyle(\uparrow)}$} \\
w/o $\mathcal{L}_{\text{PG}} + \mathcal{L}_{\text{KL}}$ & 34.59 & 34.17 & 3.28{\color{red!70!black}$_{\scriptstyle(\uparrow)}$} & 55.89 & 55.51 & 3.14{\color{red!70!black}$_{\scriptstyle(\uparrow)}$} \\
w/o $\mathcal{L}_{\text{rep}}+\mathcal{L}_{\text{ent}}$ & 34.02 & 33.75 & 3.46{\color{red!70!black}$_{\scriptstyle(\uparrow)}$} & 55.52 & 55.19 & 3.24{\color{red!70!black}$_{\scriptstyle(\uparrow)}$} \\
w/o Token Selection & 34.60 & 34.52 & 2.89{\color{blue!60!black}$_{\scriptstyle(\downarrow)}$} & 55.76 & 55.41 & 2.49{\color{blue!60!black}$_{\scriptstyle(\downarrow)}$} \\
w/o Time Penalty & 34.57 & 34.80 & 3.30{\color{red!70!black}$_{\scriptstyle(\uparrow)}$} & 55.94 & 55.78 & 3.19{\color{red!70!black}$_{\scriptstyle(\uparrow)}$} \\ \bottomrule
\end{tabular}%
}
\end{table}
\subsection{Ablation Study}
\label{subsec:ablation}

Table~\ref{tab:ablation} ablates key design choices of LoopRPT on Ouro-1.4B across \textit{Hard} and \textit{Overall} splits. Overall, removing any component degrades accuracy or efficiency, confirming that LoopRPT relies on multiple complementary ingredients.

\textbf{Stabilization and trajectory diversity.} Disabling Gaussian noise reduces accuracy and increases average exit steps, suggesting that stochastic latent rollouts facilitate exploration and earlier latent convergence. Similarly, removing policy-gradient and KL stabilizers leads to significant performance drops and increased latency under adaptive inference, highlighting their importance for stable on-policy learning and calibrated early exiting.

\textbf{Representation shaping and efficiency.} 
Removing $\mathcal{L}_{\text{rep}}+\mathcal{L}_{\text{ent}}$ yields the largest degradation in both accuracy and efficiency, underscoring that representation regularization and entropy shaping are vital for learning effective latent transitions.
While disabling token selection slightly reduces steps, it notably hurts Hard-split accuracy, proving that hard-token updates preferentially allocate capacity to difficult reasoning tasks.
Finally, removing the time penalty modestly hurts accuracy while consistently increasing average steps, confirming its role in explicitly incentivizing early exit and improving the accuracy--compute trade-off.

% Removing $\mathcal{L}_{\text{rep}}+\mathcal{L}_{\text{ent}}$ yields the largest degradation in both accuracy and efficiency, underscoring that representation regularization and entropy shaping are vital for learning effective latent transitions. While disabling token selection slightly reduces steps, it notably hurts Hard-split accuracy, proving that hard-token updates preferentially allocate capacity to difficult reasoning tasks. Finally, the time penalty confirms its role in explicitly incentivizing earlier exits and refining the accuracy--compute trade-off.

\begin{figure*}[h]
    \centering
    \begin{subfigure}[b]{0.29\textwidth}  
        \centering
        \includegraphics[width=\textwidth]{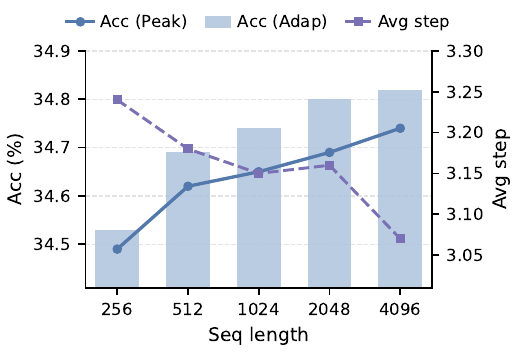}
        \caption{Effect of the maximum sequence length.}
        \label{fig:max_len}
    \end{subfigure}
    \hspace{0.03\textwidth}
    \begin{subfigure}[b]{0.29\textwidth}  
        \centering
        \includegraphics[width=\textwidth]{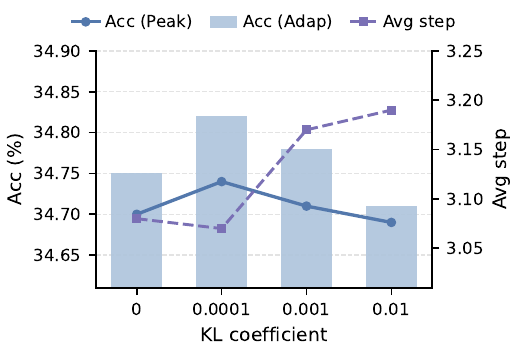}
        \caption{Effect of the KL regularization coefficient.}
        \label{fig:kl_coef}
    \end{subfigure}
    \hspace{0.03\textwidth} 
    \begin{subfigure}[b]{0.29\textwidth}  
        \centering
        \includegraphics[width=\textwidth]{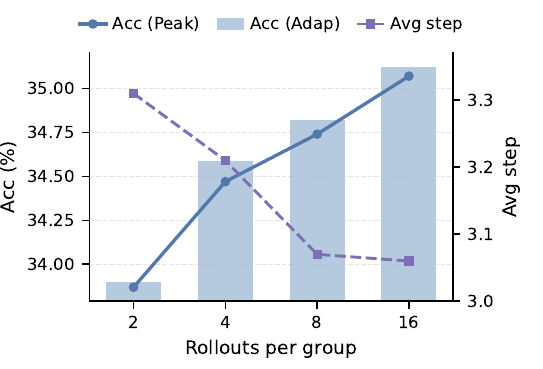}
        \caption{Effect of the group size in GRPO during noisy latent rollouts.}
        \label{fig:rollout}
    \end{subfigure}
    %\vspace{3pt}  % 添加一些垂直间距
    
    \caption{
        Sensitivity analysis of LoopRPT hyperparameters on the next-token reasoning task.
All results are reported on the \textit{hard} split, showing accuracy and average exit steps under different settings.
    }
    \label{fig:rl_hyper}
\end{figure*}

\section{Analysis and Discussion}
\label{sec:discuss}
\subsection{Hyperparameter Sensitivity}

LoopRPT exhibits low sensitivity to key hyperparameters and follows a predictable accuracy--compute trade-off. Increasing the maximum sequence length (see Fig.~\ref{fig:max_len}) consistently improves both \textit{Peak} and \textit{Adap.} accuracy, while the average exit step slightly decreases or saturates, suggesting that longer context helps latent computation converge earlier. The KL regularization coefficient (Fig.~\ref{fig:kl_coef}) shows a mild sweet spot: moderate KL improves \textit{Adap.} performance, whereas overly strong KL tends to reduce gains and yields more conservative exiting (higher average steps), consistent with over-constraining on-policy updates. Finally, enlarging the GRPO group size (Fig.~\ref{fig:rollout}) improves accuracy and reduces average steps, indicating that better group-wise advantage estimates strengthen both learning signal and early-exit calibration, with diminishing returns at larger group sizes.

\begin{figure}[t]  
    \centering
    \includegraphics[width=\columnwidth]{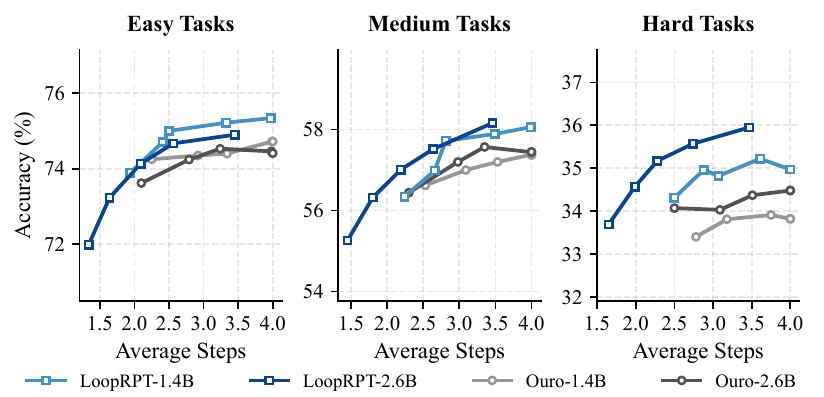}
    \caption{Accuracy–computation trade-offs across difficulty buckets. LoopRPT enhances next-token reasoning accuracy with reduced loop cycles.}
    \label{fig:exit_thres}
\end{figure}

\subsection{Analysis of Latent Reasoning}

To better understand how LoopRPT improves the accuracy–computation trade-off, we analyze the latent reasoning dynamics of LoopLMs at the level of intermediate computation steps. Fig.~\ref{fig:exit_thres} presents the accuracy–computation Pareto curves under varying exit thresholds. Across three difficulty buckets, LoopRPT consistently achieves higher accuracy with fewer average computation steps compared to the baseline Ouro model. Notably, this Pareto dominance holds across model scales, indicating that the gains introduced by LoopRPT are not tied to a specific parameter regime.

To uncover the mechanism behind this improvement, Fig.~\ref{fig:step_wise} reports per-step next-token prediction accuracy obtained from intermediate hidden states within a single forward pass. We observe that LoopRPT improves prediction accuracy at every latent step, with particularly pronounced gains at early steps. This effect is most evident for hard tokens, where LoopRPT significantly boosts step-1 and step-2 accuracy.

These results suggest that LoopRPT does not merely encourage earlier exiting, but fundamentally enhances the quality of intermediate representations. By shaping latent reasoning through reinforcement pre-training, LoopRPT compresses effective computation into fewer steps while preserving or improving predictive performance.

\subsection{Forced-depth Evaluation}
We set the exit threshold to $1.0$ and vary the latent steps to evaluate performance under compulsory computation. As shown in Fig.~\ref{fig:force_depth}, forced-depth accuracy is not guaranteed to improve monotonically as depth increases, and may even degrade for Medium and Hard buckets. This behavior is expected for looped models: later latent steps are not explicitly optimized to monotonically refine token-level cross-entropy, and additional iterations can introduce distributional shifts or over-updating of latent representations when the model is forced to continue reasoning. Consequently, forced-depth evaluation measures the behavior under compulsory computation, rather than the intrinsic quality of intermediate representations.

Importantly, LoopRPT consistently outperforms the baseline across all buckets and depths, indicating that reinforcement pre-training improves robustness even under forced computation.

% We set the exit threshold to $1.0$ and vary the latent steps to evaluate performance under compulsory computation. As shown in Fig.~\ref{fig:force_depth}, forced-depth accuracy is not inherently monotonic and may degrade in Medium and Hard buckets. This occurs because later iterations are not explicitly optimized for monotonic refinement; instead, extra steps can introduce distributional shifts or latent over-updating when reasoning is forced. Importantly, LoopRPT consistently outperforms the baseline across all buckets and depths, indicating that reinforcement pre-training enhances robustness even under forced computation.

\begin{figure}[h]  
    \centering
    \includegraphics[width=\columnwidth]{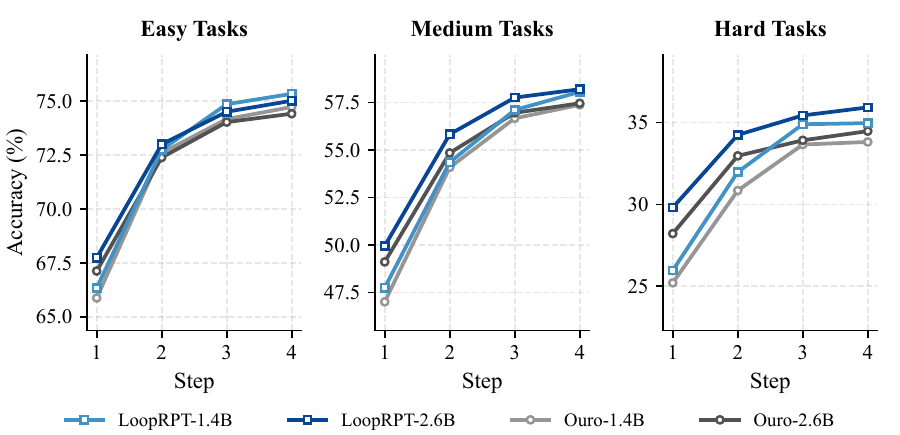}
    \caption{Per-step next-token reasoning accuracy across difficulty buckets.}
    \label{fig:step_wise}
\end{figure}

\begin{figure}[h]  
    \centering
    \includegraphics[width=\columnwidth]{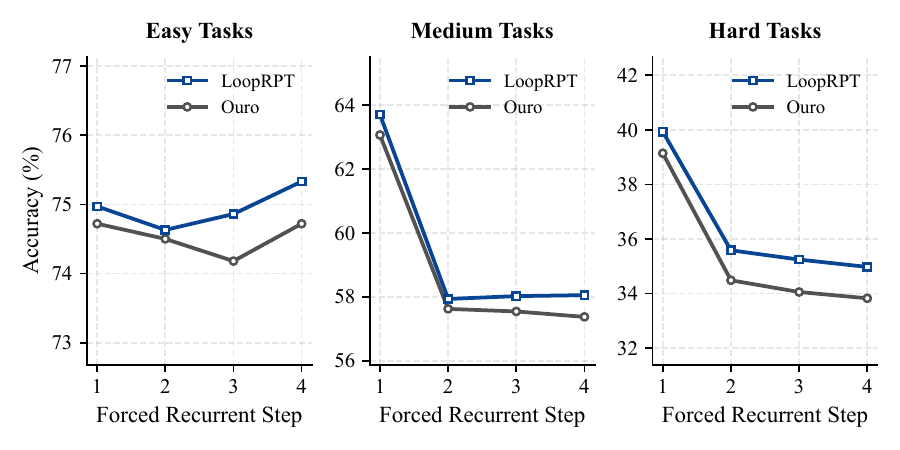}
    \caption{Forced-depth evaluation across difficulty buckets.}
    \label{fig:force_depth}
\end{figure}
\section{Related Work}
\label{sec:related}

\paragraph{Reinforcement learning for (pre-)training LMs.}
Modern RL-based post-training for LMs is often instantiated as RL from preference feedback, where a reward model (or an implicit reward) is optimized with policy-gradient style updates such as TRPO/PPO \citep{pmlr-v37-schulman15,schulman2017proximalpolicyoptimizationalgorithms} and human preference supervision \citep{NIPS2017_d5e2c0ad,NEURIPS2020_1f89885d,NEURIPS2022_b1efde53}. Subsequent work reduces reliance on human labels by leveraging AI feedback and rule-based supervision, e.g., Constitutional AI / RLAIF \citep{bai2022constitutionalaiharmlessnessai}, and broadens the algorithmic toolkit beyond reward-model RLHF via direct/implicit preference optimization \citep{NEURIPS2023_a85b405e,ethayarajh2024ktomodelalignmentprospect,hong2024orpomonolithicpreferenceoptimization} and their extensions and surveys \citep{xiao2025comprehensivesurveydirectpreference,jiang2025preferencesurvey,cen2025valueincentivizedpreferenceoptimizationunified}. In parallel, RLVR exploit programmatic or checker-based supervision to improve reasoning without a learned reward model \citep{wen2025reinforcementlearningverifiablerewards,Guo_2025,shao2024deepseekmathpushinglimitsmathematical}, but are typically applied as task-specific post-training due to limited availability of verifiable supervision at scale. Reinforcement Pre-Training (RPT) reframes next-token prediction itself as a verifiable RL objective, enabling RL signals to be derived directly from the pre-training corpus \citep{method:rpt, hatamizadeh2025rlpreinforcementpretrainingobjective}. Our work follows this emerging direction and addresses a key bottleneck for RL pre-training: token-level feedback is naturally sparse and dominated by easy transitions. By concentrating dense updates on hard next-token instances \citep{wang20258020rulehighentropyminority}, we improve both token-level reasoning and downstream transfer while preserving stable training dynamics.

\paragraph{Looped language models and adaptive depth.}
A long line of work explores recurrence and adaptive computation in neural sequence models. Early approaches introduce dynamic halting and step allocation, such as ACT \citep{graves2016act} and Universal Transformers \citep{dehghani2019universaltransformers}, later extended with sparse/shared-depth variants \citep{tan2023sparseuniversaltransformer} and learned pondering mechanisms \citep{banino2021pondernetlearningponder}. Complementary to within-token recurrence, recurrence over segments and explicit memory has been studied for long-context modeling, e.g., Transformer-XL \citep{dai-etal-2019-transformer}, Compressive Transformers \citep{rae2019compressivetransformerslongrangesequence} and Recurrent Memory Transformers \citep{NEURIPS2022_47e28862}. Another related line is dynamic-depth inference via early exiting or layer-wise halting \citep{xin2020deebertdynamicearlyexiting,NEURIPS2020_d4dd111a,xu-mcauley-2023-survey,bajpai2025surveyearlyexitdeep}, and more recent efforts allocate compute non-uniformly across tokens/layers \citep{raposo2024mixtureofdepthsdynamicallyallocatingcompute,chen2025innerthinkingtransformerleveraging}. Closest to our setting are Looped/Depth-recurrent LMs that explicitly build iterative latent computation into the model, enabling adaptive early exit at test time \citep{zhu2025scaling,wu2025parallellooptransformerefficient,li2025skiplayerloopit,mcleish2025teachingpretrainedlanguagemodels}, as well as related latent-pondering pretraining schemes \citep{zeng2025ponderlm2pretrainingllmlatent}. Our work is positioned at the intersection: we leverage LoopLMs as the computational substrate, and contribute an RL pre-training recipe that strengthens token-level reasoning and calibrates early-exit behavior, addressing the instability and weak supervision issues that arise when scaling latent recurrence with sparse token-level rewards.

\section{Conclusion}
\label{sec:conclusion}
We introduced LoopRPT, a reinforcement pre-training framework for looped language models that addresses the inherent sparsity of token-level feedback. LoopRPT concentrates on learning signals on hard next-token instances and provides step-wise supervision over latent recurrence. Experiments demonstrate that LoopRPT improves next-token reasoning across difficulty levels under both maximum-loop execution and adaptive early exit, while simultaneously reducing the average inference steps. These gains translate to consistent improvements on a diverse set of end-task benchmarks, particularly in math and code, and are accompanied by better-calibrated early-exit behavior. Together, our results suggest that combining looped computation with hard-token–focused RL pre-training is an effective and scalable path toward stronger intermediate reasoning and more efficient inference. Future work includes extending the approach to larger scales and broader data mixtures, and further improving robustness of early-exit calibration under distribution shift.

\bibliography{example_paper}

\begin{thebibliography}{56}
\providecommand{\natexlab}[1]{#1}
\providecommand{\url}[1]{\texttt{#1}}
\expandafter\ifx\csname urlstyle\endcsname\relax
  \providecommand{\doi}[1]{doi: #1}\else
  \providecommand{\doi}{doi: \begingroup \urlstyle{rm}\Url}\fi

\bibitem[Austin et~al.(2021)Austin, Odena, Nye, Bosma, Michalewski, Dohan, Jiang, Cai, Terry, Le, and Sutton]{bench:mbpp}
Austin, J., Odena, A., Nye, M., Bosma, M., Michalewski, H., Dohan, D., Jiang, E., Cai, C., Terry, M., Le, Q., and Sutton, C.
\newblock Program synthesis with large language models, 2021.
\newblock URL \url{https://arxiv.org/abs/2108.07732}.

\bibitem[Bai et~al.(2022)Bai, Kadavath, Kundu, Askell, Kernion, Jones, Chen, Goldie, Mirhoseini, McKinnon, Chen, Olsson, Olah, Hernandez, Drain, Ganguli, Li, Tran-Johnson, Perez, Kerr, Mueller, Ladish, Landau, Ndousse, Lukosuite, Lovitt, Sellitto, Elhage, Schiefer, Mercado, DasSarma, Lasenby, Larson, Ringer, Johnston, Kravec, Showk, Fort, Lanham, Telleen-Lawton, Conerly, Henighan, Hume, Bowman, Hatfield-Dodds, Mann, Amodei, Joseph, McCandlish, Brown, and Kaplan]{bai2022constitutionalaiharmlessnessai}
Bai, Y., Kadavath, S., Kundu, S., Askell, A., Kernion, J., Jones, A., Chen, A., Goldie, A., Mirhoseini, A., McKinnon, C., Chen, C., Olsson, C., Olah, C., Hernandez, D., Drain, D., Ganguli, D., Li, D., Tran-Johnson, E., Perez, E., Kerr, J., Mueller, J., Ladish, J., Landau, J., Ndousse, K., Lukosuite, K., Lovitt, L., Sellitto, M., Elhage, N., Schiefer, N., Mercado, N., DasSarma, N., Lasenby, R., Larson, R., Ringer, S., Johnston, S., Kravec, S., Showk, S.~E., Fort, S., Lanham, T., Telleen-Lawton, T., Conerly, T., Henighan, T., Hume, T., Bowman, S.~R., Hatfield-Dodds, Z., Mann, B., Amodei, D., Joseph, N., McCandlish, S., Brown, T., and Kaplan, J.
\newblock Constitutional ai: Harmlessness from ai feedback, 2022.
\newblock URL \url{https://arxiv.org/abs/2212.08073}.

\bibitem[Bajpai \& Hanawal(2025)Bajpai and Hanawal]{bajpai2025surveyearlyexitdeep}
Bajpai, D.~J. and Hanawal, M.~K.
\newblock A survey of early exit deep neural networks in nlp, 2025.
\newblock URL \url{https://arxiv.org/abs/2501.07670}.

\bibitem[Banino et~al.(2021)Banino, Balaguer, and Blundell]{banino2021pondernetlearningponder}
Banino, A., Balaguer, J., and Blundell, C.
\newblock Pondernet: Learning to ponder, 2021.
\newblock URL \url{https://arxiv.org/abs/2107.05407}.

\bibitem[Bulatov et~al.(2022)Bulatov, Kuratov, and Burtsev]{NEURIPS2022_47e28862}
Bulatov, A., Kuratov, Y., and Burtsev, M.
\newblock Recurrent memory transformer.
\newblock In Koyejo, S., Mohamed, S., Agarwal, A., Belgrave, D., Cho, K., and Oh, A. (eds.), \emph{Advances in Neural Information Processing Systems}, volume~35, pp.\  11079--11091. Curran Associates, Inc., 2022.

\bibitem[Cen et~al.(2025)Cen, Mei, Goshvadi, Dai, Yang, Yang, Schuurmans, Chi, and Dai]{cen2025valueincentivizedpreferenceoptimizationunified}
Cen, S., Mei, J., Goshvadi, K., Dai, H., Yang, T., Yang, S., Schuurmans, D., Chi, Y., and Dai, B.
\newblock Value-incentivized preference optimization: A unified approach to online and offline rlhf, 2025.
\newblock URL \url{https://arxiv.org/abs/2405.19320}.

\bibitem[Chen et~al.(2021)Chen, Tworek, Jun, Yuan, de~Oliveira~Pinto, Kaplan, Edwards, Burda, Joseph, Brockman, Ray, Puri, Krueger, Petrov, Khlaaf, Sastry, Mishkin, Chan, Gray, Ryder, Pavlov, Power, Kaiser, Bavarian, Winter, Tillet, Such, Cummings, Plappert, Chantzis, Barnes, Herbert-Voss, Guss, Nichol, Paino, Tezak, Tang, Babuschkin, Balaji, Jain, Saunders, Hesse, Carr, Leike, Achiam, Misra, Morikawa, Radford, Knight, Brundage, Murati, Mayer, Welinder, McGrew, Amodei, McCandlish, Sutskever, and Zaremba]{bench:humaneval}
Chen, M., Tworek, J., Jun, H., Yuan, Q., de~Oliveira~Pinto, H.~P., Kaplan, J., Edwards, H., Burda, Y., Joseph, N., Brockman, G., Ray, A., Puri, R., Krueger, G., Petrov, M., Khlaaf, H., Sastry, G., Mishkin, P., Chan, B., Gray, S., Ryder, N., Pavlov, M., Power, A., Kaiser, L., Bavarian, M., Winter, C., Tillet, P., Such, F.~P., Cummings, D., Plappert, M., Chantzis, F., Barnes, E., Herbert-Voss, A., Guss, W.~H., Nichol, A., Paino, A., Tezak, N., Tang, J., Babuschkin, I., Balaji, S., Jain, S., Saunders, W., Hesse, C., Carr, A.~N., Leike, J., Achiam, J., Misra, V., Morikawa, E., Radford, A., Knight, M., Brundage, M., Murati, M., Mayer, K., Welinder, P., McGrew, B., Amodei, D., McCandlish, S., Sutskever, I., and Zaremba, W.
\newblock Evaluating large language models trained on code, 2021.
\newblock URL \url{https://arxiv.org/abs/2107.03374}.

\bibitem[Chen et~al.(2025)Chen, Shang, Zhang, Xie, Sheng, Liu, Wang, Sun, Wu, and Wang]{chen2025innerthinkingtransformerleveraging}
Chen, Y., Shang, J., Zhang, Z., Xie, Y., Sheng, J., Liu, T., Wang, S., Sun, Y., Wu, H., and Wang, H.
\newblock Inner thinking transformer: Leveraging dynamic depth scaling to foster adaptive internal thinking, 2025.
\newblock URL \url{https://arxiv.org/abs/2502.13842}.

\bibitem[Christiano et~al.(2017)Christiano, Leike, Brown, Martic, Legg, and Amodei]{NIPS2017_d5e2c0ad}
Christiano, P.~F., Leike, J., Brown, T., Martic, M., Legg, S., and Amodei, D.
\newblock Deep reinforcement learning from human preferences.
\newblock In Guyon, I., Luxburg, U.~V., Bengio, S., Wallach, H., Fergus, R., Vishwanathan, S., and Garnett, R. (eds.), \emph{Advances in Neural Information Processing Systems}, volume~30. Curran Associates, Inc., 2017.

\bibitem[Clark et~al.(2018)Clark, Cowhey, Etzioni, Khot, Sabharwal, Schoenick, and Tafjord]{bench:arc_challenge}
Clark, P., Cowhey, I., Etzioni, O., Khot, T., Sabharwal, A., Schoenick, C., and Tafjord, O.
\newblock Think you have solved question answering? try arc, the ai2 reasoning challenge, 2018.
\newblock URL \url{https://arxiv.org/abs/1803.05457}.

\bibitem[Cobbe et~al.(2021)Cobbe, Kosaraju, Bavarian, Chen, Jun, Kaiser, Plappert, Tworek, Hilton, Nakano, Hesse, and Schulman]{bench:gsm8k}
Cobbe, K., Kosaraju, V., Bavarian, M., Chen, M., Jun, H., Kaiser, L., Plappert, M., Tworek, J., Hilton, J., Nakano, R., Hesse, C., and Schulman, J.
\newblock Training verifiers to solve math word problems, 2021.
\newblock URL \url{https://arxiv.org/abs/2110.14168}.

\bibitem[Dai et~al.(2019)Dai, Yang, Yang, Carbonell, Le, and Salakhutdinov]{dai-etal-2019-transformer}
Dai, Z., Yang, Z., Yang, Y., Carbonell, J., Le, Q., and Salakhutdinov, R.
\newblock Transformer-{XL}: Attentive language models beyond a fixed-length context.
\newblock In Korhonen, A., Traum, D., and M{\`a}rquez, L. (eds.), \emph{Proceedings of the 57th Annual Meeting of the Association for Computational Linguistics}, pp.\  2978--2988, Florence, Italy, July 2019. Association for Computational Linguistics.
\newblock \doi{10.18653/v1/P19-1285}.
\newblock URL \url{https://aclanthology.org/P19-1285/}.

\bibitem[Dehghani et~al.(2019)Dehghani, Gouws, Vinyals, Uszkoreit, and Łukasz Kaiser]{dehghani2019universaltransformers}
Dehghani, M., Gouws, S., Vinyals, O., Uszkoreit, J., and Łukasz Kaiser.
\newblock Universal transformers, 2019.
\newblock URL \url{https://arxiv.org/abs/1807.03819}.

\bibitem[Dong et~al.(2025)Dong, Dong, Tang, Ye, Sun, Sui, and Wei]{method:rpt}
Dong, Q., Dong, L., Tang, Y., Ye, T., Sun, Y., Sui, Z., and Wei, F.
\newblock Reinforcement pre-training.
\newblock \emph{CoRR}, abs/2506.08007, 2025.
\newblock \doi{10.48550/ARXIV.2506.08007}.
\newblock URL \url{https://doi.org/10.48550/arXiv.2506.08007}.

\bibitem[Ethayarajh et~al.(2024)Ethayarajh, Xu, Muennighoff, Jurafsky, and Kiela]{ethayarajh2024ktomodelalignmentprospect}
Ethayarajh, K., Xu, W., Muennighoff, N., Jurafsky, D., and Kiela, D.
\newblock Kto: Model alignment as prospect theoretic optimization, 2024.
\newblock URL \url{https://arxiv.org/abs/2402.01306}.

\bibitem[Gao et~al.(2024{\natexlab{a}})Gao, Song, Yang, Cai, Miao, Dong, Li, Ma, Chen, Xu, Tang, Wang, Zan, Quan, Zhang, Sha, Zhang, Ren, Liu, and Chang]{gao2024omnimathuniversalolympiadlevel}
Gao, B., Song, F., Yang, Z., Cai, Z., Miao, Y., Dong, Q., Li, L., Ma, C., Chen, L., Xu, R., Tang, Z., Wang, B., Zan, D., Quan, S., Zhang, G., Sha, L., Zhang, Y., Ren, X., Liu, T., and Chang, B.
\newblock Omni-math: A universal olympiad level mathematic benchmark for large language models, 2024{\natexlab{a}}.
\newblock URL \url{https://arxiv.org/abs/2410.07985}.

\bibitem[Gao et~al.(2024{\natexlab{b}})Gao, Tow, Abbasi, Biderman, Black, DiPofi, Foster, Golding, Hsu, Le~Noac'h, Li, McDonell, Muennighoff, Ociepa, Phang, Reynolds, Schoelkopf, Skowron, Sutawika, Tang, Thite, Wang, Wang, and Zou]{bench:eval-harness}
Gao, L., Tow, J., Abbasi, B., Biderman, S., Black, S., DiPofi, A., Foster, C., Golding, L., Hsu, J., Le~Noac'h, A., Li, H., McDonell, K., Muennighoff, N., Ociepa, C., Phang, J., Reynolds, L., Schoelkopf, H., Skowron, A., Sutawika, L., Tang, E., Thite, A., Wang, B., Wang, K., and Zou, A.
\newblock The language model evaluation harness, 07 2024{\natexlab{b}}.
\newblock URL \url{https://zenodo.org/records/12608602}.

\bibitem[Grattafiori et~al.(2024)Grattafiori, Dubey, Jauhri, Pandey, Kadian, Al-Dahle, Letman, Mathur, Schelten, Vaughan, et~al.]{model:llama3}
Grattafiori, A., Dubey, A., Jauhri, A., Pandey, A., Kadian, A., Al-Dahle, A., Letman, A., Mathur, A., Schelten, A., Vaughan, A., et~al.
\newblock The llama 3 herd of models.
\newblock \emph{arXiv preprint arXiv:2407.21783}, 2024.

\bibitem[Graves(2016)]{graves2016act}
Graves, A.
\newblock Adaptive computation time for recurrent neural networks.
\newblock \emph{arXiv preprint arXiv:1603.08983}, 2016.

\bibitem[Gugger et~al.(2022)Gugger, Debut, Wolf, Schmid, Mueller, Mangrulkar, Sun, and Bossan]{train:accelerate}
Gugger, S., Debut, L., Wolf, T., Schmid, P., Mueller, Z., Mangrulkar, S., Sun, M., and Bossan, B.
\newblock Accelerate: Training and inference at scale made simple, efficient and adaptable.
\newblock \url{https://github.com/huggingface/accelerate}, 2022.

\bibitem[Guo et~al.(2025)Guo, Yang, Zhang, Song, Wang, Zhu, Xu, Zhang, Ma, Bi, Zhang, Yu, Wu, Wu, Gou, Shao, Li, Gao, Liu, Xue, Wang, Wu, Feng, Lu, Zhao, Deng, Ruan, Dai, Chen, Ji, Li, Lin, Dai, Luo, Hao, Chen, Li, Zhang, Xu, Ding, Gao, Qu, Li, Guo, Li, Chen, Yuan, Tu, Qiu, Li, Cai, Ni, Liang, Chen, Dong, Hu, You, Gao, Guan, Huang, Yu, Wang, Zhang, Zhao, Wang, Zhang, Xu, Xia, Zhang, Zhang, Tang, Zhou, Li, Wang, Li, Tian, Huang, Zhang, Wang, Chen, Du, Ge, Zhang, Pan, Wang, Chen, Jin, Chen, Lu, Zhou, Chen, Ye, Wang, Yu, Zhou, Pan, Li, Zhou, Wu, Yun, Pei, Sun, Wang, Zeng, Liu, Liang, Gao, Yu, Zhang, Xiao, An, Liu, Wang, Chen, Nie, Cheng, Liu, Xie, Liu, Yang, Li, Su, Lin, Li, Jin, Shen, Chen, Sun, Wang, Song, Zhou, Wang, Shan, Li, Wang, Wei, Zhang, Xu, Li, Zhao, Sun, Wang, Yu, Zhang, Shi, Xiong, He, Piao, Wang, Tan, Ma, Liu, Guo, Ou, Wang, Gong, Zou, He, Xiong, Luo, You, Liu, Zhou, Zhu, Huang, Li, Zheng, Zhu, Ma, Tang, Zha, Yan, Ren, Ren, Sha, Fu, Xu, Xie, Zhang, Hao, Ma, Yan, Wu, Gu, Zhu, Liu, Li, Xie, Song,
  Pan, Huang, Xu, Zhang, and Zhang]{Guo_2025}
Guo, D., Yang, D., Zhang, H., Song, J., Wang, P., Zhu, Q., Xu, R., Zhang, R., Ma, S., Bi, X., Zhang, X., Yu, X., Wu, Y., Wu, Z.~F., Gou, Z., Shao, Z., Li, Z., Gao, Z., Liu, A., Xue, B., Wang, B., Wu, B., Feng, B., Lu, C., Zhao, C., Deng, C., Ruan, C., Dai, D., Chen, D., Ji, D., Li, E., Lin, F., Dai, F., Luo, F., Hao, G., Chen, G., Li, G., Zhang, H., Xu, H., Ding, H., Gao, H., Qu, H., Li, H., Guo, J., Li, J., Chen, J., Yuan, J., Tu, J., Qiu, J., Li, J., Cai, J.~L., Ni, J., Liang, J., Chen, J., Dong, K., Hu, K., You, K., Gao, K., Guan, K., Huang, K., Yu, K., Wang, L., Zhang, L., Zhao, L., Wang, L., Zhang, L., Xu, L., Xia, L., Zhang, M., Zhang, M., Tang, M., Zhou, M., Li, M., Wang, M., Li, M., Tian, N., Huang, P., Zhang, P., Wang, Q., Chen, Q., Du, Q., Ge, R., Zhang, R., Pan, R., Wang, R., Chen, R.~J., Jin, R.~L., Chen, R., Lu, S., Zhou, S., Chen, S., Ye, S., Wang, S., Yu, S., Zhou, S., Pan, S., Li, S.~S., Zhou, S., Wu, S., Yun, T., Pei, T., Sun, T., Wang, T., Zeng, W., Liu, W., Liang, W., Gao, W., Yu, W.,
  Zhang, W., Xiao, W.~L., An, W., Liu, X., Wang, X., Chen, X., Nie, X., Cheng, X., Liu, X., Xie, X., Liu, X., Yang, X., Li, X., Su, X., Lin, X., Li, X.~Q., Jin, X., Shen, X., Chen, X., Sun, X., Wang, X., Song, X., Zhou, X., Wang, X., Shan, X., Li, Y.~K., Wang, Y.~Q., Wei, Y.~X., Zhang, Y., Xu, Y., Li, Y., Zhao, Y., Sun, Y., Wang, Y., Yu, Y., Zhang, Y., Shi, Y., Xiong, Y., He, Y., Piao, Y., Wang, Y., Tan, Y., Ma, Y., Liu, Y., Guo, Y., Ou, Y., Wang, Y., Gong, Y., Zou, Y., He, Y., Xiong, Y., Luo, Y., You, Y., Liu, Y., Zhou, Y., Zhu, Y.~X., Huang, Y., Li, Y., Zheng, Y., Zhu, Y., Ma, Y., Tang, Y., Zha, Y., Yan, Y., Ren, Z.~Z., Ren, Z., Sha, Z., Fu, Z., Xu, Z., Xie, Z., Zhang, Z., Hao, Z., Ma, Z., Yan, Z., Wu, Z., Gu, Z., Zhu, Z., Liu, Z., Li, Z., Xie, Z., Song, Z., Pan, Z., Huang, Z., Xu, Z., Zhang, Z., and Zhang, Z.
\newblock Deepseek-r1 incentivizes reasoning in llms through reinforcement learning.
\newblock \emph{Nature}, 645\penalty0 (8081):\penalty0 633–638, September 2025.
\newblock ISSN 1476-4687.
\newblock \doi{10.1038/s41586-025-09422-z}.
\newblock URL \url{http://dx.doi.org/10.1038/s41586-025-09422-z}.

\bibitem[Hatamizadeh et~al.(2025)Hatamizadeh, Akter, Prabhumoye, Kautz, Patwary, Shoeybi, Catanzaro, and Choi]{hatamizadeh2025rlpreinforcementpretrainingobjective}
Hatamizadeh, A., Akter, S.~N., Prabhumoye, S., Kautz, J., Patwary, M., Shoeybi, M., Catanzaro, B., and Choi, Y.
\newblock Rlp: Reinforcement as a pretraining objective, 2025.
\newblock URL \url{https://arxiv.org/abs/2510.01265}.

\bibitem[Hendrycks et~al.(2021)Hendrycks, Burns, Basart, Zou, Mazeika, Song, and Steinhardt]{bench:mmlu}
Hendrycks, D., Burns, C., Basart, S., Zou, A., Mazeika, M., Song, D., and Steinhardt, J.
\newblock Measuring massive multitask language understanding, 2021.
\newblock URL \url{https://arxiv.org/abs/2009.03300}.

\bibitem[Hong et~al.(2024)Hong, Lee, and Thorne]{hong2024orpomonolithicpreferenceoptimization}
Hong, J., Lee, N., and Thorne, J.
\newblock Orpo: Monolithic preference optimization without reference model, 2024.
\newblock URL \url{https://arxiv.org/abs/2403.07691}.

\bibitem[Jiang et~al.(2025)Jiang, Chen, Bai, He, Li, Yang, Zhao, Nie, and Zhang]{jiang2025preferencesurvey}
Jiang, R., Chen, K., Bai, X., He, Z., Li, J., Yang, M., Zhao, T., Nie, L., and Zhang, M.
\newblock A survey on human preference learning for aligning large language models.
\newblock \emph{ACM Comput. Surv.}, 58\penalty0 (6), December 2025.
\newblock ISSN 0360-0300.
\newblock \doi{10.1145/3773279}.
\newblock URL \url{https://doi.org/10.1145/3773279}.

\bibitem[Li et~al.(2025)Li, Li, and Zhou]{li2025skiplayerloopit}
Li, Z., Li, Y., and Zhou, T.
\newblock Skip a layer or loop it? test-time depth adaptation of pretrained llms, 2025.
\newblock URL \url{https://arxiv.org/abs/2507.07996}.

\bibitem[Liu et~al.(2023)Liu, Xia, Wang, and Zhang]{bench:evalplus}
Liu, J., Xia, C.~S., Wang, Y., and Zhang, L.
\newblock Is your code generated by chat{GPT} really correct? rigorous evaluation of large language models for code generation.
\newblock In \emph{Thirty-seventh Conference on Neural Information Processing Systems}, 2023.
\newblock URL \url{https://openreview.net/forum?id=1qvx610Cu7}.

\bibitem[McLeish et~al.(2025)McLeish, Li, Kirchenbauer, Kalra, Bartoldson, Kailkhura, Schwarzschild, Geiping, Goldstein, and Goldblum]{mcleish2025teachingpretrainedlanguagemodels}
McLeish, S., Li, A., Kirchenbauer, J., Kalra, D.~S., Bartoldson, B.~R., Kailkhura, B., Schwarzschild, A., Geiping, J., Goldstein, T., and Goldblum, M.
\newblock Teaching pretrained language models to think deeper with retrofitted recurrence, 2025.
\newblock URL \url{https://arxiv.org/abs/2511.07384}.

\bibitem[Ouyang et~al.(2022)Ouyang, Wu, Jiang, Almeida, Wainwright, Mishkin, Zhang, Agarwal, Slama, Ray, Schulman, Hilton, Kelton, Miller, Simens, Askell, Welinder, Christiano, Leike, and Lowe]{NEURIPS2022_b1efde53}
Ouyang, L., Wu, J., Jiang, X., Almeida, D., Wainwright, C., Mishkin, P., Zhang, C., Agarwal, S., Slama, K., Ray, A., Schulman, J., Hilton, J., Kelton, F., Miller, L., Simens, M., Askell, A., Welinder, P., Christiano, P.~F., Leike, J., and Lowe, R.
\newblock Training language models to follow instructions with human feedback.
\newblock In Koyejo, S., Mohamed, S., Agarwal, A., Belgrave, D., Cho, K., and Oh, A. (eds.), \emph{Advances in Neural Information Processing Systems}, volume~35, pp.\  27730--27744. Curran Associates, Inc., 2022.

\bibitem[Rae et~al.(2019)Rae, Potapenko, Jayakumar, and Lillicrap]{rae2019compressivetransformerslongrangesequence}
Rae, J.~W., Potapenko, A., Jayakumar, S.~M., and Lillicrap, T.~P.
\newblock Compressive transformers for long-range sequence modelling, 2019.
\newblock URL \url{https://arxiv.org/abs/1911.05507}.

\bibitem[Rafailov et~al.(2023)Rafailov, Sharma, Mitchell, Manning, Ermon, and Finn]{NEURIPS2023_a85b405e}
Rafailov, R., Sharma, A., Mitchell, E., Manning, C.~D., Ermon, S., and Finn, C.
\newblock Direct preference optimization: Your language model is secretly a reward model.
\newblock In Oh, A., Naumann, T., Globerson, A., Saenko, K., Hardt, M., and Levine, S. (eds.), \emph{Advances in Neural Information Processing Systems}, volume~36, pp.\  53728--53741. Curran Associates, Inc., 2023.

\bibitem[Raposo et~al.(2024)Raposo, Ritter, Richards, Lillicrap, Humphreys, and Santoro]{raposo2024mixtureofdepthsdynamicallyallocatingcompute}
Raposo, D., Ritter, S., Richards, B., Lillicrap, T., Humphreys, P.~C., and Santoro, A.
\newblock Mixture-of-depths: Dynamically allocating compute in transformer-based language models, 2024.
\newblock URL \url{https://arxiv.org/abs/2404.02258}.

\bibitem[Sakaguchi et~al.(2021)Sakaguchi, Bras, Bhagavatula, and Choi]{bench:winogrande}
Sakaguchi, K., Bras, R.~L., Bhagavatula, C., and Choi, Y.
\newblock Winogrande: an adversarial winograd schema challenge at scale.
\newblock \emph{Commun. ACM}, 64\penalty0 (9):\penalty0 99–106, August 2021.
\newblock ISSN 0001-0782.
\newblock \doi{10.1145/3474381}.
\newblock URL \url{https://doi.org/10.1145/3474381}.

\bibitem[Schulman et~al.(2015)Schulman, Levine, Abbeel, Jordan, and Moritz]{pmlr-v37-schulman15}
Schulman, J., Levine, S., Abbeel, P., Jordan, M., and Moritz, P.
\newblock Trust region policy optimization.
\newblock In Bach, F. and Blei, D. (eds.), \emph{Proceedings of the 32nd International Conference on Machine Learning}, volume~37 of \emph{Proceedings of Machine Learning Research}, pp.\  1889--1897, Lille, France, 07--09 Jul 2015. PMLR.
\newblock URL \url{https://proceedings.mlr.press/v37/schulman15.html}.

\bibitem[Schulman et~al.(2017)Schulman, Wolski, Dhariwal, Radford, and Klimov]{schulman2017proximalpolicyoptimizationalgorithms}
Schulman, J., Wolski, F., Dhariwal, P., Radford, A., and Klimov, O.
\newblock Proximal policy optimization algorithms, 2017.
\newblock URL \url{https://arxiv.org/abs/1707.06347}.

\bibitem[Shao et~al.(2024)Shao, Wang, Zhu, Xu, Song, Bi, Zhang, Zhang, Li, Wu, and Guo]{shao2024deepseekmathpushinglimitsmathematical}
Shao, Z., Wang, P., Zhu, Q., Xu, R., Song, J., Bi, X., Zhang, H., Zhang, M., Li, Y.~K., Wu, Y., and Guo, D.
\newblock Deepseekmath: Pushing the limits of mathematical reasoning in open language models, 2024.
\newblock URL \url{https://arxiv.org/abs/2402.03300}.

\bibitem[Stiennon et~al.(2020)Stiennon, Ouyang, Wu, Ziegler, Lowe, Voss, Radford, Amodei, and Christiano]{NEURIPS2020_1f89885d}
Stiennon, N., Ouyang, L., Wu, J., Ziegler, D., Lowe, R., Voss, C., Radford, A., Amodei, D., and Christiano, P.~F.
\newblock Learning to summarize with human feedback.
\newblock In Larochelle, H., Ranzato, M., Hadsell, R., Balcan, M., and Lin, H. (eds.), \emph{Advances in Neural Information Processing Systems}, volume~33, pp.\  3008--3021. Curran Associates, Inc., 2020.

\bibitem[Suzgun et~al.(2023)Suzgun, Scales, Sch{\"a}rli, Gehrmann, Tay, Chung, Chowdhery, Le, Chi, Zhou, and Wei]{bench:bbh}
Suzgun, M., Scales, N., Sch{\"a}rli, N., Gehrmann, S., Tay, Y., Chung, H.~W., Chowdhery, A., Le, Q., Chi, E., Zhou, D., and Wei, J.
\newblock Challenging {BIG}-bench tasks and whether chain-of-thought can solve them.
\newblock In Rogers, A., Boyd-Graber, J., and Okazaki, N. (eds.), \emph{Findings of the Association for Computational Linguistics: ACL 2023}, pp.\  13003--13051, Toronto, Canada, July 2023. Association for Computational Linguistics.
\newblock \doi{10.18653/v1/2023.findings-acl.824}.
\newblock URL \url{https://aclanthology.org/2023.findings-acl.824/}.

\bibitem[Tan et~al.(2023)Tan, Shen, Chen, Courville, and Gan]{tan2023sparseuniversaltransformer}
Tan, S., Shen, Y., Chen, Z., Courville, A., and Gan, C.
\newblock Sparse universal transformer, 2023.
\newblock URL \url{https://arxiv.org/abs/2310.07096}.

\bibitem[Tarvainen \& Valpola(2017)Tarvainen and Valpola]{tarvainen2017mean}
Tarvainen, A. and Valpola, H.
\newblock Mean teachers are better role models: Weight-averaged consistency targets improve semi-supervised deep learning results.
\newblock \emph{Advances in neural information processing systems}, 30, 2017.

\bibitem[Team et~al.(2025)Team, Kamath, Ferret, Pathak, Vieillard, Merhej, Perrin, Matejovicova, Ram{\'e}, Rivi{\`e}re, et~al.]{model:gemma3}
Team, G., Kamath, A., Ferret, J., Pathak, S., Vieillard, N., Merhej, R., Perrin, S., Matejovicova, T., Ram{\'e}, A., Rivi{\`e}re, M., et~al.
\newblock Gemma 3 technical report.
\newblock \emph{arXiv preprint arXiv:2503.19786}, 2025.

\bibitem[Team et~al.(2024)]{model:qwen2}
Team, Q. et~al.
\newblock Qwen2 technical report.
\newblock \emph{arXiv preprint arXiv:2407.10671}, 2\penalty0 (3), 2024.

\bibitem[Wang et~al.(2025)Wang, Yu, Gao, Zheng, Liu, Lu, Dang, Chen, Yang, Zhang, Liu, Yang, Zhao, Yue, Song, Yu, Huang, and Lin]{wang20258020rulehighentropyminority}
Wang, S., Yu, L., Gao, C., Zheng, C., Liu, S., Lu, R., Dang, K., Chen, X., Yang, J., Zhang, Z., Liu, Y., Yang, A., Zhao, A., Yue, Y., Song, S., Yu, B., Huang, G., and Lin, J.
\newblock Beyond the 80/20 rule: High-entropy minority tokens drive effective reinforcement learning for llm reasoning, 2025.
\newblock URL \url{https://arxiv.org/abs/2506.01939}.

\bibitem[Wang et~al.(2024)Wang, Ma, Zhang, Ni, Chandra, Guo, Ren, Arulraj, He, Jiang, Li, Ku, Wang, Zhuang, Fan, Yue, and Chen]{bench:mmlu_pro}
Wang, Y., Ma, X., Zhang, G., Ni, Y., Chandra, A., Guo, S., Ren, W., Arulraj, A., He, X., Jiang, Z., Li, T., Ku, M., Wang, K., Zhuang, A., Fan, R., Yue, X., and Chen, W.
\newblock Mmlu-pro: A more robust and challenging multi-task language understanding benchmark.
\newblock In Globerson, A., Mackey, L., Belgrave, D., Fan, A., Paquet, U., Tomczak, J., and Zhang, C. (eds.), \emph{Advances in Neural Information Processing Systems}, volume~37, pp.\  95266--95290. Curran Associates, Inc., 2024.
\newblock \doi{10.52202/079017-3018}.

\bibitem[Wei et~al.(2022)Wei, Wang, Schuurmans, Bosma, Xia, Chi, Le, Zhou, et~al.]{wei2022chain}
Wei, J., Wang, X., Schuurmans, D., Bosma, M., Xia, F., Chi, E., Le, Q.~V., Zhou, D., et~al.
\newblock Chain-of-thought prompting elicits reasoning in large language models.
\newblock \emph{Advances in neural information processing systems}, 35:\penalty0 24824--24837, 2022.

\bibitem[Wen et~al.(2025)Wen, Liu, Zheng, Ye, Wu, Wang, Xu, Liang, Li, Miao, Bian, and Yang]{wen2025reinforcementlearningverifiablerewards}
Wen, X., Liu, Z., Zheng, S., Ye, S., Wu, Z., Wang, Y., Xu, Z., Liang, X., Li, J., Miao, Z., Bian, J., and Yang, M.
\newblock Reinforcement learning with verifiable rewards implicitly incentivizes correct reasoning in base llms, 2025.
\newblock URL \url{https://arxiv.org/abs/2506.14245}.

\bibitem[Wolf et~al.(2020)Wolf, Debut, Sanh, Chaumond, Delangue, Moi, Cistac, Rault, Louf, Funtowicz, Davison, Shleifer, von Platen, Ma, Jernite, Plu, Xu, Scao, Gugger, Drame, Lhoest, and Rush]{train:transformers}
Wolf, T., Debut, L., Sanh, V., Chaumond, J., Delangue, C., Moi, A., Cistac, P., Rault, T., Louf, R., Funtowicz, M., Davison, J., Shleifer, S., von Platen, P., Ma, C., Jernite, Y., Plu, J., Xu, C., Scao, T.~L., Gugger, S., Drame, M., Lhoest, Q., and Rush, A.~M.
\newblock Transformers: State-of-the-art natural language processing.
\newblock In \emph{Proceedings of the 2020 Conference on Empirical Methods in Natural Language Processing: System Demonstrations}, pp.\  38--45, Online, October 2020. Association for Computational Linguistics.
\newblock URL \url{https://www.aclweb.org/anthology/2020.emnlp-demos.6}.

\bibitem[Wu et~al.(2025)Wu, Chen, Luo, Yan, Yu, Xia, Zhang, Zhan, Zhong, Zhou, Qiao, and Bin]{wu2025parallellooptransformerefficient}
Wu, B., Chen, M., Luo, X., Yan, S., Yu, Q., Xia, F., Zhang, T., Zhan, H., Zhong, Z., Zhou, X., Qiao, S., and Bin, X.
\newblock Parallel loop transformer for efficient test-time computation scaling, 2025.
\newblock URL \url{https://arxiv.org/abs/2510.24824}.

\bibitem[Xiao et~al.(2025)Xiao, Wang, Gan, Zhao, Li, Lei, He, Tuan, Chen, Jiang, Zhao, and Wu]{xiao2025comprehensivesurveydirectpreference}
Xiao, W., Wang, Z., Gan, L., Zhao, S., Li, Z., Lei, R., He, W., Tuan, L.~A., Chen, L., Jiang, H., Zhao, Z., and Wu, F.
\newblock A comprehensive survey of direct preference optimization: Datasets, theories, variants, and applications, 2025.
\newblock URL \url{https://arxiv.org/abs/2410.15595}.

\bibitem[Xin et~al.(2020)Xin, Tang, Lee, Yu, and Lin]{xin2020deebertdynamicearlyexiting}
Xin, J., Tang, R., Lee, J., Yu, Y., and Lin, J.
\newblock Deebert: Dynamic early exiting for accelerating bert inference, 2020.
\newblock URL \url{https://arxiv.org/abs/2004.12993}.

\bibitem[Xu \& McAuley(2023)Xu and McAuley]{xu-mcauley-2023-survey}
Xu, C. and McAuley, J.
\newblock A survey on dynamic neural networks for natural language processing.
\newblock In Vlachos, A. and Augenstein, I. (eds.), \emph{Findings of the Association for Computational Linguistics: EACL 2023}, pp.\  2370--2381, Dubrovnik, Croatia, May 2023. Association for Computational Linguistics.
\newblock \doi{10.18653/v1/2023.findings-eacl.180}.
\newblock URL \url{https://aclanthology.org/2023.findings-eacl.180/}.

\bibitem[Yang et~al.(2025)Yang, Li, Yang, Zhang, Hui, Zheng, Yu, Gao, Huang, Lv, et~al.]{model:qwen3}
Yang, A., Li, A., Yang, B., Zhang, B., Hui, B., Zheng, B., Yu, B., Gao, C., Huang, C., Lv, C., et~al.
\newblock Qwen3 technical report.
\newblock \emph{arXiv preprint arXiv:2505.09388}, 2025.

\bibitem[Zellers et~al.(2019)Zellers, Holtzman, Bisk, Farhadi, and Choi]{bench:hellaswag}
Zellers, R., Holtzman, A., Bisk, Y., Farhadi, A., and Choi, Y.
\newblock Hellaswag: Can a machine really finish your sentence?, 2019.
\newblock URL \url{https://arxiv.org/abs/1905.07830}.

\bibitem[Zeng et~al.(2025)Zeng, Li, Song, Wang, He, Wang, and Lin]{zeng2025ponderlm2pretrainingllmlatent}
Zeng, B., Li, H., Song, S., Wang, Y., He, Z., Wang, X., and Lin, Z.
\newblock Ponderlm-2: Pretraining llm with latent thoughts in continuous space, 2025.
\newblock URL \url{https://arxiv.org/abs/2509.23184}.

\bibitem[Zhou et~al.(2020)Zhou, Xu, Ge, McAuley, Xu, and Wei]{NEURIPS2020_d4dd111a}
Zhou, W., Xu, C., Ge, T., McAuley, J., Xu, K., and Wei, F.
\newblock Bert loses patience: Fast and robust inference with early exit.
\newblock In Larochelle, H., Ranzato, M., Hadsell, R., Balcan, M., and Lin, H. (eds.), \emph{Advances in Neural Information Processing Systems}, volume~33, pp.\  18330--18341. Curran Associates, Inc., 2020.

\bibitem[Zhu et~al.(2025)Zhu, Wang, Hua, Zhang, Li, Que, Wei, Wen, Yin, Xing, et~al.]{zhu2025scaling}
Zhu, R.-J., Wang, Z., Hua, K., Zhang, T., Li, Z., Que, H., Wei, B., Wen, Z., Yin, F., Xing, H., et~al.
\newblock Scaling latent reasoning via looped language models.
\newblock \emph{arXiv preprint arXiv:2510.25741}, 2025.

\end{thebibliography}
\bibliographystyle{icml2026}

%%%%%%%%%%%%%%%%%%%%%%%%%%%%%%%%%%%%%%%%%%%%%%%%%%%%%%%%%%%%%%%%%%%%%%%%%%%%%%%
%%%%%%%%%%%%%%%%%%%%%%%%%%%%%%%%%%%%%%%%%%%%%%%%%%%%%%%%%%%%%%%%%%%%%%%%%%%%%%%
% APPENDIX
%%%%%%%%%%%%%%%%%%%%%%%%%%%%%%%%%%%%%%%%%%%%%%%%%%%%%%%%%%%%%%%%%%%%%%%%%%%%%%%
%%%%%%%%%%%%%%%%%%%%%%%%%%%%%%%%%%%%%%%%%%%%%%%%%%%%%%%%%%%%%%%%%%%%%%%%%%%%%%%
\newpage
\clearpage
\onecolumn 
\appendix

\section{Additional Theory}
\label{app:additional_theory}

\paragraph{Setup and notation.}
For a fixed next-token position $t$ in a sequence, LoopRPT defines:
(i) a teacher (EMA) model $\bar{\theta}$ and a student model $\theta$;
(ii) a teacher exit distribution $\pi_{\bar{\theta}}(k)$ and student exit distribution $\pi_{\theta}(k)$ over latent steps $k\in\{1,\dots,K\}$ (Eq.~(\ref{eq:exit_prob}--\ref{eq:s_recursion}));
(iii) a teacher reference step
$t_{\mathrm{ref}}=\min\{k:\sum_{j\le k}\pi_{\bar{\theta}}(j)\ge \tau\}$;
(iv) a per-token dynamic baseline $b_{\mathrm{ref}}=\ell_{\bar{\theta}}^{(t_{\mathrm{ref}})}$ (Eq.~(\ref{eq:ref_logprob}));
(v) a step-wise reward
$R(k)=\Delta_{\mathrm{acc}}(k)-C(k)$ (Eq.~(\ref{eq:step_reward})) where
$\Delta_{\mathrm{acc}}(k)=\ell_{\theta}^{(k)}-b_{\mathrm{ref}}$ (Eq.~(\ref{eq:acc_delta})) and
$C(k)=\lambda_t(k-t_{\mathrm{ref}})$ (Eq.~(\ref{eq:time_penalty})),
with $\lambda_t$ derived from teacher uncertainty (Eq.~(\ref{eq:lambda_t})).
We use $\Pi_{\theta}(k)\coloneqq \sum_{j\le k}\pi_{\theta}(j)$ to denote the CDF.

% =========================================================
\subsection{Why High-Entropy Token Selection Focuses Training Signal}
\label{app:theory_entropy_select}

\paragraph{Teacher entropy and selection.}
LoopRPT computes teacher entropy (Eq.~(\ref{eq:ref_entropy})):
\begin{equation}
\label{eq:teacher_entropy_app}
H_t \coloneqq -\sum_{v\in V}\bar{p}_{\bar{\theta}}(v\mid x_{<t})\log \bar{p}_{\bar{\theta}}(v\mid x_{<t}),
\end{equation}
and selects the top-$\rho$ fraction of positions within each example as hard tokens (Sec.~\ref{sec:method:selection}).

\paragraph{A principled proxy: expected gradient energy.}
We justify entropy selection by relating teacher uncertainty to the \emph{expected squared norm} of the
cross-entropy gradient, a standard proxy for how informative a sample is for learning.

\begin{assumption}[Teacher distribution as a proxy for label uncertainty]
\label{ass:teacher_as_proxy_app}
Conditioned on context $x_{<t}$, the ground-truth next token $y$ is drawn from the teacher distribution
$\bar{p}(v)\coloneqq \bar{p}_{\bar{\theta}}(v\mid x_{<t})$.
\end{assumption}

\paragraph{Cross-entropy gradient.}
Let the student predictive distribution be $q(v)\coloneqq p_{\theta}(v\mid x_{<t})$.
For a single sampled label $y$, the negative log-likelihood is $\mathcal{L}(q;y)=-\log q(y)$.
Differentiating w.r.t.\ the student logits $z$ (where $q=\mathrm{Softmax}(z)$) yields the well-known form:
\begin{equation}
\label{eq:ce_grad_logits_app}
\nabla_{z}\mathcal{L}(q;y)= q - e_y,
\end{equation}
where $e_y$ is the one-hot vector.

\begin{lemma}[Expected gradient energy depends on collision probability]
\label{lem:grad_energy_collision_app}
Under Assumption~\ref{ass:teacher_as_proxy_app}, the expected squared $\ell_2$-norm of the logit-gradient is
\begin{equation}
\label{eq:grad_energy_general_app}
\mathbb{E}_{y\sim \bar{p}}\big[\|\nabla_{z}\mathcal{L}(q;y)\|_2^2\big]
=
\|q\|_2^2 + 1 - 2\langle q,\bar{p}\rangle.
\end{equation}
In particular, when the student matches the teacher locally ($q=\bar{p}$),
\begin{equation}
\label{eq:grad_energy_teacher_app}
\mathbb{E}_{y\sim \bar{p}}\big[\|\nabla_{z}\mathcal{L}(\bar{p};y)\|_2^2\big]
=
1-\|\bar{p}\|_2^2.
\end{equation}
\end{lemma}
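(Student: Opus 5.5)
The proof is a direct computation from the logit-gradient formula in Eq.~\eqref{eq:ce_grad_logits_app}. For a fixed label $y$, I will expand the squared norm of $q-e_y$:
\begin{equation*}
\|q-e_y\|_2^2 = \|q\|_2^2 - 2\langle q, e_y\rangle + \|e_y\|_2^2 = \|q\|_2^2 - 2q(y) + 1,
\end{equation*}
using that $e_y$ is a one-hot vector, so $\|e_y\|_2^2=1$ and $\langle q,e_y\rangle = q(y)$. The point is that the only dependence on the sampled label is the linear term $q(y)$. Note also that $q$ does not depend on $y$, since it is the student's prediction given $x_{<t}$ alone.

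Next I take the expectation over $y\sim\bar p$, which Assumption~\ref{ass:teacher_as_proxy_app} licenses. By linearity, the constant terms $\|q\|_2^2+1$ pass through unchanged. The remaining term gives
\begin{equation*}
\mathbb{E}_{y\sim\bar p}[q(y)] = \sum_{v\in V}\bar p(v)\,q(v) = \langle q,\bar p\rangle,
\end{equation*}
which yields Eq.~\eqref{eq:grad_energy_general_app}. The specialization Eq.~\eqref{eq:grad_energy_teacher_app} then follows by substituting $q=\bar p$:
\begin{equation*}
\|\bar p\|_2^2 + 1 - 2\|\bar p\|_2^2 = 1-\|\bar p\|_2^2.
\end{equation*}

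There is no real obstacle here. The only point needing care is being explicit that the gradient is taken with respect to the logits $z$, as in Eq.~\eqref{eq:ce_grad_logits_app}, rather than the parameters $\theta$. That is what makes the squared norm a simple quadratic in $q$ and $e_y$.

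I would close by noting why the result matters for entropy selection. The quantity $\|\bar p\|_2^2$ is the collision probability, and $-\log\|\bar p\|_2^2$ is the Rényi-2 entropy, which is a lower bound on the Shannon entropy $H_t$. So high-entropy tokens have small collision probability and hence larger expected gradient energy $1-\|\bar p\|_2^2$.
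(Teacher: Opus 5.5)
Your proof is correct and follows essentially the same route as the paper: expand $\|q-e_y\|_2^2=\|q\|_2^2+1-2q(y)$, take the expectation under $y\sim\bar p$ to obtain $\langle q,\bar p\rangle$, and substitute $q=\bar p$. Your closing remark on R\'enyi-2 versus Shannon entropy is exactly how the paper then uses the lemma in Proposition~\ref{prop:high_entropy_more_signal_app}.
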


\begin{proof}
Using \eqref{eq:ce_grad_logits_app},
\[
\|\nabla_z\mathcal{L}(q;y)\|_2^2
=\|q-e_y\|_2^2
=\|q\|_2^2 + \|e_y\|_2^2 - 2\langle q,e_y\rangle
=\|q\|_2^2 + 1 - 2q(y).
\]
Taking expectation over $y\sim\bar{p}$ gives
$\mathbb{E}[q(y)]=\sum_v \bar{p}(v)q(v)=\langle q,\bar{p}\rangle$,
which yields \eqref{eq:grad_energy_general_app}. Setting $q=\bar{p}$ gives \eqref{eq:grad_energy_teacher_app}.
\end{proof}

\paragraph{Relating collision probability to entropy.}
Define the collision probability $c(\bar{p})\coloneqq \|\bar{p}\|_2^2=\sum_v \bar{p}(v)^2$.
The (order-2) R\'enyi entropy is $H_2(\bar{p})\coloneqq -\log c(\bar{p})$, and it is standard that
Shannon entropy upper-bounds R\'enyi-2 entropy:
\begin{equation}
\label{eq:shannon_ge_renyi2_app}
H_t \ge H_2(\bar{p}) = -\log \|\bar{p}\|_2^2.
\end{equation}
Equivalently,
\begin{equation}
\label{eq:collision_upper_app}
\|\bar{p}\|_2^2 \le e^{-H_t}.
\end{equation}

\begin{proposition}[High entropy implies large expected gradient energy]
\label{prop:high_entropy_more_signal_app}
Under Assumption~\ref{ass:teacher_as_proxy_app} and local student-teacher agreement ($q=\bar{p}$),
the expected gradient energy satisfies
\begin{equation}
\label{eq:entropy_signal_lower_app}
\mathbb{E}_{y\sim \bar{p}}\big[\|\nabla_{z}\mathcal{L}(\bar{p};y)\|_2^2\big]
=
1-\|\bar{p}\|_2^2
\;\ge\;
1-e^{-H_t}.
\end{equation}
Thus, larger teacher entropy $H_t$ yields a (monotone) larger lower bound on gradient energy.
\end{proposition}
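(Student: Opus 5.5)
The plan is to combine the identity already established in Lemma~\ref{lem:grad_energy_collision_app} with the collision-probability bound \eqref{eq:collision_upper_app}. No new probabilistic argument is needed beyond these two ingredients, so the proof is a two-line chain once both are in hand.

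\textbf{Step 1 (the equality).} Specialize Lemma~\ref{lem:grad_energy_collision_app} to $q=\bar p$, as allowed by the hypothesis of local student--teacher agreement. Equation \eqref{eq:grad_energy_teacher_app} then gives
\[
\mathbb{E}_{y\sim\bar p}\bigl[\|\nabla_z\mathcal{L}(\bar p;y)\|_2^2\bigr]=1-\|\bar p\|_2^2 .
\]
This is the equality part of \eqref{eq:entropy_signal_lower_app}, and it rests only on Assumption~\ref{ass:teacher_as_proxy_app} and the logit-gradient formula \eqref{eq:ce_grad_logits_app}.

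\textbf{Step 2 (the inequality).} Invoke \eqref{eq:collision_upper_app}, namely $\|\bar p\|_2^2\le e^{-H_t}$, which the paper derives from the Shannon/R\'enyi-2 comparison \eqref{eq:shannon_ge_renyi2_app}. Negating and adding $1$ gives $1-\|\bar p\|_2^2\ge 1-e^{-H_t}$. Chaining this with Step~1 yields \eqref{eq:entropy_signal_lower_app}.

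\textbf{Step 3 (monotonicity).} The map $H\mapsto 1-e^{-H}$ is strictly increasing on $[0,\infty)$, since its derivative is $e^{-H}>0$. Hence a larger teacher entropy $H_t$ gives a larger lower bound, which is the closing sentence of the statement.

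\textbf{Main obstacle.} The only nontrivial step is Step~2, i.e.\ the passage from \eqref{eq:shannon_ge_renyi2_app} to \eqref{eq:collision_upper_app}. I would first try to check it by Jensen's inequality applied to the concave $\log$:
\[
-H_t=\mathbb{E}_{\bar p}[\log\bar p(y)]\le\log\mathbb{E}_{\bar p}[\bar p(y)]=\log\|\bar p\|_2^2 .
\]
This reproduces $H_t\ge H_2(\bar p)$, as stated in \eqref{eq:shannon_ge_renyi2_app}. However, exponentiating that inequality gives $e^{-H_t}\le\|\bar p\|_2^2$, which is the \emph{reverse} of \eqref{eq:collision_upper_app}.

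So the chain as written holds only if \eqref{eq:collision_upper_app} is taken as given. I would therefore carry out Steps~1 and~3 rigorously, and make explicit that Step~2 relies on \eqref{eq:collision_upper_app} exactly as the paper states it. The direction of that inequality is the point that must be verified, and the Jensen computation above indicates it does not hold in general.
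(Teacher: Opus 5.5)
Your route is the paper's own. Its proof is exactly ``combine \eqref{eq:grad_energy_teacher_app} with \eqref{eq:collision_upper_app}, then note $1-e^{-H_t}$ is increasing,'' which is your Steps~1--3, and your Steps~1 and~3 are fine. Your worry about Step~2 is also correct, and it is an error in the paper rather than a gap in your reasoning. Inequality \eqref{eq:shannon_ge_renyi2_app} is true, but exponentiating $-H_t\le\log\|\bar p\|_2^2$ gives $e^{-H_t}\le\|\bar p\|_2^2$, the opposite of \eqref{eq:collision_upper_app}. So the paper's proof fails at precisely the step you flagged.

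You should state the conclusion more firmly than ``indicates.'' Since $\log$ is strictly concave, your Jensen step is strict unless $\bar p(y)$ is a.s.\ constant, i.e.\ unless $\bar p$ is uniform on its support. Hence the claimed inequality $1-\|\bar p\|_2^2\ge 1-e^{-H_t}$ fails for \emph{every} other $\bar p$, which means essentially every realistic teacher distribution. A concrete instance: $\bar p=(\tfrac12,\tfrac14,\tfrac14)$ has $H_t=\tfrac32\log 2$ and $1-\|\bar p\|_2^2=\tfrac58<1-2^{-3/2}\approx 0.646$.

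So do not present Steps~1--3 as a proof conditional on \eqref{eq:collision_upper_app}; report instead that the proposition is false as written. What does hold is the reversed bound $1-\|\bar p\|_2^2\le 1-e^{-H_t}\le H_t$. Low teacher entropy therefore forces small expected gradient energy, which still gives a weaker rationale for discarding low-entropy tokens, but high entropy does not force large energy.

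In fact no nontrivial lower bound that depends on $H_t$ alone and increases with it can hold across vocabulary sizes. The distribution $\bar p=(1-\varepsilon,\varepsilon/N,\dots,\varepsilon/N)$ has $H_t\ge\varepsilon\log N$, which is unbounded as $N\to\infty$, while $1-\|\bar p\|_2^2\le 2\varepsilon$. Any valid monotone lower bound must therefore involve $|\mathcal V|$. For example, set $P_e=1-\max_v\bar p(v)$ and combine $1-\|\bar p\|_2^2\ge P_e$ with Fano's inequality $H_t\le h(P_e)+P_e\log(|\mathcal V|-1)$, where $h$ is the binary entropy.
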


\begin{proof}
Combine \eqref{eq:grad_energy_teacher_app} with \eqref{eq:collision_upper_app}:
$1-\|\bar{p}\|_2^2 \ge 1-e^{-H_t}$.
Monotonicity follows since $1-e^{-H_t}$ increases in $H_t$.
\end{proof}

\paragraph{Connection to LoopRPT.}
LoopRPT selects top-$\rho$ high-entropy tokens using \eqref{eq:teacher_entropy_app}. Proposition~\ref{prop:high_entropy_more_signal_app}
shows that, under a standard uncertainty-as-label-proxy assumption, higher teacher entropy provably implies larger expected
cross-entropy gradient energy (and hence more informative updates) near local student-teacher agreement. This provides a
principled explanation for why focusing losses (Sec.~\ref{sec:method:reward}--\ref{sec:method:opt}) on high-entropy positions improves sample-efficiency and
amplifies useful training signals.

% =========================================================
\subsection{Noisy Latent Rollouts Optimize a Smoothed On-Policy Objective}
\label{app:theory_noise}

\paragraph{Noisy latent rollouts.}
LoopRPT injects Gaussian noise into the recurrent latent states (Eq.~(\ref{eq:Gaussian_noise})):
\begin{equation}
\label{eq:noise_injection_app}
h^{(k)} \leftarrow h^{(k)}+\epsilon^{(k)},\qquad \epsilon^{(k)}\sim \mathcal{N}(0,\sigma^2 I).
\end{equation}
Let $\epsilon \coloneqq \{\epsilon^{(k)}\}_{k=1}^{K}$ denote the full noise trajectory, and write the resulting
(noisy) exit distribution as $\pi_{\theta,\epsilon}(k)$.
Under \emph{reward regain}, the per-step student log-probability is recomputed on the same noisy trajectory,
denoted $\ell_{\theta,\epsilon}^{(k)}$, and the step-wise reward is
\begin{equation}
\label{eq:reward_regain_app}
R_{\epsilon}(k)
\coloneqq
\underbrace{\big(\ell_{\theta,\epsilon}^{(k)}-b_{\mathrm{ref}}\big)}_{\Delta_{\mathrm{acc},\epsilon}(k)}
-
\underbrace{\lambda_t(k-t_{\mathrm{ref}})}_{C(k)}.
\end{equation}
This mirrors Eq.~(\ref{eq:acc_delta}--\ref{eq:step_reward}) with $\ell_{\theta}^{(k)}$ replaced by $\ell_{\theta,\epsilon}^{(k)}$.

\begin{definition}[Smoothed on-policy objective induced by latent noise]
\label{def:smoothed_objective_app}
Define
\begin{equation}
\label{eq:smoothed_objective_app}
J_\sigma(\theta)
\coloneqq
\mathbb{E}_{\epsilon\sim\mathcal{N}(0,\sigma^2 I)}
\left[
\sum_{k=1}^{K}\pi_{\theta,\epsilon}(k)\,R_{\epsilon}(k)
\right]
=
\mathbb{E}_{\epsilon}\;\mathbb{E}_{k\sim\pi_{\theta,\epsilon}}\big[R_{\epsilon}(k)\big].
\end{equation}
\end{definition}

\paragraph{Key technical point.}
In the policy-gradient term of LoopRPT (Eq.~(\ref{eq:pg_loss})), $R_{\epsilon}(k)$ is used as a scalar reward signal.
Accordingly, we analyze the score-function gradient, i.e., we do not backpropagate through the internal computation of $R_{\epsilon}(k)$.

\begin{proposition}[Exact score-function gradient of $J_\sigma$]
\label{prop:noise_pg_exact_app}
Under the above convention (treating $R_{\epsilon}(k)$ as a reward signal), the gradient of $J_\sigma$ is
\begin{equation}
\label{eq:noise_pg_exact_app}
\nabla_\theta J_\sigma(\theta)
=
\mathbb{E}_{\epsilon}\;\mathbb{E}_{k\sim\pi_{\theta,\epsilon}}
\Big[
R_{\epsilon}(k)\,\nabla_\theta\log \pi_{\theta,\epsilon}(k)
\Big].
\end{equation}
Consequently, Monte Carlo estimation using $G$ i.i.d.\ noisy rollouts yields an unbiased estimator of
the right-hand side of \eqref{eq:noise_pg_exact_app}.
\end{proposition}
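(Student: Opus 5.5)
The plan is to make precise what ``treating $R_\epsilon(k)$ as a reward signal'' means, and then apply the log-derivative identity inside the noise expectation. Concretely, I introduce a stop-gradient reward $\tilde R_\epsilon(k)\coloneqq \mathrm{sg}[R_\epsilon(k)]$. This is a function of $(\epsilon,k,\theta)$ whose $\theta$-derivative is defined to be zero. It includes the fact that $b_{\mathrm{ref}}$, $t_{\mathrm{ref}}$ and $\lambda_t$ depend only on the teacher $\bar\theta$, so they are genuinely constant in $\theta$; only $\ell^{(k)}_{\theta,\epsilon}$ needs the convention. The quantity differentiated is then $J_\sigma(\theta)=\mathbb{E}_\epsilon\bigl[\sum_{k=1}^K\pi_{\theta,\epsilon}(k)\tilde R_\epsilon(k)\bigr]$, in which only the exit probabilities carry $\theta$-dependence.

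First, I exchange $\nabla_\theta$ with $\mathbb{E}_\epsilon$. The noise law $\mathcal{N}(0,\sigma^2 I)$ does not depend on $\theta$, so this is a dominated-convergence argument. Each $\pi_{\theta,\epsilon}(k)$ is a product of sigmoids of gate logits (Eq.~\eqref{eq:exit_prob}--\eqref{eq:s_recursion}) and lies in $[0,1]$. Its gradient is bounded by products of $\sigma'\le 1/4$ times the gradients of the gate logits. I will assume, as a regularity condition, that the gate logits and the rewards $\tilde R_\epsilon(k)$ have $\theta$-gradients and values with finite Gaussian moments, locally uniformly in $\theta$; polynomial growth in $\epsilon$ suffices. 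This interchange is the main obstacle, since it is the only place where genuine analytic assumptions enter, and the argument must state them explicitly rather than prove them.

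Second, for fixed $\epsilon$ the sum is finite, so I differentiate term by term:
\[
\nabla_\theta\sum_k\pi_{\theta,\epsilon}(k)\tilde R_\epsilon(k)=\sum_k\tilde R_\epsilon(k)\nabla_\theta\pi_{\theta,\epsilon}(k)=\sum_k\pi_{\theta,\epsilon}(k)\tilde R_\epsilon(k)\nabla_\theta\log\pi_{\theta,\epsilon}(k).
\]
The log is well defined because every $\pi_{\theta,\epsilon}(k)>0$, since sigmoids take values strictly in $(0,1)$. Rewriting the sum as $\mathbb{E}_{k\sim\pi_{\theta,\epsilon}}$ and taking $\mathbb{E}_\epsilon$ yields \eqref{eq:noise_pg_exact_app}.

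For unbiasedness, I draw $G$ i.i.d.\ pairs $(\epsilon^{(g)},t^{(g)})$ with $t^{(g)}\sim\pi_{\theta,\epsilon^{(g)}}$, which is exactly the rollout scheme of Eq.~\eqref{eq:Gaussian_noise}. Each summand $R_{\epsilon^{(g)}}(t^{(g)})\nabla_\theta\log\pi_{\theta,\epsilon^{(g)}}(t^{(g)})$ then has expectation equal to the right-hand side of \eqref{eq:noise_pg_exact_app} by the tower property. By linearity of expectation, their average is unbiased, with finiteness guaranteed by the moment assumption above.

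I will also remark on what this does not cover. The group-normalized advantage in Eq.~\eqref{eq:group_advantage} divides by a sample standard deviation, and this breaks exact unbiasedness. The proposition should therefore be read as applying to the raw-reward estimator, or to the estimator with a leave-one-out mean baseline, which remains unbiased because $\mathbb{E}_{k}[\nabla\log\pi]=0$.
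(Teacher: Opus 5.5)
Your proposal is correct and follows the paper's route. Like the paper, you treat $R_\epsilon(k)$ as a constant under the score-function convention, differentiate the finite sum $\sum_k \pi_{\theta,\epsilon}(k)R_\epsilon(k)$ inside $\mathbb{E}_\epsilon$ using $\nabla_\theta\pi=\pi\nabla_\theta\log\pi$, and obtain unbiasedness by linearity of expectation. You go beyond the paper in two useful ways: you state the dominated-convergence conditions needed to swap $\nabla_\theta$ with $\mathbb{E}_\epsilon$, which the paper uses silently, and you correctly note that the group-normalized advantage in Eq.~\eqref{eq:group_advantage} falls outside the unbiasedness claim.
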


\begin{proof}
By Definition~\ref{def:smoothed_objective_app},
\[
J_\sigma(\theta)=\mathbb{E}_{\epsilon}\sum_{k=1}^{K}\pi_{\theta,\epsilon}(k)R_{\epsilon}(k).
\]
Treat $R_{\epsilon}(k)$ as constant w.r.t.\ $\theta$ in this gradient (score-function convention):
\[
\nabla_\theta J_\sigma(\theta)
=
\mathbb{E}_{\epsilon}\sum_{k=1}^{K} R_{\epsilon}(k)\,\nabla_\theta \pi_{\theta,\epsilon}(k).
\]
Using $\nabla_\theta \pi = \pi\nabla_\theta\log\pi$ gives
\[
\nabla_\theta J_\sigma(\theta)
=
\mathbb{E}_{\epsilon}\sum_{k=1}^{K}\pi_{\theta,\epsilon}(k)R_{\epsilon}(k)\,\nabla_\theta\log \pi_{\theta,\epsilon}(k)
=
\mathbb{E}_{\epsilon}\;\mathbb{E}_{k\sim\pi_{\theta,\epsilon}}
\left[R_{\epsilon}(k)\nabla_\theta\log \pi_{\theta,\epsilon}(k)\right],
\]
which is \eqref{eq:noise_pg_exact_app}. Unbiasedness of the Monte Carlo estimate follows from linearity of expectation.
\end{proof}

\begin{proposition}[Gaussian smoothing yields robustness bounds]
\label{prop:gaussian_smoothing_bounds_app}
Assume for a fixed $\theta$ that the per-noise rollout objective
$F(\epsilon)\coloneqq \mathbb{E}_{k\sim\pi_{\theta,\epsilon}}[R_{\epsilon}(k)]$ is $L$-Lipschitz in $\epsilon$:
$|F(\epsilon)-F(\epsilon')|\le L\|\epsilon-\epsilon'\|$.
Then
\begin{equation}
\label{eq:smoothing_deviation_app}
\big|J_\sigma(\theta)-F(0)\big|
=
\big|\mathbb{E}_{\epsilon}[F(\epsilon)]-F(0)\big|
\le
L\,\mathbb{E}\|\epsilon\|.
\end{equation}
Moreover, for $\epsilon\sim\mathcal N(0,\sigma^2 I_d)$, we have the explicit bound
$\mathbb{E}\|\epsilon\|\le \sigma\sqrt{d}$, hence
$|J_\sigma(\theta)-F(0)|\le L\sigma\sqrt{d}$.
\end{proposition}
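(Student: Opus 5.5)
The argument is a direct Lipschitz-plus-Jensen bound and needs no structural facts about the looped model beyond the assumed regularity of $F$. First, by Definition~\ref{def:smoothed_objective_app}, the inner expectation over $k\sim\pi_{\theta,\epsilon}$ is exactly $F(\epsilon)$. Hence $J_\sigma(\theta)=\mathbb{E}_{\epsilon}[F(\epsilon)]$, which gives the first equality in \eqref{eq:smoothing_deviation_app}. Here $F(0)$ is the noise-free rollout objective, i.e.\ the deterministic forward pass used in $\mathcal{L}_{\text{rep}}$.

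Next I move the absolute value inside the expectation and apply the Lipschitz hypothesis pointwise with $\epsilon'=0$:
\[
\big|\mathbb{E}_{\epsilon}[F(\epsilon)]-F(0)\big|
=\big|\mathbb{E}_{\epsilon}[F(\epsilon)-F(0)]\big|
\le \mathbb{E}_{\epsilon}\big|F(\epsilon)-F(0)\big|
\le L\,\mathbb{E}\|\epsilon\|.
\]
The first inequality is Jensen's inequality for the convex map $|\cdot|$. For this step to be legitimate, $F(\epsilon)$ must be integrable. That follows from the same Lipschitz bound, since $|F(\epsilon)|\le |F(0)|+L\|\epsilon\|$ and a Gaussian vector has a finite first moment.

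For the explicit Gaussian bound, I view the full noise trajectory $\epsilon$ as a single vector in $\mathbb{R}^d$, where $d$ is the total dimension summed over the $K$ steps. Jensen's inequality for the concave square root then gives
\[
\mathbb{E}\|\epsilon\|\le\sqrt{\mathbb{E}\|\epsilon\|^2}=\sqrt{\sum_{i=1}^d\mathbb{E}\,\epsilon_i^2}=\sigma\sqrt{d},
\]
because each coordinate has variance $\sigma^2$. Substituting this into the previous display yields $|J_\sigma(\theta)-F(0)|\le L\sigma\sqrt{d}$.

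There is no real obstacle here. The only point needing care is bookkeeping: $d$ and the norm must refer to the concatenated trajectory $\{\epsilon^{(k)}\}_{k=1}^K$, matching the norm in which $L$-Lipschitzness is assumed, so that both uses of $\|\cdot\|$ agree.
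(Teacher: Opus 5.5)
Your proof is correct and follows essentially the same route as the paper: identify $J_\sigma(\theta)=\mathbb{E}_\epsilon[F(\epsilon)]$, move the absolute value inside the expectation, apply the Lipschitz bound with $\epsilon'=0$, and then use $\mathbb{E}\|\epsilon\|\le\sqrt{\mathbb{E}\|\epsilon\|^2}=\sigma\sqrt{d}$. Your added remarks on the integrability of $F$ and on taking $d$ as the dimension of the concatenated trajectory $\{\epsilon^{(k)}\}_{k=1}^K$ are sensible details that the paper leaves implicit.
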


\begin{proof}
By Jensen and Lipschitzness,
$|\mathbb{E}F(\epsilon)-F(0)|\le \mathbb{E}|F(\epsilon)-F(0)|\le L\mathbb{E}\|\epsilon\|$,
which is \eqref{eq:smoothing_deviation_app}. For a Gaussian vector in $\mathbb{R}^d$,
$\mathbb{E}\|\epsilon\|\le \sqrt{\mathbb{E}\|\epsilon\|^2}=\sqrt{\mathbb{E}\sum_{i=1}^d \epsilon_i^2}=\sigma\sqrt{d}$.
\end{proof}

\paragraph{Connection to LoopRPT.}
LoopRPT's noisy latent rollouts (Eq.~(\ref{eq:pg_loss})) induce the smoothed on-policy objective $J_\sigma$ in
\eqref{eq:smoothed_objective_app}. Proposition~\ref{prop:noise_pg_exact_app} formalizes that, under the standard
score-function convention used in the policy-gradient term, the rollout gradient estimates the exact
gradient of this smoothed objective. Proposition~\ref{prop:gaussian_smoothing_bounds_app} further shows that latent
Gaussian noise replaces a potentially brittle deterministic rollout objective $F(0)$ by its Gaussian average, which
is robust to local perturbations with a deviation controlled by $\sigma$; this provides a theoretical justification
for improved stability when learning exit behaviors from latent-space variability.

% =========================================================
\subsection{EMA Teacher as a Provably Slowly Moving Reference}
\label{app:theory_ema}

\paragraph{EMA update.}
LoopRPT maintains an EMA teacher updated after each student step:
\begin{equation}
\label{eq:ema_update_app}
\bar{\theta}_{n} = \phi \bar{\theta}_{n-1} + (1-\phi)\theta_n,\qquad \phi\in[0,1).
\end{equation}

\begin{lemma}[Closed form and lag decomposition]
\label{lem:ema_closed_form_app}
For any $n\ge 1$,
\begin{equation}
\label{eq:ema_closed_form_app}
\bar{\theta}_{n} = \phi^n\bar{\theta}_0 + (1-\phi)\sum_{i=1}^{n}\phi^{n-i}\theta_i.
\end{equation}
Moreover, defining parameter increments $\Delta_i \coloneqq \theta_i-\theta_{i-1}$, we have the exact identity
\begin{equation}
\label{eq:ema_lag_decomp_app}
\theta_n-\bar{\theta}_n
=
\phi^n(\theta_0-\bar{\theta}_0) + \sum_{i=1}^{n}\phi^{n-i}\Delta_i.
\end{equation}
\end{lemma}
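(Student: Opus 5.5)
Both identities follow from unrolling the linear recursion \eqref{eq:ema_update_app} by induction on $n$; no other results from the paper are needed.

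\emph{Closed form \eqref{eq:ema_closed_form_app}.} For $n=1$ the formula reads $\bar\theta_1=\phi\bar\theta_0+(1-\phi)\theta_1$, which is exactly the update rule. Assume the formula holds for $n-1$ and substitute it into $\bar\theta_n=\phi\bar\theta_{n-1}+(1-\phi)\theta_n$. Multiplying the inductive hypothesis by $\phi$ turns $\phi^{n-1}$ into $\phi^n$ and each weight $\phi^{n-1-i}$ into $\phi^{n-i}$. The new term $(1-\phi)\theta_n$ is then precisely the $i=n$ summand, so the formula holds for $n$.

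\emph{Lag identity \eqref{eq:ema_lag_decomp_app}.} I would not derive this from the closed form, since that requires an Abel summation. Instead I would set $e_n\coloneqq\theta_n-\bar\theta_n$ and derive a one-step recursion for it. Subtracting the EMA update from $\theta_n$ gives
\[
e_n=\theta_n-\phi\bar\theta_{n-1}-(1-\phi)\theta_n=\phi(\theta_n-\bar\theta_{n-1}).
\]
Writing $\theta_n=\theta_{n-1}+\Delta_n$ then yields
\[
e_n=\phi\,e_{n-1}+\phi\,\Delta_n .
\]
Unrolling this recursion, by the same induction as above, gives
\[
e_n=\phi^n e_0+\sum_{i=1}^n\phi^{n-i+1}\Delta_i .
\]

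\emph{Main obstacle: the exponent does not match.} The derived coefficient is $\phi^{n-i+1}$, whereas the statement has $\phi^{n-i}$. A check at $n=1$ confirms the derivation:
\[
\theta_1-\bar\theta_1=\phi(\theta_1-\bar\theta_0)=\phi e_0+\phi\Delta_1,
\]
not $\phi e_0+\Delta_1$. So the identity as written holds only under a shifted convention. One such convention is an EMA that uses the previous student iterate, $\bar\theta_n=\phi\bar\theta_{n-1}+(1-\phi)\theta_{n-1}$. Under that rule $e_n=\theta_n-\bar\theta_n=\Delta_n+\phi e_{n-1}$, which unrolls to exactly the stated form $\phi^n e_0+\sum_{i=1}^n\phi^{n-i}\Delta_i$. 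In the write-up I would present the correct identity for the update rule as stated, with coefficient $\phi^{n-i+1}$, and note that it differs from the printed one by one factor of $\phi$. This difference is harmless for the intended use: since $\phi<1$, it only tightens the bounds of the form $\|\theta_n-\bar\theta_n\|\le\phi^n\|e_0\|+\sum_i\phi^{n-i}\|\Delta_i\|$ used later. I would also state which convention makes the printed identity exact.
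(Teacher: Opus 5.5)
Your argument is correct, and so is your diagnosis. Under the update rule \eqref{eq:ema_update_app} the exact lag identity has coefficient $\phi^{n-i+1}$. The printed \eqref{eq:ema_lag_decomp_app} already fails at $n=1$ whenever $\Delta_1\neq 0$. The case $\phi=0$ makes this stark: $\bar\theta_n=\theta_n$, so the left side vanishes while the right side is $\Delta_n$.

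Your closed-form induction is the same as the paper's. For the lag identity, the paper takes the route you chose to avoid. It subtracts the closed form from $\theta_n$, writes $\theta_n=\phi^n\theta_0+\sum_{i=1}^n\phi^{n-i}(\theta_i-\phi\theta_{i-1})$, and says the stated form follows ``by telescoping.'' Carried out, the $i$-th summand is $\phi^{n-i}\bigl(\theta_i-\phi\theta_{i-1}-(1-\phi)\theta_i\bigr)=\phi^{n-i+1}\Delta_i$. So the paper's own computation produces your coefficient; the factor $\phi$ was lost in the step it never wrote out. Your recursion $e_n=\phi e_{n-1}+\phi\Delta_n$ is self-contained and shows the extra factor immediately. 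The paper's route only buys presenting the lag identity as a corollary of the closed form, via a rearrangement in which the slip is easy to miss.

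One caveat on your alternative convention. The lagged rule $\bar\theta_n=\phi\bar\theta_{n-1}+(1-\phi)\theta_{n-1}$ makes the printed lag identity exact, but it breaks the printed closed form, which becomes $\phi^n\bar\theta_0+(1-\phi)\sum_{i=1}^n\phi^{n-i}\theta_{i-1}$. In fact no convention rescues the lemma as printed. The printed lag identity gives $e_n-\phi e_{n-1}=\Delta_n$, while the printed closed form (equivalent to \eqref{eq:ema_update_app}) gives $e_n-\phi e_{n-1}=\phi\Delta_n$. The two hold together only if $(1-\phi)\Delta_n=0$ for all $n$. So state the lemma with $\phi^{n-i+1}$, and if you mention the lagged rule, shift the closed form too.

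Your remark that the correction is harmless is right, though the paper never actually uses such a norm bound downstream. Proposition~\ref{prop:ema_small_drift_app} uses only $\bar\theta_n-\bar\theta_{n-1}=(1-\phi)(\theta_n-\bar\theta_{n-1})$. The printed coefficient also remains a valid norm upper bound, since $\phi^{n-i+1}\le\phi^{n-i}$.
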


\begin{proof}
Unrolling \eqref{eq:ema_update_app} yields \eqref{eq:ema_closed_form_app} by induction.
For \eqref{eq:ema_lag_decomp_app}, subtract \eqref{eq:ema_closed_form_app} from $\theta_n$:
\[
\theta_n-\bar{\theta}_n
=
\theta_n-\phi^n\bar{\theta}_0-(1-\phi)\sum_{i=1}^{n}\phi^{n-i}\theta_i.
\]
Rewrite $\theta_n$ as $\phi^n\theta_0 + \sum_{i=1}^{n}\phi^{n-i}(\theta_i-\phi\theta_{i-1})$ and
collect terms to obtain the stated decomposition in $\Delta_i$ (details follow from telescoping).
\end{proof}

\paragraph{Interpretation.}
Eq.~\eqref{eq:ema_lag_decomp_app} shows the teacher \emph{lags} behind the student by a geometrically
weighted sum of recent student updates $\{\Delta_i\}$; hence $\bar{\theta}$ is a low-pass filtered version of $\theta$.

\begin{assumption}[Lipschitz mapping from parameters to exit CDF]
\label{ass:lipschitz_cdf_app}
For any $k\in\{1,\dots,K\}$ and any context $x_{<t}$, there exists $L_\Pi>0$ such that
\begin{equation}
\label{eq:cdf_lipschitz_app}
\big|\Pi_{\theta}(k)-\Pi_{\theta'}(k)\big|
\le L_\Pi \|\theta-\theta'\|.
\end{equation}
\end{assumption}

\begin{proposition}[Teacher reference step is stable under small drift]
\label{prop:tref_stability_app}
Fix $(x_{<t})$ and suppose for some $k^\star$ there is a margin $m>0$ such that
\begin{equation}
\label{eq:tref_margin_app}
\Pi_{\bar{\theta}_{n-1}}(k^\star-1) \le \tau-m
\quad\text{and}\quad
\Pi_{\bar{\theta}_{n-1}}(k^\star) \ge \tau+m.
\end{equation}
If $\|\bar{\theta}_n-\bar{\theta}_{n-1}\| \le \frac{m}{L_\Pi}$, then the teacher reference step computed
from $\bar{\theta}_n$ remains unchanged: $t_{\mathrm{ref}}(\bar{\theta}_n)=k^\star$.
\end{proposition}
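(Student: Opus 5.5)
The plan is to apply Assumption~\ref{ass:lipschitz_cdf_app} at the two indices $k^\star-1$ and $k^\star$, transfer the margin conditions \eqref{eq:tref_margin_app} from $\bar\theta_{n-1}$ to $\bar\theta_n$, and then read off the minimum in the definition of $t_{\mathrm{ref}}$. The key inputs are that $\Pi$ is a CDF, hence nondecreasing in $k$, and that $t_{\mathrm{ref}}(\bar\theta)=\min\{k:\Pi_{\bar\theta}(k)\ge\tau\}$.

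\emph{Step 1 (upper side).} With $\|\bar\theta_n-\bar\theta_{n-1}\|\le m/L_\Pi$, the Lipschitz bound gives
\[
\Pi_{\bar\theta_n}(k^\star)\ \ge\ \Pi_{\bar\theta_{n-1}}(k^\star)-L_\Pi\|\bar\theta_n-\bar\theta_{n-1}\|\ \ge\ \tau+m-m=\tau .
\]
So $k^\star$ belongs to the set defining $t_{\mathrm{ref}}(\bar\theta_n)$, and therefore $t_{\mathrm{ref}}(\bar\theta_n)\le k^\star$.

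\emph{Step 2 (lower side).} In the same way,
\[
\Pi_{\bar\theta_n}(k^\star-1)\ \le\ \tau-m+m=\tau .
\]
This is the main obstacle: the margin only yields $\Pi_{\bar\theta_n}(k^\star-1)\le\tau$, whereas excluding $k^\star-1$ from the set requires the strict inequality $<\tau$. I would handle this as follows. Under a strict drift bound $\|\bar\theta_n-\bar\theta_{n-1}\|<m/L_\Pi$, or equivalently by using the Lipschitz inequality with any slack, the inequality becomes strict. In the boundary case $\|\bar\theta_n-\bar\theta_{n-1}\|=m/L_\Pi$ with equality attained, I would note that the statement needs this strictness. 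The fix is to read the hypothesis with strict inequality, or to shrink the margin to $m'<m$. I would record this as a remark rather than alter the proposition's substance.

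\emph{Step 3 (all earlier indices).} Once $\Pi_{\bar\theta_n}(k^\star-1)<\tau$ holds, monotonicity of the CDF gives $\Pi_{\bar\theta_n}(j)\le\Pi_{\bar\theta_n}(k^\star-1)<\tau$ for every $j\le k^\star-1$. Hence no index below $k^\star$ qualifies, and $t_{\mathrm{ref}}(\bar\theta_n)\ge k^\star$. The case $k^\star=1$ is trivial with the convention $\Pi(0)=0<\tau$ for $\tau>0$. Combining this with Step 1 gives $t_{\mathrm{ref}}(\bar\theta_n)=k^\star$.

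Finally, I would tie the result back to Lemma~\ref{lem:ema_closed_form_app}. From $\bar\theta_n-\bar\theta_{n-1}=(1-\phi)(\theta_n-\bar\theta_{n-1})$, the drift hypothesis holds whenever the student–teacher lag is below $m/((1-\phi)L_\Pi)$. This explains why the EMA teacher with momentum $\phi=0.995$ yields a stable reference step.
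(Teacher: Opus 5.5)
Your argument is the paper's own: carry the two margin inequalities from $\bar\theta_{n-1}$ to $\bar\theta_n$ via Assumption~\ref{ass:lipschitz_cdf_app} at $k^\star-1$ and $k^\star$, then read off the minimum that defines $t_{\mathrm{ref}}$. Your Step~2 caveat is correct and exposes a slip the paper's proof passes over: it concludes $t_{\mathrm{ref}}(\bar\theta_n)=k^\star$ from $\Pi_{\bar\theta_n}(k^\star-1)\le\tau$ alone, which fails when equality is attained (e.g.\ $K=2$, $\tau=0.5$, $m=0.1$, a scalar parameter with $\Pi_\theta(2)\equiv 1$ and $\Pi_\theta(1)=0.4+L_\Pi\theta$ suitably clipped, $\bar\theta_{n-1}=0$, $\bar\theta_n=0.1/L_\Pi$, which gives $t_{\mathrm{ref}}(\bar\theta_n)=1\neq k^\star=2$), so the strict drift bound or strict margin you propose is genuinely required, and your Step~3 rightly makes explicit the CDF monotonicity that the paper uses tacitly.
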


\begin{proof}
By Assumption~\ref{ass:lipschitz_cdf_app}, for any $k$,
$|\Pi_{\bar{\theta}_n}(k)-\Pi_{\bar{\theta}_{n-1}}(k)| \le L_\Pi \|\bar{\theta}_n-\bar{\theta}_{n-1}\|\le m$.
Hence
\[
\Pi_{\bar{\theta}_n}(k^\star-1) \le \Pi_{\bar{\theta}_{n-1}}(k^\star-1)+m \le (\tau-m)+m=\tau,
\]
and
\[
\Pi_{\bar{\theta}_n}(k^\star) \ge \Pi_{\bar{\theta}_{n-1}}(k^\star)-m \ge (\tau+m)-m=\tau.
\]
Therefore the minimum $k$ such that $\Pi_{\bar{\theta}_n}(k)\ge\tau$ is still $k^\star$.
\end{proof}

\begin{proposition}[EMA yields $O(1-\phi)$ consecutive-teacher drift]
\label{prop:ema_small_drift_app}
For the EMA update \eqref{eq:ema_update_app},
\begin{equation}
\label{eq:ema_consecutive_drift_app}
\|\bar{\theta}_n-\bar{\theta}_{n-1}\|
=(1-\phi)\|\theta_n-\bar{\theta}_{n-1}\|.
\end{equation}
In particular, when $\theta_n$ and $\bar{\theta}_{n-1}$ are close (typical in late training),
the teacher drift is suppressed by the factor $(1-\phi)$.
\end{proposition}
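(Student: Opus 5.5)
The identity follows in one line from the EMA recursion \eqref{eq:ema_update_app}, so the plan is to subtract $\bar{\theta}_{n-1}$ from both sides and regroup. Concretely, writing $\bar{\theta}_n = \phi\bar{\theta}_{n-1} + (1-\phi)\theta_n$, we obtain
\begin{equation*}
\bar{\theta}_n-\bar{\theta}_{n-1} = (\phi-1)\bar{\theta}_{n-1} + (1-\phi)\theta_n = (1-\phi)\bigl(\theta_n-\bar{\theta}_{n-1}\bigr).
\end{equation*}
Taking norms and using absolute homogeneity, together with $1-\phi>0$ since $\phi\in[0,1)$, then gives \eqref{eq:ema_consecutive_drift_app} with equality. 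No appeal to Lemma~\ref{lem:ema_closed_form_app} is needed for the identity itself.

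For the qualitative ``in particular'' clause, I would make it quantitative by bounding $\|\theta_n-\bar{\theta}_{n-1}\|\le\|\theta_n-\theta_{n-1}\|+\|\theta_{n-1}-\bar{\theta}_{n-1}\|$. The second term is the lag at step $n-1$, which the exact decomposition \eqref{eq:ema_lag_decomp_app} expresses as $\phi^{n-1}(\theta_0-\bar{\theta}_0)+\sum_{i\le n-1}\phi^{n-1-i}\Delta_i$. Since the teacher and student share the same initialization, $\theta_0=\bar{\theta}_0$, so if the per-step updates satisfy $\|\Delta_i\|\le\eta$, this lag is at most $\eta/(1-\phi)$. Consequently $\|\theta_n-\bar{\theta}_{n-1}\|\le \eta+\eta/(1-\phi)$, and the teacher drift is at most $(1-\phi)\eta+\eta = O(\eta)$. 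When the student has stabilized, meaning $\|\theta_n-\bar{\theta}_{n-1}\|$ is small, the drift is additionally damped by the factor $(1-\phi)$.

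This bound plugs directly into Proposition~\ref{prop:tref_stability_app}: whenever $(1-\phi)\|\theta_n-\bar{\theta}_{n-1}\|\le m/L_\Pi$, the reference step $t_{\mathrm{ref}}$ is unchanged. The only point I expect to need care is the ``in particular'' clause. It is informal as stated, so I would present the identity as the proven claim and the lag bound as a supporting remark, rather than as part of the proposition itself.
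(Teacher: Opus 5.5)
Your proof of the identity is exactly the paper's: subtract $\bar{\theta}_{n-1}$ from \eqref{eq:ema_update_app} to get $\bar{\theta}_n-\bar{\theta}_{n-1}=(1-\phi)(\theta_n-\bar{\theta}_{n-1})$, then take norms. Your extra lag remark goes beyond the paper and its conclusion holds, although the decomposition \eqref{eq:ema_lag_decomp_app} you cite is off by a factor of $\phi$: unrolling $\theta_n-\bar{\theta}_n=\phi(\theta_{n-1}-\bar{\theta}_{n-1}+\Delta_n)$ gives coefficients $\phi^{n-i+1}$ (for $n=1$ the coefficient of $\Delta_1$ is $\phi$, not $1$), which only tightens your estimate, since with $\theta_0=\bar{\theta}_0$ one gets $\theta_n-\bar{\theta}_{n-1}=\sum_{i=1}^{n}\phi^{n-i}\Delta_i$ and hence a teacher drift of at most $\eta$ rather than your $(2-\phi)\eta$.
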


\begin{proof}
Subtract $\bar{\theta}_{n-1}$ from \eqref{eq:ema_update_app}:
$\bar{\theta}_n-\bar{\theta}_{n-1}=(1-\phi)(\theta_n-\bar{\theta}_{n-1})$. Taking norms yields \eqref{eq:ema_consecutive_drift_app}.
\end{proof}

\paragraph{Connection to LoopRPT.}
LoopRPT computes $t_{\mathrm{ref}}$ (Sec.~3.3) and $b_{\mathrm{ref}}=\ell_{\bar{\theta}}^{(t_{\mathrm{ref}})}$ (Eq.~(\ref{eq:ref_logprob}))
from the teacher. Proposition~\ref{prop:ema_small_drift_app} shows that EMA makes the teacher a slowly moving reference,
and Proposition~\ref{prop:tref_stability_app} further implies that whenever the teacher CDF crosses the exit threshold with a nontrivial margin,
the discrete reference step $t_{\mathrm{ref}}$ is invariant to small parameter drift. Together, these results formalize why EMA reduces
``target chasing'' when jointly optimizing the exit policy and intermediate representations.

\section{Full Training Algorithm}
\label{app:algorithm}

Alg.~\ref{alg:phase1_reward_table_default}--\ref{alg:phase2_joint_default} provides a description of our training iteration. The algorithm uses two phases: (i) a no-gradient phase that constructs a dense step-wise reward table for each selected token, and (ii) a gradient phase that jointly optimizes the exit policy via noisy latent rollouts and the backbone via step-weighted next-token learning.

\section{Online Hard-token Selection Details}
\label{app:hardtokens}

We perform entropy-based filtering on-the-fly and apply reinforcement-style losses only to the top-$\rho$ high-entropy tokens, following the high-entropy minority token selection strategy in \citet{wang20258020rulehighentropyminority}. Concretely, we compute token entropies using the teacher's \emph{final} latent step, which reflects the model's most refined uncertainty estimate for next-token prediction.

Let $\mathbf{h}^{(K)}_{\bar\theta}$ be the teacher hidden state at the last loop step. We compute logits and shift them for next-token alignment:
\begin{equation}
\mathbf{z}^{(K)}_{\bar\theta} = \mathrm{LMHead}(\mathbf{h}^{(K)}_{\bar\theta}) \in \mathbb{R}^{B\times S\times |\mathcal{V}|},
\qquad
\tilde{\mathbf{z}} = \mathbf{z}^{(K)}_{\bar\theta}(:,\,1:\!-1,\,:) \in \mathbb{R}^{B\times (S-1)\times |\mathcal{V}|}.
\end{equation}
The per-token entropy is then
\begin{equation}
H_{i,t} = -\sum_{v\in\mathcal{V}} p_{i,t}(v)\log p_{i,t}(v),
\quad
p_{i,t}=\mathrm{Softmax}(\tilde{\mathbf{z}}_{i,t,:}).
\end{equation}
We restrict selection to valid training positions using the shifted loss mask $\mathbf{m}\in\{0,1\}^{B\times(S-1)}$ (response tokens). For each example $i$, we compute a \emph{row-wise} quantile threshold over valid positions,
\begin{equation}
q_i = \mathrm{Quantile}\Bigl(\{H_{i,t}:\mathbf{m}_{i,t}=1\},\, 1-\rho\Bigr),
\end{equation}
and define the entropy mask
\begin{equation}
\mathbf{m}_{\textsc{rpt},i,t} = \mathbb{I}\bigl[H_{i,t} > q_i\bigr].
\end{equation}
Finally, we update the effective training mask by elementwise multiplication,
\begin{equation}
\mathbf{m} \leftarrow \mathbf{m}\odot \mathbf{m}_{\textsc{rpt}},
\end{equation}
so all subsequent losses are computed only on the selected high-entropy positions.

\section{K3-style KL surrogate}
\label{app:KL}
We regularize the student against the EMA teacher using the same stable surrogate.
For a given step $k$, let $\ell^{(k)}_{\theta}$ and $\ell^{(k)}_{\bar\theta}$ be the student/teacher log-probabilities of the gold next token.
Define the log-ratio
\begin{equation}
\label{eq:delta_logprob}
\Delta^{(k)} = \ell^{(k)}_{\bar\theta} - \ell^{(k)}_{\theta}.
\end{equation}
Optionally, we clamp $\Delta^{(k)}$ to a bounded range.
We then compute
\begin{equation}
\label{eq:k3}
u^{(k)}=\exp(\Delta^{(k)}),
\qquad
\mathrm{K3}^{(k)} = u^{(k)} - \Delta^{(k)} - 1,
\end{equation}
and clamp $\mathrm{K3}^{(k)}$ to stabilize optimization. Finally, we weight the per-step KL surrogate on the noisy rollout forward pass using the rollout masks $\mathbf{m}^G$, and average over masked tokens:
\begin{equation}
\label{eq:kl_loss}
\mathcal{L}_{KL} =
\frac{\sum \bigl(\sum_k \mathrm{K3}^{(k)}\bigr)\odot \mathbf{m}^G}{\sum \mathbf{m}^G+\epsilon}.
\end{equation}

\section{Details of Datasets}
\paragraph{MMLU~\cite{bench:mmlu}}
MMLU\footnote{\href{https://huggingface.co/datasets/cais/mmlu}{https://huggingface.co/datasets/cais/mmlu}} is a multi-domain multiple-choice benchmark covering 57 subjects, designed to test broad factual knowledge and general reasoning. 

\paragraph{MMLU-Pro~\cite{bench:mmlu_pro}}
MMLU-Pro\footnote{\href{https://huggingface.co/datasets/TIGER-Lab/MMLU-Pro}{https://huggingface.co/datasets/TIGER-Lab/MMLU-Pro}} is an updated variant of MMLU with more challenging questions and stricter evaluation protocols to reduce ambiguity and contamination effects. 

\paragraph{BBH~\cite{bench:bbh}}
BIG-Bench Hard (BBH)\footnote{\href{https://github.com/suzgunmirac/BIG-Bench-Hard}{https://github.com/suzgunmirac/BIG-Bench-Hard}} is a curated subset of BIG-Bench tasks emphasizing compositional and multi-step reasoning (e.g., logical deduction, symbolic manipulation).

\paragraph{ARC-Challenge (ARC-C)~\cite{bench:arc_challenge}}
ARC-C\footnote{\href{https://huggingface.co/datasets/allenai/ai2_arc}{https://huggingface.co/datasets/allenai/ai2\_arc}} is a grade-school science multiple-choice benchmark focusing on difficult questions that require reasoning beyond surface pattern matching. 

\paragraph{HellaSwag~\cite{bench:hellaswag}}
HellaSwag\footnote{\href{https://huggingface.co/datasets/Rowan/hellaswag}{https://huggingface.co/datasets/Rowan/hellaswag}} evaluates commonsense narrative completion by selecting the most plausible continuation among candidates. 

\paragraph{Winogrande~\cite{bench:winogrande}}
Winogrande\footnote{\href{https://huggingface.co/datasets/allenai/winogrande}{https://huggingface.co/datasets/allenai/winogrande}} is a large-scale pronoun resolution benchmark that tests commonsense reasoning and bias-sensitive coreference decisions. 

\paragraph{GSM8K~\cite{bench:gsm8k}}
GSM8K\footnote{\href{https://huggingface.co/datasets/openai/gsm8k}{https://huggingface.co/datasets/openai/gsm8k}} is a grade-school math word-problem benchmark that requires multi-step arithmetic reasoning. 

% \paragraph{MATH500~\cite{hendrycks2021measuringmathematicalproblemsolving}}
% MATH500~\footnote{\href{https://huggingface.co/datasets/HuggingFaceH4/MATH-500}{https://huggingface.co/datasets/HuggingFaceH4/MATH-500}} is a 500-problem subset drawn from the MATH competition dataset, used to assess higher-level mathematical problem solving. 

\paragraph{HumanEval~\cite{bench:humaneval}}
HumanEval\footnote{\href{https://huggingface.co/datasets/openai/openai_humaneval}{https://huggingface.co/datasets/openai/openai\_humaneval}} is a code-generation benchmark consisting of programming problems with unit-test based evaluation. 

\paragraph{HumanEval+~\cite{bench:evalplus}}
HumanEval+\footnote{\href{https://huggingface.co/datasets/evalplus/humanevalplus}{https://huggingface.co/datasets/evalplus/humanevalplus}} extends HumanEval with additional and harder tests to reduce overfitting to the original unit tests and better measure functional correctness. 

\paragraph{MBPP~\cite{bench:mbpp}}
MBPP (Mostly Basic Programming Problems)\footnote{\href{https://huggingface.co/datasets/google-research-datasets/mbpp}{https://huggingface.co/datasets/google-research-datasets/mbpp}} is a Python programming benchmark covering short, diverse coding tasks with reference tests. 

\paragraph{MBPP+~\cite{bench:evalplus}}
MBPP+\footnote{\href{https://huggingface.co/datasets/evalplus/mbppplus}{https://huggingface.co/datasets/evalplus/mbppplus}} strengthens MBPP by adding more comprehensive test suites, improving robustness of functional evaluation. 

\paragraph{Omni-Math~\cite{gao2024omnimathuniversalolympiadlevel}}
\textsc{Omni-Math}~\footnote{\href{https://huggingface.co/datasets/KbsdJames/Omni-MATH}{https://huggingface.co/datasets/KbsdJames/Omni-MATH}} is a benchmark for evaluating mathematical reasoning in large language models, composed of problems drawn from international mathematics competitions. The dataset emphasizes multi-step reasoning and symbolic manipulation.

\section{Additional Experimental Details}
\label{app:setup_details}

\subsection{Training Hyperparameters and Infrastructure}
We implement our training pipeline using Hugging Face~\citep{train:transformers} and Accelerate~\citep{train:accelerate}, and speed up training with Distributed Data Parallel, mixed precision, and gradient checkpointing. We train for 3 epochs with a maximum sequence length of 4096 and use the AdamW optimizer. We apply a cosine decay learning rate schedule with a warmup phase to stabilize training. For each training example, we independently sample 8 times from an isotropic Gaussian distribution, $\epsilon \sim \mathcal{N}(0,\sigma^2 I)$ with $\sigma=0.1$ (i.e., variance $0.01$), yielding 8 distinct \textsc{Ouro} latent trajectories.
All hyperparameters are summarized in Table~\ref{tab:hyperparameters}. All experiments are conducted on 8 NVIDIA A100 GPUs with 80GB memory. Each training run takes approximately 2 hours for Ouro-1.4B and around 4 hours for Ouro-2.6B.

\begin{table}[h]
\caption{Hyperparameters used for LoopRPT Training.}
\label{tab:hyperparameters}
\centering
\begin{tabular}{lc}
\toprule
\textbf{Params} & \textbf{Values} \\
\midrule
Gradient clip norm & $1.0$ \\
Teacher update momentum $\phi$ & $0.995$\\
Batch size & $8$ \\
Epoch number & $3$\\
Rollout number & $8$ \\
Learning rate & $5\times10^{-6}$ \\
AdamW  & $(0.9, 0.999)$ \\
Weight decay & $0.01$ \\
Warmup radio & $0.03$ \\
Max sequence length & $4096$ \\
Top-Entropy radio $\rho$ & $0.2$ \\
Time penalty base coefficient $\lambda_{\text{base}}$ & $0.02$ \\
Time penalty scale coefficient $\lambda_{\text{scale}}$ & $0.5$ \\
Policy-gradient loss coefficient $\alpha$ & $1.0$ \\
Backbone loss coefficient $\beta$ & $1.0$ \\
KL loss coefficient $\delta$  & $0.0001$ \\
Entropy loss coefficient $\gamma$ & $0.01$ \\
\bottomrule
\\
\end{tabular}

\end{table}
\begin{table}[ht] % 建议使用 ht 确保位置合理
\centering
\caption{Evaluation settings and frameworks of benchmarks.}
\label{tab:eval_setting}
\footnotesize % 使用标准小号字体代替强制缩放
\setlength{\tabcolsep}{8pt} % 根据需要调整列间距，确保表格填满或不超出单栏
\begin{tabular}{@{}lll@{}}
\toprule
\textbf{Benchmark} & \textbf{Settings} & \textbf{Framework} \\ \midrule
\rowcolor[HTML]{F5F5F5} \multicolumn{3}{l}{\textit{\textbf{General}}} \\
MMLU       & logprobs, 5-shot         & lm-eval-harness \\
MMLU-Pro   & strict match, 5-shot CoT & lm-eval-harness \\
BBH        & strict match, 3-shot CoT & lm-eval-harness \\
ARC-C      & logprobs, 25-shot        & lm-eval-harness \\
HellaSwag  & logprobs, 10-shot        & lm-eval-harness \\
Winogrande & logprobs, 5-shot         & lm-eval-harness \\ \midrule
\rowcolor[HTML]{F5F5F5} \multicolumn{3}{l}{\textit{\textbf{Math}}} \\
GSM8k      & strict match, 3-shot CoT & lm-eval-harness \\ \midrule
\rowcolor[HTML]{F5F5F5} \multicolumn{3}{l}{\textit{\textbf{Code}}} \\
HumanEval  & pass@1                  & evalplus        \\
HumanEval+ & pass@1                  & evalplus        \\
MBPP       & pass@1                  & evalplus        \\
MBPP+      & pass@1                  & evalplus        \\ \bottomrule
\end{tabular}
\end{table}
\subsection{Detailed Evaluation Settings}

We conducted a comprehensive evaluation of the LoopRPT trained Ouro model, focusing on its performance improvements over the original model in general knowledge, reasoning, mathematics, science, coding, and multilingual capabilities. We also compared it with open-source base models of similar scale to Ouro, including the Qwen2.5~\citep{model:qwen2}, Qwen3~\citep{model:qwen3}, Gemma3~\citep{model:gemma3}, LLaMA3.1~\citep{model:llama3}, and LLaMA3.2~\citep{model:llama3} series. Following Ouro~\citep{zhu2025scaling}, all evaluations were conducted using lm-eval-harness~\citep{bench:eval-harness} and evalplus~\citep{bench:evalplus}. Detailed evaluation settings and metrics are provided in Table~\ref{tab:eval_setting}.

\section{Training Dynamics Visualization}
\label{app:train_curves}

\begin{figure*}[t]
    \centering
    \begin{subfigure}[b]{0.46\textwidth}  
        \centering
        \includegraphics[width=\textwidth]{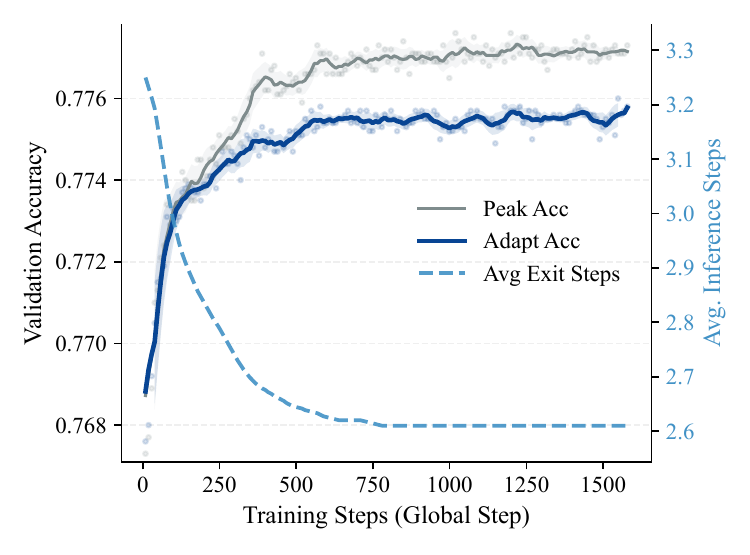}
        \caption{\textbf{Validation accuracy--compute trade-off.} We report validation next-token accuracy under maximum-loop inference (Peak Acc) and adaptive early exit (Adapt Acc), together with the average exit steps (right y-axis). Faint dots indicate raw evaluation points and shaded bands reflect locally smoothed variability.}
        \label{fig:acc_curve_14b}
    \end{subfigure}
    \hfill
    \begin{subfigure}[b]{0.46\textwidth}  
        \centering
        \includegraphics[width=\textwidth]{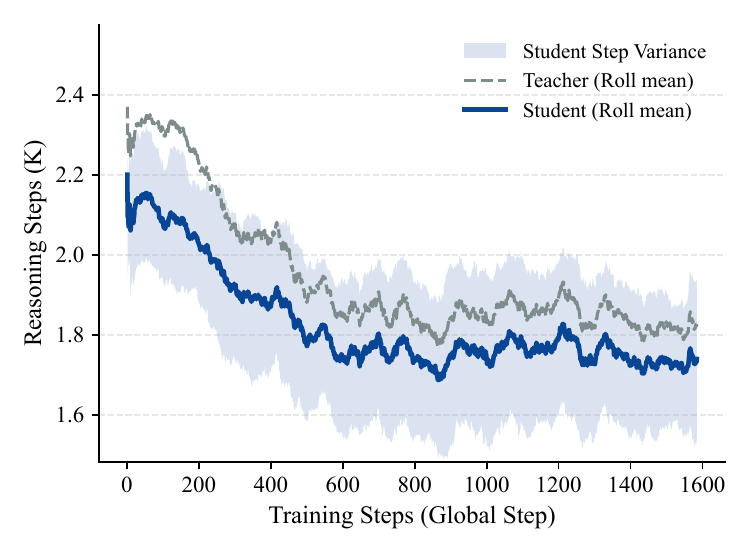}
        \caption{\textbf{Evolution of early-exit behavior during training.} We plot the rolling-mean reasoning steps of the student (solid) and the EMA teacher reference (dashed); the shaded region shows the student’s within-window variability. The student progressively aligns with the reference while converging to fewer steps, indicating improved early-exit calibration.}
        \label{fig:step_curve_14b}
    \end{subfigure}
    
    %\vspace{3pt}  % 添加一些垂直间距
    
    \caption{Training dynamics of LoopRPT on Ouro-1.4B.}
    \label{fig:train_curves_1p4b}
\end{figure*}

\begin{figure*}[t]
    \centering
    \begin{subfigure}[b]{0.46\textwidth}  
        \centering
        \includegraphics[width=\textwidth]{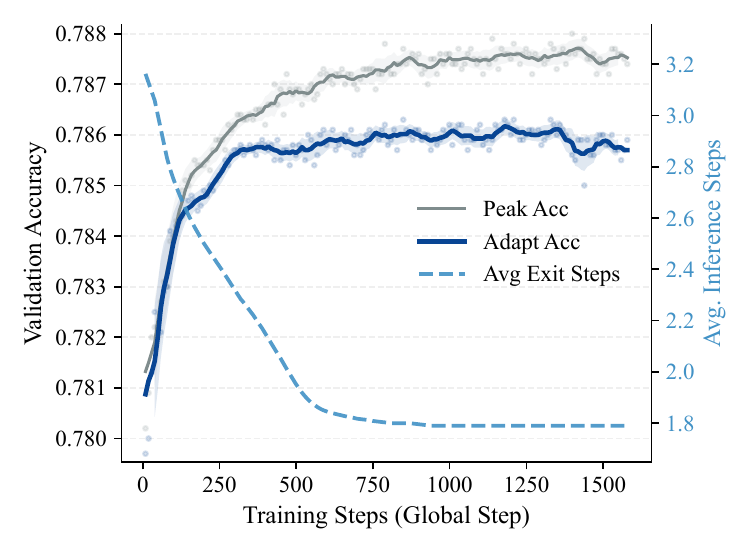}
        \caption{\textbf{Validation accuracy--compute trade-off.} We report validation next-token accuracy under maximum-loop inference (Peak Acc) and adaptive early exit (Adapt Acc), together with the average exit steps (right y-axis). Faint dots indicate raw evaluation points and shaded bands reflect locally smoothed variability.}
        \label{fig:acc_curve_26b}
    \end{subfigure}
    \hfill
    \begin{subfigure}[b]{0.46\textwidth}  
        \centering
        \includegraphics[width=\textwidth]{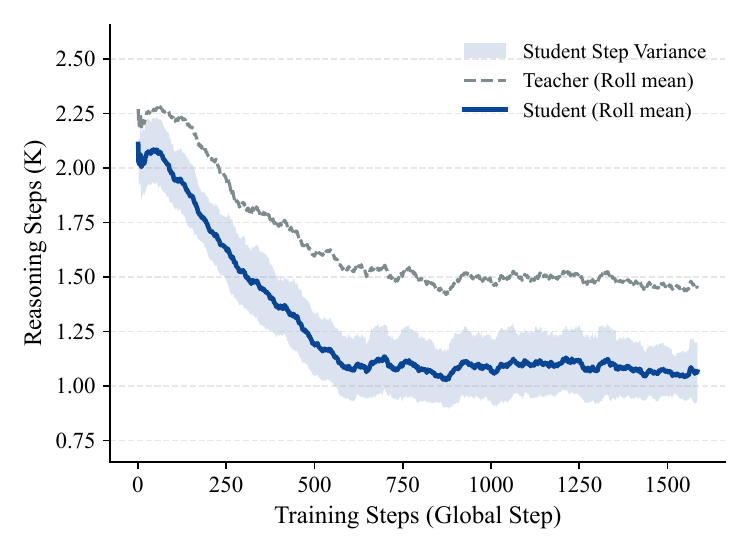}
        \caption{\textbf{Evolution of early-exit behavior during training.} We plot the rolling-mean reasoning steps of the student (solid) and the EMA teacher reference (dashed); the shaded region shows the student’s within-window variability. The student progressively aligns with the reference while converging to fewer steps, indicating improved early-exit calibration.}
        \label{fig:step_curve_26b}
    \end{subfigure}
    
    %\vspace{3pt}  % 添加一些垂直间距
    
    \caption{Training dynamics of LoopRPT on Ouro-2.6B.}
    \label{fig:train_curves_2p6b}
\end{figure*}

Figure~\ref{fig:train_curves_1p4b} and~\ref{fig:train_curves_2p6b} visualizes the training dynamics of LoopRPT on Ouro. Subfigure (a) tracks validation next-token accuracy under maximum-loop inference (Peak) and adaptive early exit (Adap.), together with the average exit steps. Subfigure (b) shows the evolution of the student’s exit behavior relative to the EMA teacher reference: we report rolling-mean reasoning steps and the student’s within-window variability. Overall, the curves indicate that LoopRPT improves validation performance while progressively reducing the required reasoning steps, consistent with better early-exit calibration.

\section{Case Studies}
\label{app:case_analysis}

To further investigate the underlying reasoning mechanisms of LoopRPT compared to the base Ouro model, we conduct a detailed qualitative analysis across three distinct domains: general reasoning, mathematical problem-solving, and code generation. As illustrated in Fig.~\ref{fig:full_case_study}, Fig.~\ref{fig:gsm8k_case_study}, and Fig.~\ref{fig:coding_case_study}, these cases highlight the robust error-correction and multi-step dependency tracking capabilities of our model.

\paragraph{General reasoning \& factual consistency.} 
In complex scenarios requiring interdisciplinary knowledge (e.g., thermodynamics and biology), the base model frequently exhibits \textit{factual hallucinations} or \textit{logical fragility}. For instance, in Case 2 (Biology), the base model incorrectly categorizes birds as ectotherms despite mentioning their endothermic nature later in the trace. In contrast, LoopRPT maintains high factual consistency throughout the reasoning chain. In Case 3 (Discrete Logic), LoopRPT successfully identifies implicit logical equivalences between different Boolean expressions, whereas the base model fails to recognize the semantic overlap, leading to incomplete conclusions.

\paragraph{Mathematical logic \& constraint satisfaction.} 
The mathematical cases on GSM8K reveal that the base model is prone to \textit{variable mapping errors} and \textit{set-neglect}. In Case 1 (Scheduling), the base model confuses the count of inpatients with appointments, whereas LoopRPT meticulously tracks separate constraints. Notably, in Case 2 (Finance) and Case 3 (Fractional Calculation), the base model fails to maintain the global state of sets (e.g., neglecting the combined total of two agents). LoopRPT demonstrates a superior ability to iterate through multi-step arithmetic constraints, likely benefiting from the implicit loops that reinforce state tracking.

\paragraph{Algorithmic invariants in coding.} 
On coding benchmarks (MBPP and HumanEval), the divergence is primarily seen in \textit{boundary condition handling} and \textit{algorithmic interpretation}. The base model often produces "off-by-one" errors (Case 2) or fails to identify cyclic invariants in array operations (Case 3). LoopRPT consistently generates code that adheres to task-specific constraints and rigorous logic. This suggests that the latent reasoning loops in LoopRPT act as a form of implicit verification, allowing the model to simulate execution states more effectively than standard autoregressive generation.

% 请确保在导言区添加：\usepackage[most]{tcolorbox} 和 \usepackage{pifont}
\begin{figure*}[t]
\centering
\begin{tcolorbox}[
    colback=white, 
    colframe=gray!60, 
    arc=3pt, 
    title=Case Analysis on General Tasks (MMLU-Pro),
    fonttitle=\bfseries\sffamily,
    colbacktitle=gray!10, 
    coltitle=black,
    left=5pt, right=5pt, top=5pt, bottom=5pt
]

% --- CASE 1: Engineering ---
\footnotesize
\textbf{Case 1: Engineering (Thermodynamics)} \\
\textbf{Question:} Three kilograms of air are at an initial state of 100kPa, 300°K. The air is then compressed polytropically with $n = 2.56$, to a final pressure of 500kPa. Assuming constant specificheats, calculate the change in entropy using the three ideal gas equations. \\
\textbf{Options:} (A) 2.50 kJ/°K (B) 1.25 kJ/°K (C) 2.00 kJ/°K (D) 1.75 kJ/°K (E) 2.25 kJ/°K \textbf{(F) 1.57 kJ/°K} (G) 0.75 kJ/°K (H) 1.10 kJ/°K (I) 1.00 kJ/°K (J) 1.40 kJ/°K \\
\textbf{Golden Answer:} \textbf{F}

\vspace{2pt}
\begin{tabularx}{\textwidth}{@{}X|X@{}}
\toprule
\multicolumn{1}{c|}{\textbf{Base Model (Ouro-1.4B)}} & \multicolumn{1}{c}{\textbf{LoopRPT (Ours)}} \\ \midrule
\textit{Full Reasoning:} & \textit{Full Reasoning:} \\
The change in entropy for a polytropic process is given by $\Delta S = nC_p \ln(T_2/T_1) - nR \ln(P_2/P_1)$. For air, $C_p = 1.005$ kJ/kg·K and $R = 0.287$ kJ/kg·K. Using the polytropic relation $P_1V_1^n = P_2V_2^n$, we find $V_2/V_1 = (P_1/P_2)^{1/n} \approx 0.554$. The temperature ratio $T_2/T_1 = (P_2/P_1)^{(n-1)/n} = (500/100)^{(1.56/2.56)} \approx 1.25$. Substituting into the entropy formula, $\Delta S \approx 3 \times 1.005 \times \ln(1.25) - 3 \times 0.287 \times \ln(5) \approx$ \textbf{\textcolor{Maroon}{0.75 kJ/°K}}. & The change in entropy is given by $\Delta S = m C_p \ln \left( \frac{T_2}{T_1} \right) - m R \ln \left( \frac{P_2}{P_1} \right)$. For an ideal gas, $C_p = \frac{nR}{n-1}$ and $R$ is the gas constant. Using the polytropic relation $P_1 V_1^n = P_2 V_2^n$, we find $T_2 = T_1 \left( \frac{P_2}{P_1} \right)^{\frac{n-1}{n}}$. \textbf{\textcolor{OliveGreen}{Substituting the given values, $T_2 = 300 \cdot \left(\frac{500}{100}\right)^{\frac{2.56-1}{2.56}} = 300 \cdot 5^{0.6094} \approx 799.6 \text{ K}$}}. Then, $\Delta S = m \left[ c_p \ln\left(\frac{T_2}{T_1}\right) - R \ln\left(\frac{P_2}{P_1}\right) \right]$. \textbf{\textcolor{OliveGreen}{Calculating each term, $\Delta S = 3 \cdot \left[ 1.005 \ln\left(\frac{799.6}{300}\right) - 0.287 \ln(5) \right] \approx 1.57 \text{ kJ/K}$}}.  \\
\addlinespace[1pt]
\textbf{Conclusion:} \textcolor{Maroon}{The answer is (G) \ding{55}} & \textbf{Conclusion:} \textcolor{OliveGreen}{The answer is (F) \ding{51}} \\ \bottomrule
\end{tabularx}

\vspace{8pt}

% --- CASE 2: Biology ---
\textbf{Case 2: Biology (Thermoregulation)} \\
\textbf{Question:} Which of the following would maintain a constant body temperature in spite of changes in the environmental temperature: frog, robin, fish, dog, or lizard? \\
\textbf{Options:} (A) Robin, Lizard (B) Frog, Fish (C) Lizard, Frog (D) Fish, Dog (E) Fish, Lizard (F) Dog, Frog (G) Lizard, Fish (H) Frog, Robin (I) Dog, Lizard \textbf{(J) Robin, Dog} \\
\textbf{Golden Answer:} \textbf{J}

\vspace{2pt}
\begin{tabularx}{\textwidth}{@{}X|X@{}}
\toprule
\multicolumn{1}{c|}{\textbf{Base Model (Ouro-1.4B)}} & \multicolumn{1}{c}{\textbf{LoopRPT (Ours)}} \\ \midrule
\textit{Full Reasoning:} & \textit{Full Reasoning:} \\
The correct answer is (A) Robin, Lizard. \textbf{\textcolor{Maroon}{Birds and reptiles are ectotherms}}, meaning they rely on external sources of heat to maintain their body temperature. Frogs and fish are also ectotherms, but they are not included in the correct answer. Dogs are endotherms, meaning they can regulate their body temperature internally. & The correct answer is (J). \textbf{\textcolor{OliveGreen}{Dogs and robins are endothermic}}, meaning they can maintain a constant body temperature in spite of changes in the environmental temperature. \textbf{\textcolor{OliveGreen}{Frogs, fish, and lizards are ectothermic}}, meaning they cannot maintain a constant body temperature in spite of changes in the environmental temperature. \\
\addlinespace[1pt]
\textbf{Conclusion:} \textcolor{Maroon}{The answer is (A) \ding{55}} & \textbf{Conclusion:} \textcolor{OliveGreen}{The answer is (J) \ding{51}} \\ \bottomrule
\end{tabularx}

\vspace{8pt}

% --- CASE 3: Computer Science ---
\textbf{Case 3: Computer Science (Discrete Logic)} \\
\textbf{Question:} Which of the following expressions evaluates to true if the person is old enough to drive ($\ge 16$) but not old enough to vote ($< 18$)? \\
I. (age $\ge$ 16) AND (age $\le$ 18) \quad II. (age $\ge$ 16) AND (NOT(age $\ge$ 18)) \quad III. (age $<$ 18) AND (NOT(age $<$ 16)) \\
\textbf{Options:} (A) II only (B) I and II only (C) I and III only \textbf{(D) II and III only} \\
\textbf{Golden Answer:} \textbf{D}

\vspace{2pt}
\begin{tabularx}{\textwidth}{@{}X|X@{}}
\toprule
\multicolumn{1}{c|}{\textbf{Base Model (Ouro-1.4B)}} & \multicolumn{1}{c}{\textbf{LoopRPT (Ours)}} \\ \midrule
\textit{Full Reasoning:} & \textit{Full Reasoning:} \\
The correct answer is II only. The expression (age $\ge$ 16) AND (NOT(age $\ge$ 18)) evaluates to true if the person is at least 16 years old but not at least 18 years old... \textbf{\textcolor{Maroon}{The other expressions do not correctly represent the conditions}}. & The correct answer is (D). The first expression evaluates to true if the person is 16 or 17 years old, but it also evaluates to true if the person is 18 or older... The second expression evaluates to true if the person is 16 or 17 years old... \textbf{\textcolor{OliveGreen}{The third expression evaluates to true if the person is 16 or 17 years old, which is also what we want}}. \\
\addlinespace[1pt]
\textbf{Conclusion:} \textcolor{Maroon}{The answer is (A) \ding{55}} & \textbf{Conclusion:} \textcolor{OliveGreen}{The answer is (D) \ding{51}} \\ \bottomrule
\end{tabularx}

\end{tcolorbox}
\caption{\textbf{Comparative case study on general tasks (MMLU-Pro).} LoopRPT successfully identifies implicit logical equivalences (Case 3) and maintains factual consistency in biological classification (Case 2), whereas the base model exhibits knowledge hallucinations or incomplete reasoning.}
\label{fig:full_case_study}
\end{figure*}
% 请确保在导言区已添加：
% \usepackage[most]{tcolorbox}
% \usepackage{pifont}
% \usepackage{tabularx}
% \usepackage{booktabs}
% \usepackage{xcolor}

\begin{figure*}[t]
\centering
\begin{tcolorbox}[
    colback=white, 
    colframe=gray!60, 
    arc=3pt, 
    title=Case Analysis on Mathematical Tasks (GSM8K),
    fonttitle=\bfseries\sffamily,
    colbacktitle=gray!10, 
    coltitle=black,
    left=5pt, right=5pt, top=5pt, bottom=5pt
]

% --- CASE 1: Doctor Jones ---
\footnotesize
\textbf{Case 1: Scheduling and Time Management}  \\
\textbf{Question:} Doctor Jones is spending nine hours at the clinic. He has to do rounds to check on inpatients (20 min/inpatient) and has ten appointments (30 min each). How many hours will he have left if he has 9 inpatients?  \\
\textbf{Golden Answer:} \textbf{1} 

\vspace{2pt}
\begin{tabularx}{\textwidth}{@{}X|X@{}}
\toprule
\multicolumn{1}{c|}{\textbf{Base Model (Ouro-1.4B)}} & \multicolumn{1}{c}{\textbf{LoopRPT (Ours)}} \\ \midrule
\textit{Full Reasoning:} & \textit{Full Reasoning:} \\
Doctor Jones is spending 9 hours at the clinic. He has to do rounds to check on inpatients, which takes 20 minutes per inpatient. He has 10 appointments, which take 30 minutes each. \textbf{\textcolor{Maroon}{So in total, he spends 20 * 10 + 30 * 10 = 200 + 300 = 500 minutes}}.  Converting minutes to hours, 500 / 60 = 8.33 hours. So he has 9 - 8.33 = 0.67 hours left.  & Doctor Jones is spending 9 hours at the clinic. \textbf{\textcolor{OliveGreen}{He has 9 inpatients at the clinic. So he spends 20 * 9 = 180 minutes doing rounds}}.  He has 10 appointments, which take 30 minutes each. So he spends 30 * 10 = 300 minutes on appointments. \textbf{\textcolor{OliveGreen}{In total, he spends 180 + 300 = 480 minutes}}.  9 hours is 540 minutes. So he has 540 - 480 = 60 minutes left (1 hour).  \\
\addlinespace[1pt]
\textbf{Conclusion:} \textcolor{Maroon}{The answer is 0.67 \ding{55}}  & \textbf{Conclusion:} \textcolor{OliveGreen}{The answer is 1 \ding{51}}  \\ \bottomrule
\end{tabularx}

\vspace{8pt}

% --- CASE 2: Maggie & Riza ---
\textbf{Case 2: Multi-Agent Finance}  \\
\textbf{Question:} Maggie spent a quarter of her money, while Riza spent one-third of her money. They each had \$60. How much money do the two of them have left?  \\
\textbf{Golden Answer:} \textbf{85} 

\vspace{2pt}
\begin{tabularx}{\textwidth}{@{}X|X@{}}
\toprule
\multicolumn{1}{c|}{\textbf{Base Model (Ouro-1.4B)}} & \multicolumn{1}{c}{\textbf{LoopRPT (Ours)}} \\ \midrule
\textit{Full Reasoning:} & \textit{Full Reasoning:} \\
Maggie spent 1/4 * 60 = 15. Riza spent 1/3 * 60 = 20. So in total they spent 15 + 20 = 35. \textbf{\textcolor{Maroon}{So they had 60 - 35 = 25}}.  & Maggie spent 1/4 * 60 = 15. Riza spent 1/3 * 60 = 20. Together, they spent 15 + 20 = 35. \textbf{\textcolor{OliveGreen}{They had 60 + 60 = 120. So they have 120 - 35 = 85 left}}.  \\
\addlinespace[1pt]
\textbf{Conclusion:} \textcolor{Maroon}{The answer is 25 \ding{55}}  & \textbf{Conclusion:} \textcolor{OliveGreen}{The answer is 85 \ding{51}}  \\ \bottomrule
\end{tabularx}

\vspace{8pt}

% --- CASE 3: John's Learning Time ---
\textbf{Case 3: Fractional Multi-step Calculation}  \\
\textbf{Question:} John has 8 classes (40 min each) for 5 days. He spends 1/16 of his weekly minutes each on Saturday and Sunday as extra time. How many hours a week does he spend learning?  \\
\textbf{Golden Answer:} \textbf{30} 

\vspace{2pt}
\begin{tabularx}{\textwidth}{@{}X|X@{}}
\toprule
\multicolumn{1}{c|}{\textbf{Base Model (Ouro-1.4B)}} & \multicolumn{1}{c}{\textbf{LoopRPT (Ours)}} \\ \midrule
\textit{Full Reasoning:} & \textit{Full Reasoning:} \\
Total class time is 8 * 40 * 5 = 1600 minutes.  He spends 1/16 of his weekly minutes on Saturday and Sunday. \textbf{\textcolor{Maroon}{So in total he spends 1600 * 1/16 = 100 minutes}}.  So in total he spends 1600 + 100 = 1700 minutes (28.33 hours).  & Total class time is 8 * 40 * 5 = 1600 minutes.  \textbf{\textcolor{OliveGreen}{He then spends 1600 / 16 = 100 minutes each on Saturday and Sunday. So he spends 1600 + 100 + 100 = 1800 minutes in total}}.  1800 minutes is 30 hours.  \\
\addlinespace[1pt]
\textbf{Conclusion:} \textcolor{Maroon}{The answer is 28.33 \ding{55}}  & \textbf{Conclusion:} \textcolor{OliveGreen}{The answer is 30 \ding{51}}  \\ \bottomrule
\end{tabularx}

\end{tcolorbox}
\caption{\textbf{Comparison of GSM8K mathematical reasoning}. Base model (Ouro-1.4B) frequently makes "set-neglect" errors (Case 2, 3) or hallucinations in variable mapping (Case 1), whereas LoopRPT correctly iterates through multi-step logic constraints.}
\label{fig:gsm8k_case_study}
\end{figure*}
% 请确保导言区包含：\usepackage[most]{tcolorbox} \usepackage{pifont} \usepackage{tabularx} \usepackage{booktabs}

\begin{figure*}[t]
\centering
\begin{tcolorbox}[
    colback=white, 
    colframe=gray!60, 
    arc=3pt, 
    title=Case Analysis on Coding Tasks (MBPP \& HumanEval),
    fonttitle=\bfseries\sffamily,
    colbacktitle=gray!10, 
    coltitle=black,
    left=5pt, right=5pt, top=5pt, bottom=5pt
]

\footnotesize
\setlength{\tabcolsep}{4pt}
\begin{tabularx}{\textwidth}{@{}X|X@{}}

% --- CASE 1 ---
\multicolumn{2}{l}{\textbf{Case 1: MBPP (Mathematical Sequence)}}  \\
\multicolumn{2}{l}{\textbf{Question:} Write a function to find the nth octagonal number. (Assertion: \texttt{is\_octagonal(5) == 65})}  \\
\multicolumn{2}{l}{\textbf{Golden Answer:} \texttt{return 3 * n * n - 2 * n}}  \\ \addlinespace[2pt]
\toprule
\multicolumn{1}{c|}{\textbf{Base Model (Ouro-1.4B)}} & \multicolumn{1}{c}{\textbf{LoopRPT (Ours)}} \\ \midrule

\begin{tabular}[t]{@{}l@{}}
\textit{Generated Code:} \\
\texttt{def is\_octagonal(n):} \\
\texttt{\ \ """Returns True if n is octagonal..."""} \\
\texttt{\ \ return n == (n * (3 * n - 1)) // 2} 
\end{tabular} & 
\begin{tabular}[t]{@{}l@{}}
\textit{Generated Code:} \\
\texttt{def is\_octagonal(n):} \\
\texttt{\ \ """Returns the n-th octagonal number."""} \\
\texttt{\ \ return 3 * n * n - 2 * n} 
\end{tabular} \\ \addlinespace[4pt]

\textcolor{Maroon}{\textbf{Analysis:} Confuses "nth number" request with "check property" and hallucinates pentagonal formula.}  & 
\textcolor{OliveGreen}{\textbf{Analysis:} Correctly identifies the task and provides the exact octagonal sequence formula.}  \\ 
\textbf{Conclusion:} \textcolor{Maroon}{Functional Failure \ding{55}}  & \textbf{Conclusion:} \textcolor{OliveGreen}{Correct Generation \ding{51}}  \\ \midrule

% --- CASE 2 ---
\multicolumn{2}{l}{\textbf{Case 2: HumanEval (String Manipulation)}}  \\
\multicolumn{2}{l}{\textbf{Question:} Find the shortest palindrome that begins with a supplied string by reverse-prefixing.}  \\
\multicolumn{2}{l}{\textbf{Golden Answer:} Iteratively find longest palindromic suffix and append reverse prefix.}  \\ \addlinespace[2pt]

\toprule
\multicolumn{1}{c|}{\textbf{Base Model (Ouro-1.4B)}} & \multicolumn{1}{c}{\textbf{LoopRPT (Ours)}} \\ \midrule

\begin{tabular}[t]{@{}l@{}}
\textit{Key Implementation:} \\
\texttt{for i in range(len(string), 0, -1):} \\
\texttt{\ \ if is\_palindrome(string[-i:]):} \\
\texttt{\ \ \ \ break} \\
\texttt{return string + string[:i-1][::-1]} 
\end{tabular} & 
\begin{tabular}[t]{@{}l@{}}
\textit{Key Implementation:} \\
\texttt{for i in range(n):} \\
\texttt{\ \ if is\_palindrome(string[i:]):} \\
\texttt{\ \ \ \ return string + string[:i][::-1]} 
\end{tabular} \\ \addlinespace[4pt]

\textcolor{Maroon}{\textbf{Analysis:} Off-by-one error in indexing (\texttt{i-1}) leads to incorrect palindrome completion.}  & 
\textcolor{OliveGreen}{\textbf{Analysis:} Accurately iterates through split points to find the optimal palindromic suffix.}  \\
\textbf{Conclusion:} \textcolor{Maroon}{Logic Error \ding{55}}  & \textbf{Conclusion:} \textcolor{OliveGreen}{Correct Generation \ding{51}}  \\ \midrule

% --- CASE 3 ---
\multicolumn{2}{l}{\textbf{Case 3: HumanEval (Cyclic Array Order)}}  \\
\multicolumn{2}{l}{\textbf{Question:} Determine if an array can be sorted by performing any number of right shifts.}  \\ \addlinespace[2pt]
\toprule
\multicolumn{1}{c|}{\textbf{Base Model (Ouro-1.4B)}} & \multicolumn{1}{c}{\textbf{LoopRPT (Ours)}} \\ \midrule

\begin{tabular}[t]{@{}l@{}}
\textit{Key Implementation:} \\
\texttt{for i in range(n):} \\
\texttt{\ \ if arr[i] > arr[(i + 1) \% n]:} \\
\texttt{\ \ \ \ return False} \\
\texttt{return True} 
\end{tabular} & 
\begin{tabular}[t]{@{}l@{}}
\textit{Key Implementation:} \\
\texttt{sorted\_arr = sorted(arr)} \\
\texttt{for i in range(len(arr)):} \\
\texttt{\ \ if arr == sorted\_arr[i:] + sorted\_arr[:i]:} \\
\texttt{\ \ \ \ return True} 
\end{tabular} \\ \addlinespace[4pt]

\textcolor{Maroon}{\textbf{Analysis:} Incorrectly implements a strict sorted check, failing for valid cyclic shifts.}  & 
\textcolor{OliveGreen}{\textbf{Analysis:} Correctly uses a cyclic comparison loop to verify all possible shift states.}  \\
\textbf{Conclusion:} \textcolor{Maroon}{Logic Error \ding{55}}  & \textbf{Conclusion:} \textcolor{OliveGreen}{Correct Generation \ding{51}}  \\ \bottomrule

\end{tabularx}
\end{tcolorbox}
\caption{\textbf{Code generation comparison.} Base model (Ouro-1.4B) fails on boundary conditions and algorithmic interpretation, while LoopRPT demonstrates superior adherence to task constraints.}
\label{fig:coding_case_study}
\end{figure*}

\begin{algorithm}[t]
\caption{Phase I: Dense Step-wise Reward Table Construction with RPT Token Filtering}
\label{alg:phase1_reward_table_default}
\begin{algorithmic}[1]
\Require Batch $\{x_{0:S}\}$ with valid-token mask $m\in\{0,1\}^{B\times(S-1)}$ (aligned to next-token positions);
student $\theta$; EMA teacher $\bar{\theta}$;
max steps $K$; exit threshold $\tau$; top-ratio $\rho$;
$\lambda_{\mathrm{base}},\lambda_{\mathrm{scale}}$; constant $\varepsilon$.
\Ensure RPT mask $m_{\mathrm{RPT}}$; reward table $\{R_t(k)\}_{k=1}^K$; step advantage $\{A_{b,t}(k)\}_{k=1}^K$.

\State \textbf{Teacher forward (for hard-token selection and reference quantities)}
\State Run teacher LoopLM for $K$ steps to obtain last-step logits $\bar{z}_t^{(K)}$ and gate logits $\{\bar{a}_t^{(k)}\}_{k=1}^K$ at each next-token position $t$.
\State Compute last-step entropy $H_t^{\mathrm{last}} = -\sum_{v\in V}\bar{p}_t(v)\log \bar{p}_t(v)$ where $\bar{p}_t=\mathrm{Softmax}(\bar{z}_t^{(K)})$.

\State \textbf{RPT hard-token filtering (top-$\rho$ entropy on valid positions)}
\For{each example $i$ in the batch}
    \State $q_i \leftarrow \mathrm{Quantile}(\{H_{i,t}^{\mathrm{last}}: m_{i,t}=1\}, 1-\rho)$
    \State $m_{\mathrm{RPT},i,t}\leftarrow \mathbb{I}[H_{i,t}^{\mathrm{last}} > q_i]$
\EndFor
\State $m \leftarrow m \odot m_{\mathrm{RPT}}$

\State \textbf{Reference step $t_{\mathrm{ref}}$ and dynamic baseline $b_{\mathrm{ref}}$}
\State Compute teacher exit distribution $\pi_{\bar{\theta}}(k)$ via Eq.~(\ref{eq:exit_prob}--\ref{eq:s_recursion}) from $\{\bar{a}_t^{(k)}\}$ and its CDF $\Pi_{\bar{\theta}}(k)=\sum_{j\le k}\pi_{\bar{\theta}}(j)$.
\State $t_{\mathrm{ref}}\leftarrow \min\{k:\Pi_{\bar{\theta}}(k)\ge\tau\}$ \quad (use $K$ if never exceeded).
\State Compute teacher per-step gold log-prob $\ell_{\bar{\theta}}^{(k)}(t)=\log p_{\bar{\theta}}(x_t\mid x_{<t};\bar{h}_t^{(k)})$.
\State Set $b_{\mathrm{ref}}(t)\leftarrow \ell_{\bar{\theta}}^{(t_{\mathrm{ref}})}(t)$ \Comment{Eq.~\ref{eq:ref_logprob}}

\State \textbf{Difficulty-aware time penalty weight $\lambda_t$}
\State Compute teacher entropy at the reference step $H_t^{\mathrm{ref}}$ (from teacher distribution at $t_{\mathrm{ref}}$).
\State $d_t \leftarrow \mathrm{Clamp}\!\left(H_t^{\mathrm{ref}}/\log|V|,\,0,\,1\right)$,\quad
$\lambda_t\leftarrow \lambda_{\mathrm{base}}\left(1+\lambda_{\mathrm{scale}}(1-d_t)\right)$ \Comment{Eq.~\ref{eq:lambda_t}}

\State \textbf{Student deterministic pass for step-wise rewards}
\State Run student without noise to obtain $\ell_{\theta}^{(k)}(t)=\log p_{\theta}(x_t\mid x_{<t};h_t^{(k)})$ for $k=1,\dots,K$.
\For{$k=1,\dots,K$}
    \State $\Delta_{\mathrm{acc},t}(k)\leftarrow \ell_{\theta}^{(k)}(t)-b_{\mathrm{ref}}(t)$ \Comment{Eq.~\ref{eq:acc_delta}}
    \State $C_t(k)\leftarrow \lambda_t\,(k-t_{\mathrm{ref}})$ \Comment{Eq.~\ref{eq:time_penalty}}
    \State $R_t(k)\leftarrow \Delta_{\mathrm{acc},t}(k)-C_t(k)$ \Comment{Eq.~\ref{eq:step_reward}}
\EndFor

\State \textbf{Step advantage for representation learning}
\State $\mu_{R,t}\leftarrow \frac{1}{K}\sum_{j=1}^K R_t(j)$,\quad
$\sigma_{R,t}\leftarrow \sqrt{\frac{1}{K}\sum_{j=1}^K (R_t(j)-\mu_{R,t})^2}$
\For{$k=1,\dots,K$}
    \State $A_{b,t}(k)\leftarrow \frac{R_t(k)-\mu_{R,t}}{\sigma_{R,t}+\varepsilon}$ \Comment{Eq.~\ref{eq:advantage_nonoise}}
\EndFor

\State \Return $(m_{\mathrm{RPT}}, \{R_t(k)\}_{k=1}^K, \{A_{b,t}(k)\}_{k=1}^K)$ on positions where $m=1$.
\end{algorithmic}
\end{algorithm}

\begin{algorithm}[t]
\caption{Phase II: Noisy Rollout Policy Gradient + Step-Weighted Next-Token Training}
\label{alg:phase2_joint_default}
\begin{algorithmic}[1]
\Require Batch $\{x_{0:S}\}$ with effective mask $m$ (after Alg.~\ref{alg:phase1_reward_table_default});
student $\theta$, EMA teacher $\bar{\theta}$;
max steps $K$, threshold $\tau$;
reference quantities from Phase I: $t_{\mathrm{ref}}$, $b_{\mathrm{ref}}(t)$, $\lambda_t$, and $A_{b,t}(k)$;
rollouts $G$, noise scale $\sigma$;
loss weights $\alpha,\beta,\gamma,\delta$; constant $\varepsilon$.
\Ensure Updated $\theta$ and EMA teacher $\bar{\theta}$.

\State \textbf{Deterministic student pass (for $\pi_\theta$ and step-weighted NTP)}
\State Run student without noise to obtain gate logits $\{a_t^{(k)}\}_{k=1}^K$ and per-step gold log-probs $\ell_{\theta}^{(k)}(t)$.
\State Compute exit distribution $\pi_\theta(k)$ via Eq.~(\ref{eq:exit_prob}--\ref{eq:s_recursion}).
\For{$k=1,\dots,K$}
    \State $w_t(k)\leftarrow \pi_\theta(k)\Big(1+\mathrm{ReLU}(A_{b,t}(k))\Big)$ \Comment{Eq.~\ref{eq:bb_loss_weight}}
\EndFor
\State $L_{\mathrm{rep}}\leftarrow -\sum_{k=1}^K w_t(k)\,\ell_{\theta}^{(k)}(t)$ \Comment{Eq.~\ref{eq:backbone_loss}}
\Statex \hfill (Applied only on positions where $m=1$; sums/means are over those positions.)

\State \textbf{Entropy regularization on exit distribution}
\State $L_{\mathrm{ent}}\leftarrow -\mathbb{E}\Big[\sum_{k=1}^K \pi_\theta(k)\log \pi_\theta(k)\Big]$.

\State \textbf{Noisy latent rollouts with reward regain (on-policy reward recomputation)}
\For{$g=1,\dots,G$}
    \State Run a noisy recurrence (Eq.~\ref{eq:Gaussian_noise}) to obtain noisy states $\{h_{t,g}^{(k)}\}_{k=1}^K$ and rollout gates $\{a_{t,g}^{(k)}\}_{k=1}^K$.
    \State Compute rollout exit distribution $\pi_{\theta}^{(g)}(k)$ from $\{a_{t,g}^{(k)}\}$.
    \State Recompute per-step gold log-probs on rollout states:
    $\ell_{\theta,g}^{(k)}(t)=\log p_\theta(x_t\mid x_{<t};h_{t,g}^{(k)})$.
    \For{$k=1,\dots,K$}
        \State $\Delta_{\mathrm{acc},t}^{(g)}(k)\leftarrow \ell_{\theta,g}^{(k)}(t)-b_{\mathrm{ref}}(t)$
        \State $C_t(k)\leftarrow \lambda_t\,(k-t_{\mathrm{ref}})$
        \State $R_t^{(g)}(k)\leftarrow \Delta_{\mathrm{acc},t}^{(g)}(k)-C_t(k)$
    \EndFor
    \State Sample exit step $t^{(g)} \sim \pi_{\theta}^{(g)}(\cdot)$ and set rollout reward $r^{(g)}\leftarrow R_t^{(g)}(t^{(g)})$.
\EndFor

\State \textbf{Group-normalized rollout advantage and policy gradient}
\State $A^{(g)} \leftarrow \dfrac{r^{(g)}-\mathrm{mean}_g[r^{(g)}]}{\mathrm{std}_g[r^{(g)}]+\varepsilon}$ \Comment{Eq.~\ref{eq:group_advantage}}
\State $L_{\mathrm{PG}} \leftarrow -\mathbb{E}_g\!\left[A^{(g)}\,\log \pi_{\theta}^{(g)}\!\left(t^{(g)}\right)\right]$ \Comment{Eq.~\ref{eq:pg_loss}}

\State \textbf{Rollout KL regularization (K3 surrogate, rollout aggregation)}
\State Compute teacher per-step gold log-probs $\ell_{\bar{\theta}}^{(k)}(t)$ on the same rollout-evaluated tokens/steps.
\For{$k=1,\dots,K$}
    \State $\Delta^{(k)} \leftarrow \ell_{\bar{\theta}}^{(k)}(t)-\ell_{\theta,g}^{(k)}(t)$ \Comment{Eq.~\ref{eq:delta_logprob}}
    \State $u^{(k)} \leftarrow \exp(\Delta^{(k)})$, \quad $K3(k)\leftarrow u^{(k)}-\Delta^{(k)}-1$ \Comment{Eq.~\ref{eq:k3}}
\EndFor
\State Let $m_G$ be the rollout mask indicating the positions/steps evaluated in the rollout pass.
\State $L_{\mathrm{KL}} \leftarrow \dfrac{\sum_{k=1}^K K3(k)\odot m_G}{\sum m_G+\varepsilon}$ \Comment{Eq.~\ref{eq:kl_loss}}

\State \textbf{Total objective and updates}
\State $L \leftarrow \alpha L_{\mathrm{PG}} + \beta L_{\mathrm{rep}} + \gamma L_{\mathrm{ent}} + \delta L_{\mathrm{KL}}$ \Comment{Eq.~\ref{eq:total_loss}}
\State Update $\theta$ by one gradient step on $L$; update EMA teacher $\bar{\theta}\leftarrow \mathrm{EMA}(\bar{\theta},\theta)$.
\end{algorithmic}
\end{algorithm}

%\input{icml2026/tables/main_2.6b}
%%%%%%%%%%%%%%%%%%%%%%%%%%%%%%%%%%%%%%%%%%%%%%%%%%%%%%%%%%%%%%%%%%%%%%%%%%%%%%%
%%%%%%%%%%%%%%%%%%%%%%%%%%%%%%%%%%%%%%%%%%%%%%%%%%%%%%%%%%%%%%%%%%%%%%%%%%%%%%%

\end{document}